\newcommand{\clustrad}{r}
\def\figref#1{figure~\ref{#1}}
\def\eqref#1{equation~\ref{#1}}
\def\1{\bm{1}}
\def\ry{{\textnormal{y}}}
\def\rvx{{\mathbf{x}}}
\def\vc{{\bm{c}}}
\def\vh{{\bm{h}}}
\def\vu{{\bm{u}}}
\def\vv{{\bm{v}}}
\def\vw{{\bm{w}}}
\def\vx{{\bm{x}}}
\def\vy{{\bm{y}}}
\def\mW{{\bm{W}}}
\DeclareMathAlphabet{\mathsfit}{\encodingdefault}{\sfdefault}{m}{sl}
\SetMathAlphabet{\mathsfit}{bold}{\encodingdefault}{\sfdefault}{bx}{n}
\def\gN{{\mathcal{N}}}
\def\sI{{\mathbb{I}}}
\def\sN{{\mathbb{N}}}
\def\sP{{\mathbb{P}}}
\def\sS{{\mathbb{S}}}
\def\sU{{\mathbb{U}}}
\def\sV{{\mathbb{V}}}
\def\sW{{\mathbb{W}}}
\def\sX{{\mathbb{X}}}
\def\sY{{\mathbb{Y}}}
\newcommand{\R}{\mathbb{R}}
\DeclareMathOperator*{\argmin}{arg\,min}
\newcommand{\wvec}[1]{\vw^{(#1)}}
\newcommand{\uvec}[1]{\vu^{(#1)}}
\newcommand{\whvec}[1]{\hat{\vw}^{(#1)}}
\newcommand{\uhvec}[1]{\hat{\vu}^{(#1)}}
\newcommand{\vhvec}{\hat{\vv}}
\newcommand{\rpm}{\raisebox{.2ex}{$\scriptstyle\pm$}}
\newcommand\alonb[1]{\textcolor{blue}{[AB: #1]}}
\newcommand\amirg[1]{\textcolor{purple}{[AG: #1]}}
\newcommand\ignore[1]{}
\renewcommand{\eqref}[1]{Eq. (\ref{#1})}
\newcommand{\defref}[1]{Def. (\ref{#1})}
\newcommand{\thmref}[1]{Theorem. (\ref{#1})}
\newcommand{\corref}[1]{Corollary (\ref{#1})}
\newcommand{\lemref}[1]{Lemma. (\ref{#1})}
\renewcommand{\figref}[1]{Figure \ref{#1}}
\newcommand{\sgn}[1]{\mbox{sign}\left({#1}\right)}
\newtheorem{thm}{Theorem}[section]
\newtheorem{defn}{Definition}[section]
\newtheorem{lem}{Lemma}[section]
\newtheorem{prop}{Proposition}[section]
\newtheorem{cor}{Corollary}[section]
\newtheorem{assumption}{Assumption}[section]
\icmltitlerunning{Towards Understanding Learning in Neural Networks with Linear Teachers}
\begin{document}




\twocolumn[
\icmltitle{Towards Understanding Learning in Neural Networks with Linear Teachers}




\begin{icmlauthorlist}
\icmlauthor{Roei Sarussi}{tau}
\icmlauthor{Alon Brutzkus}{tau}
\icmlauthor{Amir Globerson}{tau}
\end{icmlauthorlist}

\icmlaffiliation{tau}{The Blavatnik School of Computer Science, Tel Aviv University}

\icmlcorrespondingauthor{Alon Brutzkus}{alonbrutzkus@mail.tau.ac.il}

\icmlkeywords{Machine Learning, ICML}

\vskip 0.3in
]

\printAffiliationsAndNotice{}




\begin{abstract}
    Can a neural network minimizing cross-entropy learn linearly separable data? Despite progress in the theory of deep learning, this question remains unsolved.
    Here we prove that SGD globally optimizes this learning problem for a two-layer network with Leaky ReLU activations. The learned network can in principle be very complex. However, empirical evidence suggests that it often turns out to be approximately linear. We provide theoretical support for this phenomenon by 
    proving that if network weights converge to two weight clusters, this will imply an approximately linear decision boundary.
    Finally, we show a condition on the optimization that leads to weight clustering. We provide empirical results that validate our theoretical analysis.
\end{abstract}

\section{Introduction}

Neural networks have achieved remarkable performance in many machine learning tasks \citep{krizhevsky2012imagenet, silver2016mastering, devlin2019bert}. Although their success has already transformed technology, a theoretical understanding of how this performance is achieved is not complete. Here we focus on one of the simplest learning settings that is still not understood. We consider linearly separable data (i.e., generated by a ``linear teacher'') that is being learned by a two layer neural net with leaky ReLU activations and minimization of cross entropy loss using gradient descent or its variants. Two key questions immediately come up in this context:
\begin{itemize}
    \item {\bf The Optimization Question}: Will the optimization succeed in finding a classifier with zero training error, and arbitrarily low training loss?
    \item {\bf The  Inductive Bias Question}: With a large number of hidden units, the network can find many solutions that will separate the data. Which of these will be found by gradient descent? 
\end{itemize}
Our work addresses these questions as follows.
\newline
{\bf The Optimization Question}: We prove that stochastic gradient descent (SGD) will converge to arbitrary low training loss. Concretely, we show that for any $\epsilon>0$, SGD will converge to $\epsilon$ cross-entropy loss in $O\left(\frac{1}{\epsilon^2}\right)$ iterations. We consider SGD which performs multiple passes over the data and we devise a novel variant of the perceptron proof to analyze this setting. Our analysis bounds the number of epochs that have high loss examples, and uses this to show convergence to a low loss solution.
Importantly, our result holds for any network size and scale of initialization. Therefore, our analysis goes beyond the Neural Tangent Kernel (NTK) analyses which require large network sizes and relatively large initialization scales.

 {\bf The Inductive Bias Question}: We empirically observe that when a small initialization scale is used, the learned network converges to a decision boundary that is very close to linear. See \figref{fig:intro small initialization landscape} for a 2D example.\footnote{See Section \ref{sec:cluster_empirical} and Section \ref{sec:empirical_nar} for more empirical examples.} We also observe that all neurons cluster nicely into two sets of vectors (i.e., they form two groups of well-aligned neurons) as in \figref{fig:intro small initialization neurons}. To support these empirical findings, we provide the following theoretical results: \\ 
 \textbf{(1)} We prove that an approximate clustering of the neurons implies that the decision boundary of the network is approximately linear. This is a result of a nice property of leaky ReLU networks which we prove in Section \ref{sec:quasi_linear}. \\
 \textbf{(2)} We provide a novel sufficient condition on the optimization path of gradient flow which implies convergence to clustered solutions. With the result above, it implies convergence to a linear decision boundary. The condition states that from a certain iteration on, all neurons with the same output sign ``agree'' on the classification of the data. We observe that this condition holds empirically for several synthetic and real datasets. Finally, we use the latter result to prove that under certain assumptions, the learned network is a solution to an SVM problem with a specific kernel.

    
    \comment{
    a large range of initialization scales, \alonb{This is confusing, because the reader may think that this also includes large initialization scales.} the network learns a decision boundary that is very close to {\em linear} (note that the network can model highly non-linear rules that separate the data (see Figure \ref{fig:intro fig}). Additionally, we observe that all neurons cluster nicely into two sets of vectors (i.e., they form two groups of well-aligned neurons).\amirg{add figure for this} \alonb{Say that even in certain cases (as in the figure) we see that the decision boundary is exactly linear.}
    
    \alonb{I think it would be good to add a figure where the decision boundary is approximately linear. E.g., in the case of a Gaussian distribution with outliers.}
    }


Our results above make significant headway in understanding why optimization is tractable with linear teachers, and why convergence is to approximately linear boundaries. We also provide empirical evaluation that confirms that weight clustering indeed explains why approximate linear decision boundaries are learned. 
\begin{figure*}[tb]
    \begin{subfigure}[b]{0.475\textwidth}
        \centering
        \includegraphics[width=0.5\textwidth,height=0.5\textwidth]{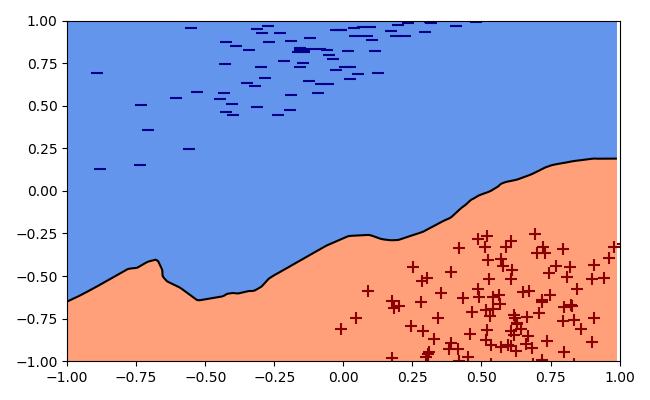}
        \caption[Network2]%
        {{\small Decision boundary (large initialization)}}    
        \label{fig:intro large initialization landscape}
    \end{subfigure}
    \hfill
    \begin{subfigure}[b]{0.475\textwidth}  
        \centering 
        \includegraphics[width=0.5\textwidth,height=0.5\textwidth]{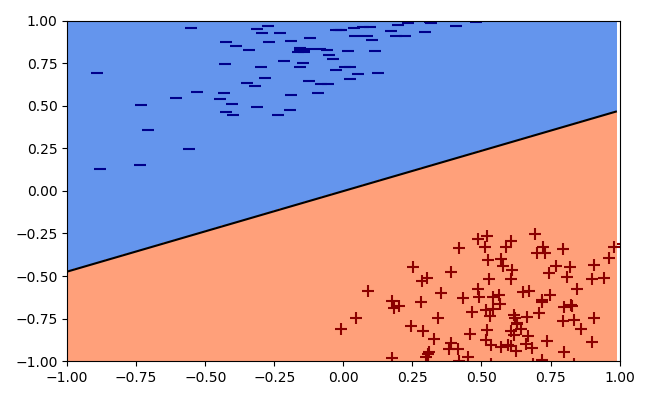}
        \caption[]%
        {{\small Decision boundary (small initialization)}}    
        \label{fig:intro small initialization landscape}
    \end{subfigure}
    \vskip\baselineskip
    \begin{subfigure}[b]{0.475\textwidth}   
            \centering 
            \includegraphics[width=0.5\textwidth,height=0.5\textwidth]{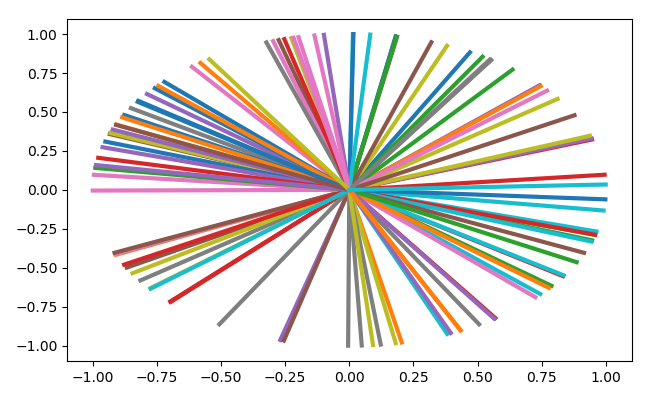}
            \caption[]%
            {{\small Learned Neurons (large initialization)}}    
            \label{fig:intro large initialization neurons}
        \end{subfigure}
        \hfill
        \begin{subfigure}[b]{0.475\textwidth}   
            \centering 
            \includegraphics[width=0.5\textwidth,height=0.5\textwidth]{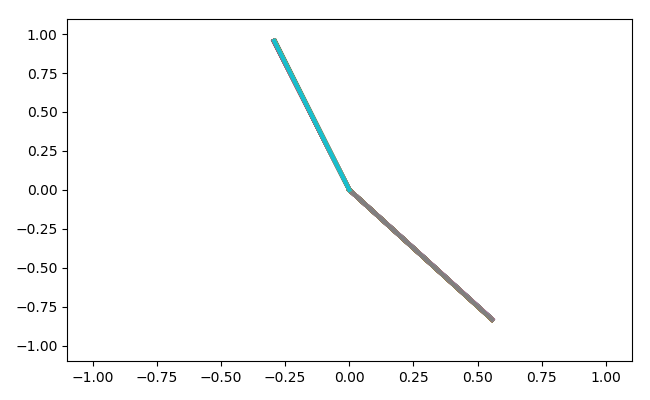}
            \caption[]%
            {{\small Learned Neurons (small initialization)}}    
            \label{fig:intro small initialization neurons}
        \end{subfigure}
    \caption[ ]
    {\small{Results for training a Leaky-ReLU network on linearly separable data, with different initialization scales. 
    Figures (a)+(b) show the resulting decision boundary. It can be seen that large initialization leads to a non-linear boundary, whereas small initialization leads to a linear boundary. Figures (c)+(d) show the learned weight vectors (normalized to unit norm). It can be seen that small initialization leads to two tight clusters of neurons whereas large initialization does not lead to clustering. The network has $100$ neurons, initialized from a Gaussian with standard deviation $0.001$ for small initialization and $30$ for large initialization.}}
    \label{fig:intro fig}
\end{figure*}


\comment{
In this work, we provide an in depth examination of the implicit bias gradient methods yield across all the domains of the optimization, in the case of optimizing a 1 hidden layer  Leaky ReLU network with logistic loss. More specifically, we prove that:
}

\section{Related Work}


Since training neural networks is NP-Hard for worst-case datasets \citep{blum1992training}, recent works have analyzed neural networks under certain data assumptions to better understand their performance in practice. One common assumption is to analyze neural networks when the data is linearly separable. Even in this case, the theoretical analysis of optimization and generalization is far from  resolved. In a work closely related to ours, \citet{brutzkus2018sgd} consider this setting and show that SGD converges to zero loss for linearly separable data (which was later extended to ReLU activations using noisy SGD in \citet{Wang_2019}). The key difference from our work is that they use the hinge loss instead of the cross entropy loss. The cross entropy loss creates unique challenges for proving convergence as we show in Section \ref{sec:risk_convergence}. Thus, their results cannot be directly applied for the cross entropy loss and we use novel techniques to guarantee convergence of SGD in this case. The second key difference from their work is that we present novel insights on the inductive bias of SGD using results of \citet{lyu2019gradient} and \citet{ji2020directional} which hold for the cross entropy loss and not for the hinge loss. Finally, our result which shows that a network with clustered neurons has an approximate linear decision boundary is new and holds irrespective of the loss used. 

Recently, \citet{lampert21_orthogonal} analyzed a subclass of linear teachers where data is ``orthogonally separable''. In this case they show that training a ReLU network with the cross entropy loss results in a solution where weights are aligned. In terms of our results, this can be viewed as a case of convergence to a particular PAR (see Section \ref{sec:exact_linear}).  Several other works assume that the data is linearly separable but also that the networks are \textit{linear}  \citep{ji2018gradient, moroshko2020implicit,gunasekar2018implicit}. We study the more challenging and realistic setting of two layer nonlinear networks with Leaky ReLU activations.

\citet{}

Several works \cite{lyu2019gradient, ji2020directional, nacson2019lexicographic} studied the inductive bias of two-layer homogeneous networks and showed connections between gradient methods and margin maximization. Their results hold under the assumption that gradient methods achieve a certain loss value. However, we provide a convergence proof for SGD that shows that it can obtain arbitrary low loss values. Furthermore, we use the results of \citet{lyu2019gradient} and \citet{ji2020directional} to obtain a more fine grained analysis of the inductive bias of gradient flow for linear teachers. Other works considered the inductive bias of infinite two-layer networks \citep{ChizatB20, chizat2018global, wei2019regularization, mei2018mean}. Our results hold for networks of any size. An inductive bias towards clustered solutions has been observed in \citet{brutzkus2019larger} and proved for a simple setup with nonlinear data.

Fully connected networks were also analyzed via the NTK approximation \citep{du2018gradient2, du2018gradient, arora2019fine, ji2019polylogarithmic, cao2019generalization, jacot2018neural, fiat2019decoupling, allen2019convergence, li2018learning, daniely2016toward}. 
However other  works \citep{yehudai2019power, daniely2020learning} have highlighted limitations of the NTK framework, suggesting that it does not accurately model neural networks as they are used in practice. Our convergence analysis in Section \ref{sec:risk_convergence} holds for any initialization scale and network size and therefore goes beyond the NTK analysis.

Recently, \citet{li2020learning} analyzed two-layer networks beyond NTK in the case of Gaussian inputs and squared loss. We assume linearly separable inputs and the cross entropy loss.  \citet{allen2019can} analyze a three layer ResNet and provide generalization guarantees for sufficiently wide networks in a regression setting. \citet{WoodworthGLMSGS20} study the inductive bias of gradient methods for a simplified nonlinear model. 


\section{Preliminaries}
\label{sec:formulation}

{\bf Notations:} We use $|| \cdot ||$ to denote the $L^{2}$ norm on vectors and Frobenius norm on matrices. For a vector $\vv$ we denote $\vhvec = \frac{\vv}{\left\|\vv\right\|}$.

{\bf Data Generating Distribution:} Define $\sX = \{\vx \in \R^{d} :  || \vx || \leq R_{x} \}$ and $\sY = \{ \rpm 1\}$.
We consider a  distribution of linearly separable points. Formally, let $D(\rvx,\ry)$ be a distribution over $\sX \times \sY$ such that there exists $\vw^{*} \in \R^{d}$ for which $\displaystyle \sP_{(\rvx,\ry) \sim D}[\ry \vw^{*} \cdot \vx \geq 1]=1$.
Let $\sS \coloneqq \{(\vx_1,y_1),....,(\vx_n,y_n)\} \subseteq \sX \times \sY$ be a training set sampled IID from $D(\rvx,\ry)$. Let $\sS_{+}\subset \sS$ denote the training points with positive labels and $\sS_{-}$ with negative labels. We denote $\vx \in \sS$  if there exists $y \in \{\pm 1\}$ such that $(\vx,y) \in \sS$.


{\bf Network Architecture:}
We consider a two-layer neural network with $2k >0 $ hidden units, where the second layer is fixed and the first layer is learned. Formally, we denote the parameters of the network that are learned by $\mW \in \R^{2k \times d}$ and for the second layer we define the \textit{fixed} vector $\vv \in \R^{2k}$, where $\displaystyle \vv = (\overbrace{v,v,...,v}^{k},\overbrace{-v,...,-v}^{k})$ and $v>0$. The network output is given by the function $N_{\mW} : \R^d \rightarrow \R$ defined as $N_{\mW}(\vx) = \vv \cdot \sigma\left(\mW\vx\right)$, where $\sigma(\vx) = \max\left\{\vx,\alpha \vx\right\}$ is the Leaky-ReLU activation function applied element-wise, parameterized by $0 < \alpha < 1$ and $\cdot$ denotes dot product.\footnote{We do not introduce a bias term, but all our results extend to using bias (see Supplementary for a formal justification).} 

It is easy to see that such a network is as expressive as a standard two-layer network where the second layer vector is not fixed \citep{brutzkus2018sgd}. Furthermore, the assumption that the second layer is fixed is common in previous works \citep[e.g., see][]{du2018gradient,brutzkus2018sgd,ji2019polylogarithmic}. We denote row $i$ of $\mW$ by $\wvec{i}$ and row $k+i$ by $\uvec{i}$ for $1 \le i \le k$. We say that $\wvec{i}$ are the $\vw$ neurons and $\uvec{i}$ are the $\vu$ neurons. Then the network is given by: 
\begin{equation}
    \label{eq:network}
     N_{\mW}(\vx) = v\sum_{i=1}^k{\sigma\left(\wvec{i}\cdot\vx\right)}-v\sum_{i=1}^k{\sigma\left(\uvec{i}\cdot\vx\right)}
\end{equation}

{\bf Training Loss:} Define the empirical loss over $\sS$ to be the cross-entropy loss:
\begin{equation}
\label{def:Mean Empirical Loss}
    L_{\sS}(\mW) \coloneqq \frac{1}{n} \sum_{i=1}^{n} \ell(y_iN_{\mW}(\vx_{i}))
\end{equation}
where $\ell\left(z\right) = \log\left(1+e^{-z}\right)$ is the binary cross entropy loss. 

\comment{
Since we use a positive homogeneous activation (Leaky ReLU) the network we consider with $2k$ hidden neurons is as expressive as networks with
k hidden neurons and any vector $v$ in the second layer as seen at \cite{brutzkus2017sgd}. Hence, we can fix the second layer without limiting the expressive power of the two-layer network. 
Although it is relatively simpler than the case where the second layer is not fixed, the effect of over-parameterization can be studied in this setting as well.
}

{\bf Optimization Algorithm:}
The training-loss mimimization optimization problem is to find:
\begin{equation}
    \label{Minimization Objective}
    \displaystyle \arg \underset{\mW \in \R^{2k \times d}}{min} L_{\sS}(\mW)
\end{equation}
We focus on two different gradient-based methods in different parts of the paper. First, we consider the case where $L_{\sS}(\mW)$ is minimized using SGD in epochs with a batch size of one and a learning rate  $\eta$. Data points are sampled without replacement at each epoch. 
Denote by $\displaystyle \mW_{t}$ the parameters after $t$ updates.

\comment{
then the update at iteration $t$ is given by
\begin{equation}
    \label{eq:SGD weights Update Rule}
    \displaystyle
    \mW_{t} = \mW_{t-1} -\eta \frac{\partial}{\partial \mW}L_{\{(\vx_{t},y_{t})\}}(\mW_{t-1})
\end{equation}
where $L_{\{(\vx_{t},y_{t})\}}(\mW_{t-1}) = \ell\left(N_{\mW_{t-1}}(\vx_{t}),y_{t}\right)$. \alonb{Say what is the gradient update at a point that is not  differentiable.}

\alonb{Define SGD in epochs with sampling without replacement, to fit the setting of Theorem \ref{thm:Risk Convergence}}

\alonb{Don't use newline}
}

Our main optimization result, described in Section \ref{sec:risk_convergence} is shown for SGD. When studying convergence to clustered solutions, we consider gradient flow, because there we can use recent strong results from \citet{lyu2019gradient} and \citet{ji2020directional}. Recall that gradient flow is the infinitesimal step limit of gradient descent where $\mW_t$ changes continuously in time and satisfies the differential inclusion $\frac{d\mW_{t}}{dt} \in -\partial^{\circ}L_{\sS}(\mW_{t})$. Here $\partial^{\circ}L_{\sS}(\mW_{t})$ stands for Clarke's sub-differential which is a generalization of the differential for non-differentiable functions. 

\comment{
Next, we switch to optimizing the objective using gradient flow. \alonb{Justify this switch. Say that we do the switch to use Lyu's and Telgarsky's results that hold for gradient flow.} We cite the definition in \cite{lyu2019gradient}: gradient flow can be seen as gradient descent with infinitesimal step size. In this model, $\mW$ changes continuously with time , along an arc\footnote{We say that a function $z:\sI \rightarrow \R^{d}$ on the interval $\sI$ is an arc if $z$ is absolutely continuous
for any compact sub-interval of I.} $\mW_{t} : [0,+\infty) \rightarrow \R^{2kd}, t \mapsto \mW_{t}$ that satisfies the differential inclusion $\frac{d\mW_{t}}{dt} \in -\partial^{\circ}L_{\sS}(\mW_{t})$. \alonb{Is the fact that $\theta_t$ an arc an assumption? Or does it follow from the definition of gradient flow? We should understand this.}
Where the Clarke's subdifferential $\partial^{\circ}L$ is a natural generalization of the usual differential to non-differentiable functions (see Definition \ref{def:clarke's subdifferential} for the exact definition).
}
\comment{
We will need the following notation. For $\displaystyle 1 \leq i \leq k$ we define $\vw_{t}^{(i)} \in \R^{d}$ to be the incoming weight vector of neuron $i$ at iteration $t$ and denote it as a $w$ type neuron. Similarly, for $\displaystyle 1 \leq i \leq k$ we define $\vu_{t}^{(i)} \in \R^{d}$ to be the incoming weight vector of neuron $k+i$ at iteration $t$ and denote it as a $u$ type neuron.
}
The importance of gradient flow is that it can be shown to maximize margin in the following sense. Define the network margin for a single data point $(\vx_{i},y_{i})$ by $q_{i}(\mW) \coloneqq y_{i}N_{\mW}(\vx_{i})$, and the normalized network margin as:
\begin{equation}
\label{def:normalized margin}
        \overline{\gamma}(\mW) \coloneqq \frac{1}{|| \mW ||}{\min\limits_{(\vx,y) \in \sS} y_{i}N_{\mW}(\vx)}
\end{equation}
where $|| \mW ||$ is the Frobenius norm of $\mW$.

The smoothed margin is defined as:\footnote{See Remark A.4. in \citet{lyu2019gradient}.}
\begin{equation}
\label{def:smoothed normalized margin}
    \displaystyle
    \tilde{\gamma}(\mW) \coloneqq \frac{1}{|| \mW ||}{\log\left(\frac{1}{\exp(nL_{\sS}(\mW))-1}\right)}
\end{equation}
From \citet{lyu2019gradient} and \citet{ji2020directional} it follows that gradient flow converges to KKT points of the network margin maximization problem (see Supplementary for details). Here we will use this result in Section \ref{sec:exact_linear} to characterize the linear decision boundaries of learned networks.

\section{Risk Convergence}
\label{sec:risk_convergence}
We next prove that for any $\varepsilon>0$, SGD converges to $\varepsilon$ empirical-loss (see \eqref{def:Mean Empirical Loss}) within $ O\left(\frac{n^4}{\varepsilon^{2}}\right)$ updates.

Let 
$\overrightarrow{\mW}_{t}= \left(\vw_{t}^{(1)}, \dots ,\vw_{t}^{(k)}, \vu_{t}^{(1)}, \dots ,\vu_{t}^{(k)}\right) \in \R^{2kd}$ be the vectorized version of $\mW_{t}$. We assume that the network is initialized such that the norms of all rows of $\mW_{0}$ are upper bounded by some constant $R_{0}>0$. Namely for all $1 \leq i \leq k$ it holds that $\displaystyle || \vw_{0}^{(i)} ||, || \vu_{0}^{(i)} || \leq R_{0}$.

\comment{(or $\R^{2k(d+1)}$ in the case of first layer bias is included) be the vectorized version of $\mW_{t}$.}

Define $M(n,\epsilon) = \frac{C n^4}{\varepsilon^{2}} $, where $C$ is a constant that depends polynomially on $R_{x},\alpha,R_{0},k,\eta,v$ and $|| \vw^{*} ||$.\footnote{In some cases the polynomial dependence is on the inverse of the parameter, e.g., $\frac{1}{\eta}$.} See the supplementary for the exact definition of $M(n,\epsilon)$.



The following theorem states that SGD will converge to $\epsilon$ loss within $M(n,\varepsilon)$ updates.
\begin{thm}
\label{thm:Risk Convergence}
For any $\varepsilon>0$, there exists an iteration $t \le M(n,\varepsilon)$ such that $L_{\sS}(\mW_{t}) <\varepsilon$.
\end{thm}


We note that the convergence analysis holds for any $\eta > 0$. This is in line with other analyses of learning linearly separable data, which show that convergence holds for any $\eta > 0$ \citep{brutzkus2018sgd}. We next briefly sketch the proof of Theorem \ref{thm:Risk Convergence}. The full proof is deferred to the supplementary.

Our proof is based on the proof for the hinge loss in \citet{brutzkus2018sgd} with several novel ideas that enable us to show convergence for the cross entropy loss.

For the hinge loss proof, \citet{brutzkus2018sgd} consider the vector $\overrightarrow{\mW}^* = (\overbrace{\vw^* \dots \vw^*}^k, \overbrace{-\vw^* \dots -\vw^*}^k) \in \R^{2kd}$ and define $F(\mW_{t}) = \overrightarrow{\mW}_{t}\cdot \overrightarrow{\mW}^{*}$ and  $G(\mW_{t}) = \left\|\overrightarrow{\mW}_{t}\right\|$. Using an online perceptron proof and the fact that $\frac{\left|F(\mW_t)\right|}{G(\mW_t) \left\|\overrightarrow{\mW}^*\right\|} \leq 1$, they obtain a bound on the number of points with non-zero loss that SGD samples, which provides the convergence guarantee. This proof is unique to the hinge loss setting, where points can have exactly zero loss. However, in the case of the cross entropy loss, every update has a non-zero loss. Therefore, the online proof for the hinge loss cannot be applied in this case. To overcome this we (1) use an ``epoch-based'' analysis that is tailored to the SGD variant we use here, that samples data without replacement in each epoch. (2) bound the number of epochs where there exists a point with loss at least $\epsilon$. By applying these key ideas with further technical analyses that are unique to the cross entropy loss, we prove Theorem \ref{thm:Risk Convergence}.

\comment{
The analysis is based upon the convergence proof at \cite{brutzkus2017sgd} Theorem 2 with several key modification due to the different loss function (cross entropy vs hinge loss).
We consider the vector $ \overrightarrow{\mW_{t}}$ and the vector $\displaystyle \overrightarrow{\mW}^{*} = (\overbrace{\vw^{*} \dots \vw^{*}}^{k},\overbrace{-\vw^{*} \dots -\vw^{*}}^{k}) \in \R^{2kd} $. where $\langle y_{t}\vx_{t},\vw^{*} \rangle \geq 1$ holds for the points in the training set with probability 1.
We define $ F(\mW_{t}) = \langle \overrightarrow{\mW}_{t}, \overrightarrow{\mW}^{*} \rangle$ and $\displaystyle G(\mW_{t}) = || \overrightarrow{\mW}_{t} ||$.

In the case of cross entropy the notation of mistake is not the same as in hinge loss, since every point has a non-zero update, therefor we can't hope to use a standard perceptron proof.

The approach we take is to define a mistake as a time point $t$ where the loss on that time point $\ell(N_{\mW_{t-1}}(\vx_{t},y_{t}))$ is greater than $\varepsilon$.

Next we assume that there exists at least one mistake (as defined above) at every epoch until some time $T=nN_{e}$ where $N_{e}$ is the number of epochs (an epoch based approach as opposed to the online setting of the standard perceptron proof).

Under that assumption we can yield a recursive rule for lower bounding $F(\mW_{T})$ in terms of $F(\mW_{T-n})$ (time points of finishing going over an epoch), any by recursive application of the inequality we show that $F(\mW_{T})$ is lower bounded by a linear function of $T$.
Then, we give an upper bound on $\displaystyle G(\mW_{t})$ in terms
of $\displaystyle G(\mW_{t-1})$ and by a recursive application of inequalities we show that $\displaystyle G(\mW_{t})$ is bounded from above by a square root of a linear function of $T$. Finally, we use the Cauchy-Schwartz inequality, $\displaystyle \frac{| F(\mW_{t}) |}{G(\mW_{t}) || \overrightarrow{\mW}^{*} ||} \leq1$, to show that the assumption of at least one mistake at every epoch can only hold for a finite time $M(n,\varepsilon)$, therefor $\displaystyle \forall t>M(n,\varepsilon)$ the loss on any point during any epoch must be smaller than $\varepsilon$ which means  accuracy on $L_{\sS}(\mW_{t})<\varepsilon$.
}
\section{Weight Clustering and Linear Separation}
\label{sec:quasi_linear}

As shown in \figref{fig:intro fig}, learning with SGD can result in a linear decision boundary, despite the existence of zero-loss solutions that are highly non-linear. In what follows, we provide theoeretical and empirical insights into why an approximately linear boundary is learned. 

We next show a nice property of Leaky-ReLU networks that can explain why they converge to linear decision boundaries. Assume that a learned network in \eqref{eq:network} is such that all of its $\vw$ neurons 
form a ball of ``small'' radius (i.e., they are well clustered) and likewise all the $\vu$ neurons (see \figref{fig:intro fig} and  \figref{fig:Clusterization fig} for simulations that show such a case). Then, as we show in Theorem \ref{thm:difference between clustered leaky ReLus}, this implies that the resulting decision boundary will be approximately linear. Later, we give further empirical and theoretical support that learned networks indeed have this clustering structure, and together with Theorem \ref{thm:difference between clustered leaky ReLus} this explains the approximate linearity.

Consider the network in \eqref{eq:network}. Denote $\overline{\vw} = \frac{1}{k}{\sum_{i=1}^k \wvec{i}}$ and $\overline{\vu} = \frac{1}{k}{\sum_{i=1}^k \uvec{i}}$.
Also, let $\clustrad$ denote the maximum radius of the positive and negative weights around their averages. Namely:
\begin{eqnarray*}
    \|\vw^{(i)}-\overline{\vw}\|_2 &\leq& \clustrad \ \ \ i=1,\ldots,k \\
    \|\vu^{(i)}-\overline{\vu}\|_2 &\leq& \clustrad \ \ \ i=1,\ldots,k
\end{eqnarray*}
The following result says that the decision boundary will be linear except for a region whose size is determined by $\clustrad$.
\begin{thm}
\label{thm:difference between clustered leaky ReLus}
Consider the linear classifier $f(\vx) = \sgn{(\overline{\vw} - \overline{\vu})\cdot \vx}$. Then $\sgn{N_{\mW}(\vx)}=f(\vx)$ for all $\vx$ such that $|(\overline{\vw}-\overline{\vu}) \cdot \vx | \geq 2\clustrad ||\vx||$. 
\end{thm}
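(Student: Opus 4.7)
The plan is to exploit leaky ReLU's monotonicity together with the tight coupling between the clustering radius $\clustrad$ and the margin $2\clustrad\|\vx\|$ in the hypothesis. The strategy is to first establish per-neuron sign alignment of $(\wvec{i} - \uvec{i}) \cdot \vx$, then invoke the one-sided slope bound of $\sigma$.

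By symmetry assume $(\overline{\vw} - \overline{\vu}) \cdot \vx \geq 2\clustrad \|\vx\|$; it suffices to show $N_{\vtheta}(\vx) \geq 0$, with strict inequality whenever $\vx \neq 0$. Write $\wvec{i} = \overline{\vw} + \Delta\wvec{i}$ and $\uvec{i} = \overline{\vu} + \Delta\uvec{i}$, where by hypothesis $\|\Delta\wvec{i}\|, \|\Delta\uvec{i}\| \leq \clustrad$. Then
\[
(\wvec{i} - \uvec{i}) \cdot \vx = (\overline{\vw} - \overline{\vu}) \cdot \vx + (\Delta\wvec{i} - \Delta\uvec{i}) \cdot \vx,
\]
and the triangle inequality plus Cauchy-Schwarz yield $|(\Delta\wvec{i} - \Delta\uvec{i}) \cdot \vx| \leq 2\clustrad\|\vx\|$. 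The hypothesis therefore forces $(\wvec{i} - \uvec{i}) \cdot \vx \geq 0$ for every single $i$ simultaneously. This is the heart of the argument: the factor $2$ in the statement is chosen precisely so that the cluster-mean term dominates the worst-case perturbation, uniformly over neurons.

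Next I would use the fact that $\sigma'(\cdot) \in \{\alpha,1\}$ a.e., so by the fundamental theorem of calculus $\sigma(a) - \sigma(b) \geq \alpha(a-b)$ whenever $a \geq b$. Applying this with $a = \wvec{i} \cdot \vx$ and $b = \uvec{i} \cdot \vx$ (legal by the previous step) and summing over $i$,
\[
N_{\vtheta}(\vx) = v \sum_{i=1}^{k}\bigl[\sigma(\wvec{i} \cdot \vx) - \sigma(\uvec{i} \cdot \vx)\bigr] \geq v\alpha \sum_{i=1}^{k} (\wvec{i} - \uvec{i}) \cdot \vx = v\alpha k\,(\overline{\vw} - \overline{\vu}) \cdot \vx \geq 0.
\]
Since $v,\alpha,k>0$, this is strictly positive whenever $(\overline{\vw}-\overline{\vu})\cdot\vx > 0$, giving $\sgn{N_{\vtheta}(\vx)} = +1 = f(\vx)$. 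The symmetric case $(\overline{\vw}-\overline{\vu}) \cdot \vx \leq -2\clustrad\|\vx\|$ is handled identically with all inequalities reversed.

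I do not foresee a serious obstacle here; the proof is essentially two short observations chained together. The only subtle point worth double-checking is the exact matching between the factor $2$ in the hypothesis and the fact that $\|\Delta\wvec{i} - \Delta\uvec{i}\| \leq 2\clustrad$ (not $\clustrad$); any weaker margin would not force per-neuron alignment and the monotonicity step would fail, so the constant in the statement is tight for this style of argument.
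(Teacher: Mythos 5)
Your proof is correct, and it takes a genuinely different and considerably shorter route than the paper's. You pair $\wvec{i}$ with $\uvec{i}$, observe that the margin hypothesis exactly absorbs the worst-case perturbation $|(\Delta\wvec{i}-\Delta\uvec{i})\cdot\vx|\le 2\clustrad\|\vx\|$ so that $(\wvec{i}-\uvec{i})\cdot\vx$ has the sign of $(\overline{\vw}-\overline{\vu})\cdot\vx$ for every $i$ simultaneously, and then apply the one-sided slope bound $\sigma(a)-\sigma(b)\ge\alpha(a-b)$ for $a\ge b$ (valid because the Leaky ReLU is piecewise linear with slopes in $\{\alpha,1\}$, both at least $\alpha$). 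Summing over $i$ yields the quantitative bound $N_{\vtheta}(\vx)\ge v\alpha k\,(\overline{\vw}-\overline{\vu})\cdot\vx$ in the positive case, strictly positive away from the degenerate boundary. The paper instead first reduces to the set where $|\overline{\vw}\cdot\vx|\ge\clustrad\|\vx\|$ or $|\overline{\vu}\cdot\vx|\ge\clustrad\|\vx\|$, proves a lemma that each such condition forces all neurons of the corresponding type to share a sign on $\vx$, and then runs a case analysis over the four regions $C_{+}^{+},C_{-}^{-},C_{+}^{-},C_{-}^{+}$ plus two further sub-cases in which only one of the two cone conditions holds, each with its own estimate. Your route buys brevity, a lower bound on $|N_{\vtheta}(\vx)|$ rather than just its sign, and a transparent account of both the factor $2$ and the role of the strictly positive minimum slope $\alpha$ (the same step fails for ReLU, where $\alpha=0$, consistent with the paper's remark that the result is false in that case); the paper's decomposition buys the common-sign lemma as a reusable structural fact. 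The only point to tidy is the degenerate situation $\clustrad=0$ with $(\overline{\vw}-\overline{\vu})\cdot\vx=0$: there your inequality gives only $N_{\vtheta}(\vx)\ge 0$, but the mirror-image argument gives $N_{\vtheta}(\vx)\le 0$ as well, so both signs are zero and the conclusion still holds.
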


The theorem has a simple intuitive implication. The smaller $\clustrad$ is, the closer the classifier is to linear. In particular when $\clustrad=0$ the classifier is exactly linear.

An alternative interpretation of the theorem comes from rewriting the condition as:
\begin{equation}
    \frac{|(\overline{\vw}-\overline{\vu}) \cdot \vx |}{\|\vx\|\|\overline{\vw}-\overline{\vu}\|}\geq \frac{\clustrad}{\|\overline{\vw}-\overline{\vu}\|}
\end{equation}
Namely that linearity holds whenever the absolute value of the cosine of the angle between $\vx$ and $\overline{\vw}-\overline{\vu}$ is greater than $\frac{\clustrad}{\|\overline{\vw}-\overline{\vu}\|}$. 

The proof is in the supplementary and is somewhat technical, but a brief outline is as follows. First we show that $\forall \vx \in \R^{d} \ \text{such that} \ |\overline{\vw}\cdot \vx|  \geq r||\vx||$ either we have $[\forall 1 \leq j \leq k \ \ \vw^{(j)} \cdot \vx > 0]$ or it holds that $[\forall 1 \leq j \leq k \ \ \vw^{(j)} \cdot \vx < 0]$ and similarly for the $\vu$ neurons. Using this, we show that $\forall \vx \in \R^{d} \ \text{such that} \ |\overline{\vw}\cdot \vx|  \geq r||\vx|| \land  |\overline{\vu}\cdot \vx|  \geq r||\vx||$ it holds that $\sgn{N_{\mW}(\vx)} = \sgn{(\overline{\vw}-\overline{\vu})\cdot \vx}$. We show this by dividing the input space to four regions based on the classification of the $\vw$ and $\vu$ neurons and using properties of Leaky ReLU. Then via an involved analysis, we proceed to prove that $\sgn{N_{\mW}(\vx)} = \sgn{(\overline{\vw}-\overline{\vu})\cdot \vx}$ in other regions of the set $\left\{\vx \in \R^d \mid |(\overline{\vw}-\overline{\vu}) \cdot \vx | \geq 2\clustrad ||\vx||\right\}$, which concludes the proof.
\comment{
First we notice that our condition $\vx \in \R^{d} \ \text{ s.t } \ |(\overline{\vw}-\overline{\vu}) \cdot \vx | \geq 2\clustrad ||\vx||$ leads to $\vx \in \R^{d} \ \text{ s.t } \ |\overline{\vw} \cdot \vx | \geq \clustrad ||\vx|| \lor |\overline{\vu} \cdot \vx | \geq \clustrad ||\vx||$.

Next we show that for every option in $\vx \in \R^{d} \ \text{ s.t } \ |\overline{\vw} \cdot \vx | \geq \clustrad ||\vx|| \lor |\overline{\vu} \cdot \vx | \geq \clustrad ||\vx||$ the decision boundary of the network corresponds to a linear one, i.e., 
$\sgn{N_{\mW}(\vx)} = \sgn{(\overline{\vw}-\overline{\vu})\cdot \vx}$. holds
}

We note that the proof strongly relies on two assumptions. The first is that the activation function is Leaky ReLU. The result is not true for ReLU networks (see supplementary for an example). The second is that the clusters correspond to the $\vw$ and $\vu$ sets of neurons.



\subsection{Experiments}
\label{sec:cluster_empirical}
\begin{figure*}[tbp]
    \centering
    \begin{subfigure}[b]{0.475\textwidth}
        \centering
        \includegraphics[scale=0.5]{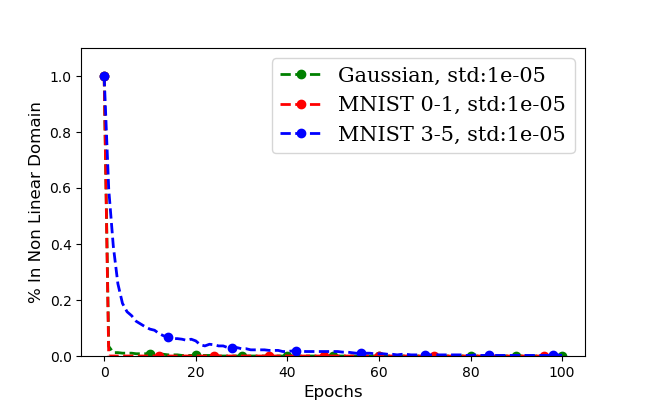}
        \caption[]%
        {}    
        \label{fig:Non Linear data points ratio - small initialization}
    \end{subfigure}
    \hfill
    \begin{subfigure}[b]{0.475\textwidth}  
        \centering 
        \includegraphics[scale=0.5]{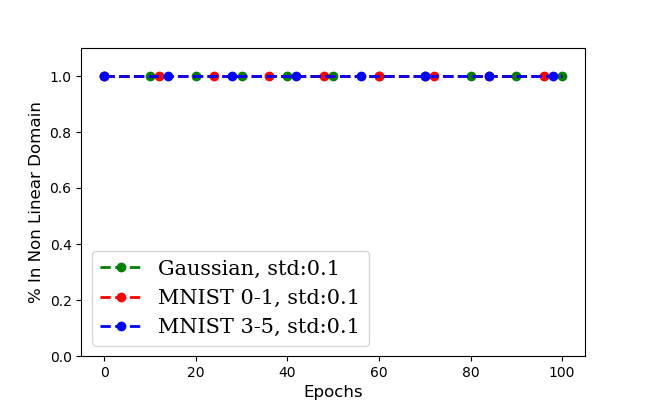}
        \caption[]%
        {}    
        \label{fig:Non Linear data points ratio - large initialization}
    \end{subfigure}
    \caption[ ]{\small{
    Empirical evaluation of the linear decision boundary prediction of Theorem \ref{thm:difference between clustered leaky ReLus}. A network is trained on Gaussian data and binary MNIST problems. At each epoch, the clustering level $r$ is calculated, and the corresponding linear decision region in Theorem \ref{thm:difference between clustered leaky ReLus}. Finally, all data points (train and test) are checked to see if they are in the linear region or not. The figure reports the fraction of points in the non-linear region. It can be seen that (a) for small initialization the fraction quickly decreases to zero whereas (b) for large initialization scale it does not.}}
    \label{fig:Clusterization fig}
\end{figure*}
Theorem \ref{thm:difference between clustered leaky ReLus} states that if neurons cluster, the resulting decision boundary will be approximately linear. But do neurons actually cluster in practice, and what is the resulting $\clustrad$? In \figref{fig:Clusterization fig} we show the value of $\clustrad$ during training. We use this $\clustrad$ to calculate the linear regime in Theorem \ref{thm:difference between clustered leaky ReLus} and the fraction of train and test points that fall outside this regime. It can be seen that for small initialization, this fraction converges to zero, implying that the learned classifiers are effectively linear over the data. Additional experiments in the supplementary provide support for the neurons being tightly clustered and $\clustrad$ being very small.

Theorem \ref{thm:difference between clustered leaky ReLus} shows that a well clustered network leads to a linear decision boundary. However, it does {\em not} imply that the network output itself is a linear function of the input. \figref{fig:ReLU vs Leaky ReLU output} provides a nice illustration of this fact. 


\comment{
In \figref{fig:Clusterization fig} we report results indicating that this is the case. 

examine the clusterization for various datasets.
The first figure \figref{fig:Non Linear data points ratio} we measure the percentage of the data points which violate our margin condition. We can see that at convergence the vast majority of data points are in the linear regime. In the second figure \figref{fig:Minimal angle with separator} we see the clusterization of the neurons, the ratio $\frac{\clustrad}{||\overline{\vw}-\overline{\vu}||}$ is approximately zero, and therefore the only directions which are not necessarily in the linear domain are only those which are nearly orthogonal to the separator $\overline{\vw}-\overline{\vu}$
}

\section{On Conditions for Convergence to Clustered Solutions}
\label{sec:exact_linear}

Figure \ref{fig:intro fig} suggests that  gradient methods converge to a network with a linear decision boundary when trained on linearly separable data. Understanding when this occurs is important, because a model with a linear decision boundary has good generalization guarantees.\footnote{For example, standard VC bounds imply $O(\sqrt{d/n})$ sample complexity in this case.}  

In the previous section we saw that clustering of neurons to two directions implies that the network has an approximate linear decision boundary.  Therefore, this reduces the problem of proving that the network has a linear decision boundary to proving that the network neurons are well clustered. It remains to show under which conditions gradient methods converge to clustered solutions.


Providing an end-to-end analysis which shows that gradient methods converge to clustered solutions is a major challenge. In this section we provide initial results for tackling this problem. In Section \ref{sec:linear_condition} we derive a novel condition on the optimization trajectory which implies that the network converges to a clustered solution and therefore to a linear decision boundary. In Section \ref{sec:perfect} we study a special case where a more fine-grained characterization of the linear decision boundary can be derived using a convex optimization program. Finally, we empirically validate our findings in Section \ref{sec:empirical_nar}.

To obtain the results in this section, we apply recent results of \citet{lyu2019gradient} and \cite{ji2020directional} and therefore make the same assumptions presented in these papers. Specifically, we assume that we run \textbf{gradient flow} (GF) as defined in Section \ref{sec:formulation}. We further assume that we are in the late phase of training:
\begin{assumption}
\label{assump:late_phase}
There exists $t_0$ such that $L_{\sS}(\mW_{t_0}) < \frac{1}{n}$.
\end{assumption}
We note that by the results in Section \ref{sec:risk_convergence}, SGD can attain the loss value in Assumption \ref{assump:late_phase}. However, in this section we need this assumption because we consider gradient flow and not SGD.

\subsection{A Sufficient Condition}
\label{sec:linear_condition}
We first observe that using Theorem \ref{thm:difference between clustered leaky ReLus} we can conclude that when the neurons are perfectly clustered around two directions (i.e., $\clustrad=0$), the decision boundary is linear. We formally define this below.
\begin{defn}
\label{def:aligned state}
    A network $N_{\mW}(\vx)$
    is \textit{perfectly clustered} if for all $1\leq i,j \leq k$ it holds that: $\vw^{(i)} =  \vw^{(j)}$
    and $\vu^{(i)} =  \vu^{(j)}$.
\end{defn}
By applying Theorem \ref{thm:difference between clustered leaky ReLus} with $\clustrad = 0$, we have:
\begin{cor}
\label{cor:perfectly_clustered_linear}
If a network $N_{\mW}$ is perfectly clustered, then its decision boundary is linear for all $\vx\in\R^d$.
\end{cor}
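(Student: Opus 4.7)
The plan is to invoke Theorem \ref{thm:difference between clustered leaky ReLus} in the limiting case $\clustrad = 0$. Under perfect clustering every $\wvec{i}$ equals $\overline{\vw}$ and every $\uvec{i}$ equals $\overline{\vu}$, so the theorem's hypothesis $\|\wvec{i}-\overline{\vw}\|\leq \clustrad$, $\|\uvec{i}-\overline{\vu}\|\leq \clustrad$ holds with $\clustrad = 0$. The theorem then asserts $\sgn{N_{\vtheta}(\vx)} = \sgn{(\overline{\vw}-\overline{\vu})\cdot \vx}$ on the set $\{\vx : |(\overline{\vw}-\overline{\vu}) \cdot \vx| \geq 2 \cdot 0 \cdot \|\vx\|\}$, but this set is trivially all of $\R^d$. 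Hence $N_{\vtheta}$ agrees everywhere with the linear classifier $f(\vx) = \sgn{(\overline{\vw}-\overline{\vu}) \cdot \vx}$, and its decision boundary is the hyperplane $\{\vx : (\overline{\vw}-\overline{\vu}) \cdot \vx = 0\}$.

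Since this is essentially a one-line specialization of the preceding theorem I do not anticipate a real obstacle; the only subtlety is the degenerate subcase $\overline{\vw} = \overline{\vu}$, in which $N_{\vtheta} \equiv 0$ and the claim is vacuous. If I preferred a self-contained argument that avoided quoting the earlier theorem, I would substitute the cluster equalities directly into \eqref{eq:network} to obtain $N_{\vtheta}(\vx) = vk\left[\sigma(\overline{\vw}\cdot \vx) - \sigma(\overline{\vu}\cdot \vx)\right]$ and perform a four-case analysis on the signs of $\overline{\vw}\cdot \vx$ and $\overline{\vu}\cdot \vx$. Using $\sigma(z)=z$ for $z \geq 0$ and $\sigma(z)=\alpha z$ for $z < 0$ with $\alpha \in (0,1)$, each of the four sign patterns yields $\sgn{N_{\vtheta}(\vx)} = \sgn{(\overline{\vw}-\overline{\vu})\cdot \vx}$: in the two same-sign cases the expression becomes a positive scalar ($1$ or $\alpha$) times $(\overline{\vw}-\overline{\vu}) \cdot \vx$, while in the two mixed-sign cases the two subtracted Leaky-ReLU terms have opposite signs and the forced sign of $N_{\vtheta}(\vx)$ matches that of $(\overline{\vw}-\overline{\vu}) \cdot \vx$. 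This direct route would also serve as an independent sanity check for the $r=0$ application of Theorem \ref{thm:difference between clustered leaky ReLus}.
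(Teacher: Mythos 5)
Your proposal is correct and matches the paper's proof, which likewise obtains the corollary as the $\clustrad=0$ specialization of the clustering result (the paper invokes Proposition \ref{thm:intersection between cones linearity}, the four-cone sign analysis at the heart of Theorem \ref{thm:difference between clustered leaky ReLus}, and your self-contained four-case argument is exactly that analysis with all $\wvec{i}=\overline{\vw}$ and $\uvec{i}=\overline{\vu}$). Your remark about the degenerate subcase $\overline{\vw}=\overline{\vu}$ is a reasonable extra precaution that the paper does not bother to flag.
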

For completeness we provide a proof in the supplementary (this result is  easier to prove directly than Theorem \ref{thm:difference between clustered leaky ReLus}). 

The key question that remains is under which conditions is the learned network perfectly clustered? To address this, we define a novel condition on the optimization trajectory that implies clustering. We define the Neural Agreement Regime (NAR) of weights of a network as follows. Informally, a network is in the NAR regime if all the $\vw$ neurons ``agree'' on the classification of the training data and likewise for the $\vu$ neurons. Classification in both cases is within a specified margin of $\beta$. Define $\overrightarrow{\mW} = (\wvec{1}, ..., \wvec{k}, \uvec{1}, ..., \uvec{k}) \in \R^{2kd}$, $\whvec{l} = \frac{{\vw^{(l)}}}{|| \vw^{(l)} ||}$ and $\uhvec{l} = \frac{{\vu^{(l)}}}{|| \vu^{(l)} ||}$. Let $N_{\hat{\mW}}$ be the network with normalized parameters $\hat{\mW} = \frac{\mW}{\left|\left| \mW \right|\right|}$. Then, NAR is defined as follows:
\begin{defn}
\label{def:NAR definition}
Let $\beta > 0$, $\vc^{\vw} \in \{-1,1\}^n$ and $\vc^{\vu} \in \{-1,1\}^n$. We define a Neural Agreement Regime (NAR) $\gN$ with parameters $\left(\beta, \vc^{\vw}, \vc^{\vu}\right)$, to be the set of all parameters $\overrightarrow{\mW}$ such that for all $\vx_i \in \sS$ and $1 \leq l \leq k$ it holds that (1) $c^{\vw}_i \whvec{l} \cdot \vx_i  \geq \beta$ and (2)  $c^{\vu}_i \uhvec{l} \cdot \vx_i  \geq \beta$.
\end{defn}
Note that the value $c^{\vw}_i$ determines the \textit{agreement} of the $\vw$ neurons on the point $\vx_i$. Indeed, if $c^{\vw}_i = 1$, then for $1 \le l \le k$ it holds that $\whvec{l} \cdot \vx_i  \geq \beta$. Similarly, $c^{\vu}_i$ determines the agreement of the $\vu$ neurons on $\vx$. 

Importantly, if a network is in an NAR then its neurons can be ``far'' from being perfectly clustered. Namely, the angles between the normalized weights of different neurons can be relatively large. Next, we show a non-trivial fact: if gradient flow enters an NAR at some time $T_{NAR}$ and stays in it, then it will converge to a perfectly clustered network.
\begin{thm}
\label{thm:Neural Alignment}
Assume that Assumption \ref{assump:late_phase} holds and consider the NAR regime $\gN$ with parameters $\left(\beta, \vc^{\vw}, \vc^{\vu}\right)$. Assume that there exists a time $T_{NAR} \ge t_0$ such that for all $t \geq T_{NAR}$ it holds that $\overrightarrow{\mW} \in \gN$. Then, gradient flow converges to a solution in $\gN$ and at convergence the network with normalized parameters $N_{\hat{\mW}}(\vx)$ is perfectly clustered.
\end{thm}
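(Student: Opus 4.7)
My plan is to combine the KKT/directional convergence results of \citet{lyu2019gradient,ji2020directional} with a stationarity analysis restricted to the NAR $\gN$. Under Assumption \ref{assump:late_phase} and the positive homogeneity of $N_{\vtheta}$ in $\vtheta$, those results imply that the normalized parameters $\hat\vtheta_t = \vtheta_t/\|\vtheta_t\|$ converge as $t\to\infty$ to a direction $\bar\vtheta^*$ along a KKT point $\vtheta^*$ of the margin maximization problem $\min \tfrac{1}{2}\|\vtheta\|^2$ subject to $q_i(\vtheta) := y_i N_{\vtheta}(\vx_i) \geq 1$ for every $i$. Crucially, $\gN$ is scale invariant (it is defined through $\whvec{l}$ and $\uhvec{l}$), so the hypothesis $\overrightarrow{\mW}_t \in \gN$ for all $t \geq T_{NAR}$ gives $\hat\vtheta_t \in \gN$ for every such $t$; because the defining inequalities are non-strict limits of themselves, the limit $\bar\vtheta^*$ again lies in $\gN$. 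I would then exploit the linear structure of $q_i$ inside $\gN$ to force stationarity to be symmetric across the $k$ $\vw$-neurons and the $k$ $\vu$-neurons.

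Inside $\gN$ the sign of $\vw^{(l)}\!\cdot\!\vx_i$ is fixed to $c^{\vw}_i$ and that of $\vu^{(l)}\!\cdot\!\vx_i$ to $c^{\vu}_i$, so $\sigma(\vw^{(l)}\!\cdot\!\vx_i) = a^{\vw}_i(\vw^{(l)}\!\cdot\!\vx_i)$ and $\sigma(\vu^{(l)}\!\cdot\!\vx_i) = a^{\vu}_i(\vu^{(l)}\!\cdot\!\vx_i)$, where $a^{\vw}_i = 1$ if $c^{\vw}_i = 1$ and $a^{\vw}_i = \alpha$ otherwise, and analogously for $a^{\vu}_i$. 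Substituting into \eqref{eq:network} yields
\[
    q_i(\vtheta) \;=\; v y_i a^{\vw}_i \sum_{l=1}^k \vw^{(l)}\!\cdot\!\vx_i \;-\; v y_i a^{\vu}_i \sum_{l=1}^k \vu^{(l)}\!\cdot\!\vx_i,
\]
so every $q_i$ is linear on $\gN$ and $\nabla_{\vw^{(l)}} q_i = v y_i a^{\vw}_i \vx_i$, $\nabla_{\vu^{(l)}} q_i = -v y_i a^{\vu}_i \vx_i$, both \emph{independent of} $l$. This is the structural payoff of being in an NAR: the per-neuron nonlinearity is frozen and each constraint gradient is symmetric across the $k$ copies of $\vw$ and across the $k$ copies of $\vu$.

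The KKT stationarity condition at $\vtheta^*$ now reads, for some dual multipliers $\lambda_i \geq 0$,
\[
    \vw^{(l)*} \;=\; \sum_{i=1}^n \lambda_i v y_i a^{\vw}_i \vx_i, \qquad \vu^{(l)*} \;=\; -\sum_{i=1}^n \lambda_i v y_i a^{\vu}_i \vx_i,
\]
and neither right-hand side depends on $l$. Hence all $\vw^{(l)*}$ are equal and all $\vu^{(l)*}$ are equal, i.e., $\vtheta^*$ (and therefore $\bar\vtheta^* = \vtheta^*/\|\vtheta^*\|$) is perfectly clustered in the sense of Definition \ref{def:aligned state}. The directional convergence $\hat\vtheta_t \to \bar\vtheta^*$ then yields that $N_{\hat\vtheta_t}$ is perfectly clustered at convergence, while $\bar\vtheta^* \in \gN$ gives the first part of the conclusion. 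The step I expect to be most delicate is not the KKT manipulation but the bookkeeping around invoking the Lyu--Li / Ji--Telgarsky machinery: checking that the fixed-$\vv$ single-trained-layer parameterization falls within their homogeneity assumption, that Assumption \ref{assump:late_phase} activates the directional/KKT convergence conclusion, and that the limit direction genuinely inherits membership in $\gN$ so the differentiable gradient computation above is legitimate at $\vtheta^*$.
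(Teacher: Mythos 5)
Your proposal is correct and follows essentially the same route as the paper's proof: directional convergence of $\hat{\vtheta}_t$ via \citet{ji2020directional}, closedness of $\gN$ to place the limit direction in the NAR, and then the KKT stationarity condition from \citet{lyu2019gradient}, where the frozen activation pattern inside $\gN$ makes each constraint gradient independent of the neuron index $l$, forcing all $\vw$-neurons (resp.\ $\vu$-neurons) to coincide. The bookkeeping you flag at the end (homogeneity of the fixed-$\vv$ parameterization, regularity, and Assumption \ref{assump:late_phase} activating the separability hypothesis) is exactly what the paper verifies in its supplementary section on the gradient-flow assumptions.
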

Theorem \ref{thm:Neural Alignment} says that if training is such that the trajectory enters an NAR and never leaves it, then the network will become perfectly clustered. The proof uses results from \citet{lyu2019gradient} and \citet{ji2020directional} that together guarantee convergence of gradient flow to a KKT point of a minimum norm optimization problem. The theorem then follows from a simple observation that in an NAR, the KKT conditions imply that the network is perfectly clustered. The proof is in the supplementary.

Using Corollary \ref{cor:perfectly_clustered_linear} we 
immediately obtain the following.
\begin{cor}
\label{cor:NAR to linear decision boundary}
Under the assumptions in Theorem \ref{thm:Neural Alignment}, GF converges to a network with a linear decision boundary.
\end{cor}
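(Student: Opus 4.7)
The plan is to obtain the corollary by straightforward composition of \thmref{thm:Neural Alignment} with \corref{cor:perfectly_clustered_linear}. Under the hypotheses of \thmref{thm:Neural Alignment}, gradient flow converges to a limit at which the normalized network $N_{\hat{\vtheta}}$ is perfectly clustered in the sense of Definition \ref{def:aligned state}; in particular, every $\whvec{i}$ collapses to a common unit vector $\hat{\vw}$ and every $\uhvec{i}$ collapses to a common unit vector $\hat{\vu}$. Applying \corref{cor:perfectly_clustered_linear} directly to $N_{\hat{\vtheta}}$ immediately yields that the normalized network has a linear decision boundary on all of $\R^{d}$.

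The one step that is not entirely automatic is transferring the conclusion from $N_{\hat{\vtheta}}$ back to the original network $N_{\vtheta}$, because rescaling the individual rows of $\mW$ by different positive constants changes $N_{\vtheta}$ as a function of $\vx$ (it is \emph{not} a single global positive rescaling of the output). To handle this, I would invoke the positive homogeneity of the Leaky-ReLU activation: since $\wvec{i} = \|\wvec{i}\|\,\hat{\vw}$ and $\uvec{i} = \|\uvec{i}\|\,\hat{\vu}$, the two sums in \eqref{eq:network} collapse and give $N_{\vtheta}(\vx) = A\,\sigma(\hat{\vw}\cdot \vx) - B\,\sigma(\hat{\vu}\cdot \vx)$ with $A = v\sum_{i}\|\wvec{i}\| > 0$ and $B = v\sum_{i}\|\uvec{i}\| > 0$. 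A short case split on the four possible sign patterns of $\hat{\vw}\cdot \vx$ and $\hat{\vu}\cdot \vx$ (the mixed cases being pinned down by the positivity of $A,B$ and the monotonicity of $\sigma$) then shows $\sgn{N_{\vtheta}(\vx)} = \sgn{(A\hat{\vw}-B\hat{\vu})\cdot \vx}$ for every $\vx \in \R^{d}$, which is a linear classifier with normal direction $A\hat{\vw}-B\hat{\vu}$.

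I do not expect any genuine obstacle here: all the heavy lifting has already occurred in the proof of \thmref{thm:Neural Alignment} (which provides perfect clustering of the normalized parameters via the KKT conditions of the margin problem) and in the proof of \thmref{thm:difference between clustered leaky ReLus} (of which \corref{cor:perfectly_clustered_linear} is the $\clustrad = 0$ specialization). The only subtlety worth flagging is that ``perfectly clustered'' in \thmref{thm:Neural Alignment} refers to the normalized weights, so one must use positive homogeneity to push linearity of the boundary back to the unnormalized network --- a step that is painless for Leaky ReLU but would genuinely fail for a non-homogeneous activation.
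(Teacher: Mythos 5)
Your proof is correct and follows the same skeleton as the paper's: compose \thmref{thm:Neural Alignment} (perfect clustering of the normalized limit) with \corref{cor:perfectly_clustered_linear} (linearity for perfectly clustered networks), then transfer the sign back to the unnormalized network. The one place you diverge is the transfer step, and the divergence stems from a misreading of the normalization: $\hat{\vtheta}_t$ denotes $\vtheta_t/\|\vtheta_t\|$, a rescaling of the \emph{entire} trainable parameter vector by the single positive scalar $\|\vtheta_t\|$, not a row-by-row normalization of $\mW$. Since the network is $1$-homogeneous in $\mW$, this gives $N_{\vtheta}(\vx)=\|\vtheta\|N_{\hat{\vtheta}}(\vx)$ and hence $\sgn{N_{\vtheta}(\vx)}=\sgn{N_{\hat{\vtheta}}(\vx)}$ for every $\vx$ outright, which is all the paper needs; the scenario of different rows being rescaled by different constants never arises. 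Your workaround --- writing $N_{\vtheta}(\vx)=A\sigma(\hat{\vw}\cdot\vx)-B\sigma(\hat{\vu}\cdot\vx)$ with $A,B>0$ and running the four-case sign analysis --- is nevertheless sound (it is essentially the $r=0$ case of Proposition \ref{thm:intersection between cones linearity} with the norms inserted as weights), and it even establishes the marginally stronger fact that linearity of the decision boundary only requires the same-type neurons to agree in \emph{direction}, not in norm. So the extra work is unnecessary but harmless. The only remaining informality, which you share with the paper, is that the unnormalized parameters diverge in norm along the flow, so ``the network at convergence'' should strictly be read through the normalized limit $\hat{\vtheta}_{*}$ together with the time-uniform identity $\sgn{N_{\vtheta_t}(\vx)}=\sgn{N_{\hat{\vtheta}_t}(\vx)}$.
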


Therefore, we see that if a network is at an NAR from some time $T_{NAR}$, then it will converge to a solution with a linear decision boundary. 
The question that remains is whether networks indeed converge to an NAR and remain there.

\begin{figure*}[h!]
    \centering
    \begin{subfigure}[b]{0.475\textwidth}
        \centering
        \includegraphics[scale=0.45]{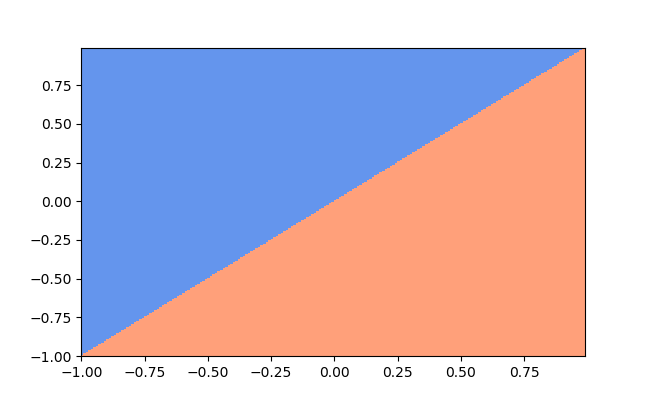}
        \caption{}
        \label{fig:Leaky ReLU network - Prediction Landscape Linear}
    \end{subfigure}
    \hfill
    \begin{subfigure}[b]{0.475\textwidth}
        \centering 
        \includegraphics[scale=0.45]{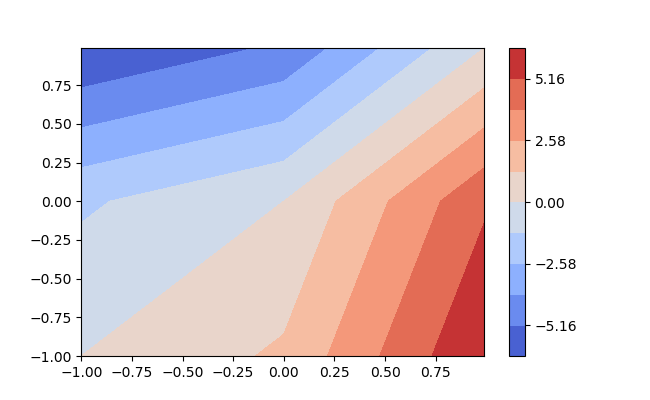}
        \caption{}
        \label{fig:Leaky ReLU network - Values Landscape Non Linear}
    \end{subfigure}
    \caption[ ]{
    The decision boundary and network output values for a two neuron leaky ReLU network with two-dimensional inputs. Figure (a) shows the decision boundary $\sgn{N_{\mW}(\vx)}$ and (b) shows the network output $N_{\mW}(\vx)$. It can be seen that the decision boundary is linear but the network output itself is not linear.}
    \label{fig:ReLU vs Leaky ReLU output}
\end{figure*}

\subsection{The Perfect Agreement Regime}
\label{sec:perfect}
To better understand convergence to NARs, in this section we study a specific NAR for which we provide a more fine-grained analysis. We identify conditions on the training data and optimization trajectory that imply that gradient flow converges to an NAR which we call the Perfect Agreement Regime (PAR). Using Theorem \ref{thm:Neural Alignment} and results from \citet{lyu2019gradient, ji2020directional}, we provide a complete characterization of the weights that gradient flow converges to in this case. Admittedly, the conditions on the data and optimization trajectory are fairly strong. Nonetheless, we show that our theoretical results accurately predict the dynamics that we observe in experiments. Indeed, in Section \ref{sec:empirical_nar} we show empirically that for certain linearly separable datasets, gradient flow converges to a solution in the PAR which is in agreement with our results.


\ignore{
In this section we will examine a special NAR regime - the PAR (Perfect Alignment Regime). In this regime the aligned state directions the network converges to can be seen as solutions to an SVM problem.Finally we present a lemma that shows under which conditions an NAR is a PAR.
\subsection{PAR-definition}
In the special case our network is an NAR regime where all $w$ type neurons classify the training data perfectly and all $u$ type neurons misclassify the training data perfectly (as in any positive point is classified as a negative one and any negative point is classified as a positive one) we call this regime the PAR (Perfect Alignment Regime). More formally we have:
}
In the PAR, each neuron classifies the data perfectly. Namely, all $\vw$ neurons classify like the ground truth $\vw^*$, and all $\vu$ neurons classify like $-\vw^*$.
Formally, let $\vy=\left(y_1,...,y_n\right)$. Then PAR is defined as follows. 
\begin{defn}[Perfect Agreement Regime]
\label{def:PAR definition}
Given training data $\sS$ with labels $\vy$, the $\mbox{PAR}(\beta)$ is the NAR with parameters $(\beta, \vy, -\vy)$.
\ignore{
We say the network \defref{Network Definition} is in the Perfect Alignment Regime (PAR) with accuracy $\beta>0$
if the network is in an NAR such that:
\begin{enumerate}
    \item $\forall \vx_{+} \in \sS_{+} : \forall 1 \leq l \leq k  \ \left(\frac{{\vw^{(l)}}}{|| \vw^{(l)} ||}\right) \cdot \vx_{+}  \geq \beta \land \left(\frac{{\vu^{(l)}}}{|| \vu^{(l)} ||}\right) \cdot \vx_{+}  \leq -\beta$
    \item $\forall \vx_{-} \in \sS_{-} : \forall 1 \leq l \leq k  \ \left(\frac{{\vw^{(l)}}}{|| \vw^{(l)} ||}\right) \cdot \vx_{-}  \leq -\beta \land \left(\frac{{\vu^{(l)}}}{|| \vu^{(l)} ||}\right) \cdot \vx_{-}  \geq \beta $
\end{enumerate}
}
\end{defn}
Note that the fact that a network is in PAR does not mean that $\vw_i=-\vu_j$. Indeed, PAR only requires that $\vw_i$ and $-\vu_j$ both correctly classify the training set.

Next, we provide conditions under which a network will converge to a PAR. The conditions require a lower bound on the network smoothed margin (\eqref{def:normalized margin}), as well as a separability condition on the data. To define the separability condition we consider the following:
\begin{align*}
    &\sV_\beta^+(\sS) \coloneqq \{ \vv \in \R^{d} \  |  \ \forall \vx \in \sS_{+} \quad \vhvec \cdot \vx \geq \beta, \\&  \exists \vx \in \sS_{-} \quad \text{ s.t. } \vhvec \cdot \vx \geq \beta \}
\end{align*}
Namely, $\sV_\beta^+(\sS)$ is the set of vectors that classifies the positive points correctly and incorrectly classifies at least one of the negative points as a positive one, where all classifications are with margin $\beta$. Similarly we define:
\begin{align*}
    &\sV_\beta^-(\sS) \coloneqq \{ \vv \in \R^{d} \  |  \ \forall \vx \in \sS_{-} \quad \vhvec \cdot \vx \geq \beta, \\&  \exists \vx \in \sS_{+} \quad \text{ s.t. } \vhvec \cdot \vx \geq \beta \}
\end{align*}

Thus, $\sV_\beta^-(\sS)$ is the same as $\sV_\beta^+(\sS)$ but with the roles of $\sS_+$ and $\sS_-$ reversed.
\ignore{
Next, we will show under what conditions the NAR is indeed the PAR and derive the parameters directions exactly in that case, but first, some notations are required:
$$\displaystyle \sW^{+}(\beta) \coloneqq \{ \vw \in \R^{d} | \forall \vx_{+} \in \sS_{+} \quad \frac{\vw}{|| \vw ||} \cdot \vx_{+} \geq \beta \}$$
$$\displaystyle \sW^{+}_{-}(\beta) \coloneqq \{ \vw \in \R^{d} | \forall \vx_{+} \in \sS_{+} \quad \frac{\vw}{|| \vw ||} \cdot \vx_{+} \geq \beta \land \exists \vx_{-} \in \sS_{-} \quad \text{ s.t. } \frac{\vw}{|| \vw ||} \cdot \vx_{-} \geq \beta\}$$
 In simpler words $\sW^{+}(\beta)$ is the set of vectors that classifies the positive points correctly and $W^{+}_{-}(\beta)$ is the set of vectors that classifies the positive points correctly and mistakenly classifies at least one of the negative points as a positive one.In similar fashion we can define similar sets for the negative points:
$$\displaystyle \sU^{+}(\beta) \coloneqq \{ \vu \in \R^{d} | \forall \vx_{-} \in \sS_{-} \quad \frac{\vu}{|| \vu ||} \cdot \vx_{-} \geq \beta \}$$
\newline
$$\displaystyle \sU^{+}_{-}(\beta) \coloneqq \{ \vu \in \R^{d} | \forall \vx_{-} \in \sS_{-} \quad \frac{\vu}{|| \vu ||} \cdot \vx_{-} \geq \beta \land \exists \vx_{+} \in \sS_{+} \quad \text{ s.t. } \frac{\vu}{|| \vu ||} \cdot \vx_{+} \geq \beta\}$$
}
With these definitions we can provide a sufficient condition for convergence to PAR. 
\begin{thm}
\label{thm:multi neuron PAR}
Assume that: 
\begin{enumerate}
    \item Assumption \ref{assump:late_phase} holds.
    \item There exists an NAR $\gN$ and $T_{NAR} \ge t_0$ such that for all $t \geq T_{NAR}$ it holds that $\overrightarrow{\mW}_t \in \gN$.
    \item There exists $T_{Margin} \ge T_{NAR}$ such that $\tilde{\gamma}_{_{T_{Margin}}}  > \sqrt{k} \alpha v \cdot \max\limits_{\vx \in \sS} || x_{i} ||$
    \item The training data $\sS$ satisfies $\sV_\beta^+(\sS)= \sV_\beta^-(\sS)=\emptyset$.    
\end{enumerate}
Then $\gN$ is a $\mbox{PAR}(\beta)$ for all $t>T_{Margin}$, and there exists $\delta_w, \delta_u > 0$ such that  gradient flow converges to a network whose normalized version is perfectly clustered with neuron directions $\hat{\vw},\hat{\vu}$, where $\left(\delta_w\hat{\vw},\delta_u\hat{\vu}\right)$ is the solution to the following convex optimization problem:
\begin{align}
\label{eq:pcr_svm}
    \underset{\vw \in \R^{d}, \vu \in \R^{d}}{\argmin} \quad  || \vw & ||^{2} + || \vu ||^{2} 
     \\
    \forall \vx_{+} \in \sS_{+}:  \vw \cdot \vx_{+} & - \alpha \vu \cdot \vx_{+} \geq 1
    \nonumber \\
    \forall \vx_{-} \in \sS_{-}: \vu \cdot \vx_{-} & - \alpha \vw \cdot \vx_{-} \geq 1 \nonumber
\end{align}
\end{thm}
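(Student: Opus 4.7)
The plan is to argue in three stages: (i) show that under the hypotheses $\gN$ is actually $\mbox{PAR}(\beta)$, i.e.\ $\vc^{\vw}=\vy$ and $\vc^{\vu}=-\vy$; (ii) invoke Theorem \ref{thm:Neural Alignment} to obtain a perfectly clustered limit; (iii) identify this limit with the convex program using the KKT characterisation of gradient flow from \citet{lyu2019gradient,ji2020directional}.

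For stage (i), the key observation is that inside $\gN$ all $\vw^{(l)}\!\cdot\!\vx_i$ share the sign $c^{\vw}_i$ and all $\vu^{(l)}\!\cdot\!\vx_i$ share the sign $c^{\vu}_i$, so the Leaky-ReLU coefficient is constant in $l$ and
\[
y_i N_{\vtheta}(\vx_i)=vk\bigl[\phi(c^{\vw}_i)\,y_i\overline{\vw}\!\cdot\!\vx_i-\phi(c^{\vu}_i)\,y_i\overline{\vu}\!\cdot\!\vx_i\bigr]
\]
with $\phi(+1)=1$, $\phi(-1)=\alpha$. Jensen yields $\|\overline{\vw}\|^2+\|\overline{\vu}\|^2\le\|\vtheta\|^2/k$, and the inequality $\tilde{\gamma}\le\overline{\gamma}$ of \citet{lyu2019gradient} combined with Assumption~3 gives the strict margin lower bound $y_i N_{\vtheta}(\vx_i)>\sqrt{k}\,\alpha v\|\vtheta\|\max_j\|\vx_j\|$ for every $i$ and every $t>T_{Margin}$. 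Combining this with the display and the Cauchy--Schwarz estimates $|\overline{\vw}\!\cdot\!\vx_i|,|\overline{\vu}\!\cdot\!\vx_i|\le\|\vtheta\|\|\vx_i\|/\sqrt{k}$, the pattern $(c^{\vw}_i,c^{\vu}_i)=(-y_i,+y_i)$ makes both summands non-positive and forces $y_i N_{\vtheta}(\vx_i)\le 0$; the $\alpha$-dominated patterns $(-y_i,-y_i)$ at a positive $\vx_i$ and $(+y_i,+y_i)$ at a negative $\vx_i$ yield upper bounds $\le v\sqrt{k}\,\alpha\|\vtheta\|\max_j\|\vx_j\|$ that also contradict the strict margin.

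The remaining non-PAR patterns produce only the weaker bound $v\sqrt{k}\|\vtheta\|\|\vx_i\|$ and must be ruled out using Assumption~4. If $(c^{\vw}_i,c^{\vu}_i)=(+1,+1)$ at some negative $\vx_i$, then either every positive also satisfies $c^{\vw}=+1$, in which case each $\hat{\vw}^{(l)}\in\sV_\beta^+(\sS)$ (contradicting $\sV_\beta^+(\sS)=\emptyset$), or some positive carries $c^{\vw}=-y$ and we revert to one of the patterns already ruled out at that positive. Symmetrically, the pattern $(+1,+1)$ at a positive $\vx_i$ either forces $\hat{\vu}^{(l)}\in\sV_\beta^-(\sS)$, or produces a negative with $c^{\vu}=-y$, again ruled out. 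Combining everything, $\vc^{\vw}=\vy$ and $\vc^{\vu}=-\vy$, so $\gN=\mbox{PAR}(\beta)$ for all $t>T_{Margin}$.

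With $\gN$ now a PAR, Theorem \ref{thm:Neural Alignment} gives a perfectly clustered limit: $\vw^{(l)}\to\delta_w\hat{\vw}$ and $\vu^{(l)}\to\delta_u\hat{\vu}$ for some $\delta_w,\delta_u>0$ and unit vectors $\hat{\vw},\hat{\vu}$. By \citet{lyu2019gradient,ji2020directional} this limit direction is a KKT point of $\min\tfrac{1}{2}\|\vtheta\|^2$ subject to $y_i N_{\vtheta}(\vx_i)\ge 1$. In PAR with perfect clustering the Leaky-ReLU is affine on every training input, so the constraints read $vk(\vw\!\cdot\!\vx_+-\alpha\vu\!\cdot\!\vx_+)\ge 1$ at positives and $vk(\vu\!\cdot\!\vx_--\alpha\vw\!\cdot\!\vx_-)\ge 1$ at negatives; absorbing the factor $vk$ into the variables preserves the objective as a squared norm and produces exactly \eqref{eq:pcr_svm}, with $\delta_w,\delta_u$ equal to the norms of its minimizer. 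The main obstacle is stage (i): the ``$1$-dominated'' non-PAR patterns, where the margin bound alone is too weak and one must combine it with a careful case split through $\sV_\beta^\pm(\sS)=\emptyset$ to derive the contradiction.
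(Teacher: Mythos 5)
Your proposal is correct and follows essentially the same route as the paper's proof: a margin upper bound of $\sqrt{k}\,\alpha v\max_i\|\vx_i\|$ under each misclassifying sign pattern, contradicted via Assumption 3 together with the monotonicity of the smoothed margin (which you use implicitly when extending the strict bound to all $t>T_{Margin}$); Assumption 4 to exclude the remaining $(+1,+1)$ patterns; and then Theorem \ref{thm:Neural Alignment} plus the Lyu--Li KKT characterization with the $vk$ change of variables. The only cosmetic differences are that you organize the margin step as a per-point enumeration of the four sign patterns through the cluster averages $\overline{\vw},\overline{\vu}$ and Jensen, where the paper bounds directly through the individual neurons and the $\ell_1$--$\ell_2$ inequality, and there is a harmless sign slip in your symmetric case (the negative point you need carries $c^{\vu}_j=y_j$, not $-y_j$).
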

\begin{figure*}[th]
    \centering
    \begin{subfigure}[b]{0.475\textwidth}
        \centering
        \includegraphics[scale=0.5]{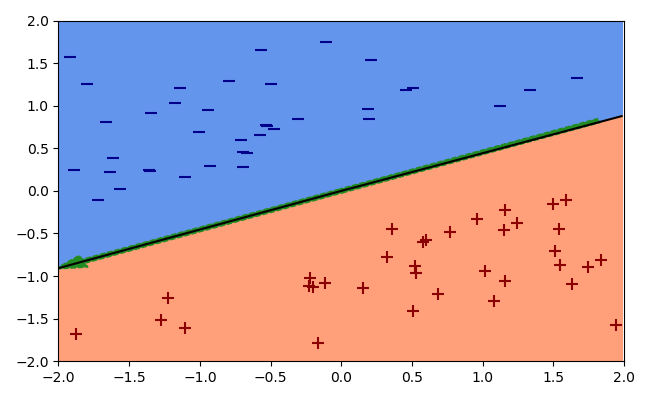}
        \caption[]%
        {{\small Learned Decision Boundary and PAR Solution}}    
        \label{fig:PAR landscape}
    \end{subfigure}
    \hfill
    \begin{subfigure}[b]{0.475\textwidth}  
        \centering 
        \includegraphics[scale=0.5]{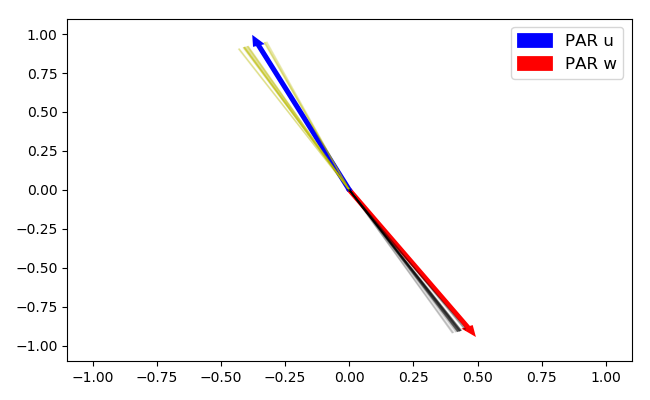}
        \caption[]%
        {{\small Learned Neurons and PAR Solutions}}    
        \label{fig:PAR neurons}
    \end{subfigure}
    \caption[ ]
    {\small{
    Illustration of the Perfect Agreement Regime (PAR):  A network with $100$ neurons is trained on linearly separable data sampled from two Gaussians, and points inside a linear margin are excluded. Figure (a) shows that the network learns a linear decision boundary. Furthermore, the green arrow shows the decision boundary predicted by the PAR result (optimization problem \eqref{eq:pcr_svm}), and it agrees with the learned boundary. Figure (b) shows the learned neurons (yellow lines for $\vu$ neurons and grey lines for $\vw$ neurons), as well as the theoretical PAR solutions. It can be seen that neurons indeed converge to the PAR solution. 
    }}
    \label{fig:pcr_fig}
\end{figure*}

We first comment on the assumptions. The first two assumptions are the same assumptions on the optimization trajectory as in Theorem \ref{thm:Neural Alignment}. Assumption 3 is another assumption on the trajectory that says that sufficiently large smoothed margin is achieved at some stage of the optimization. We note that the lower bound on the smoothed margin can be made small by considering a small $\alpha$.

Assumption 4 refers to the training set. Informally, it  corresponds to requiring that the two classes are approximately symmetric with respect to the origin. The next lemma shows that a certain symmetric training set satisfies Assumption 4:
\begin{lem}
\label{lem:symmetric}
Assume that for any $\vx \in \sS$ it holds that $-\vx \in \sS$. Then, for any $\beta > 0$, $\sV_\beta^+(\sS)= \sV_\beta^-(\sS)=\emptyset$.
\end{lem}
The proof is given in the supplementary. This example suggests that we should observe PAR in symmetric distributions, which produce approximately symmetric training sets. Indeed, we empirically show in Section \ref{sec:empirical_nar} that gradient flow converges to a solution in PAR for a distribution with two symmetric Gaussians. We note that this example shows that Assumption 4 is independent of the maximum margin attainable on the training set. Indeed, by scaling the points, we can obtain any margin and still satisfy the assumption.

The theorem not only implies that convergence will be to a PAR, but it provides the solution that GF will converge to. The optimization problem in \eqref{eq:pcr_svm} is an SVM optimization problem with the kernel: $K(\vx,\vx') =\sum_{y\in\{-1,1\}} \sigma'(y\vw^{*} \cdot \vx)\sigma'(y\vw^{*} \cdot \vx')\vx \cdot \vx'$. The corresponding feature map is:
$\phi(\vx) = [\sigma'(\vw^{*} \cdot \vx)\vx,-\sigma'(-\vw^{*} \cdot \vx)\vx]\in \R^{2d}$.

We prove Theorem \ref{thm:multi neuron PAR} in the supplementary, and provide a sketch next. First, we use Theorem \ref{thm:Neural Alignment} to show that gradient flow converges to an NAR and the neurons are clustered. Then we show that under Assumption 3 and using the monotonicity of the smoothed margin (\eqref{def:smoothed normalized margin}), by \citet{lyu2019gradient}, all $\vw$ neurons classify the positive points correctly and all $\vu$ neurons classify the negative points correctly for all $t > T_{Margin}$. Then, using Assumption 4 we show that the solution is in PAR. Finally, we use results of \citet{lyu2019gradient} to show that the network directions solve the convex optimization problem in the theorem.


\subsection{Experiments}
\label{sec:empirical_nar}
In Theorem \ref{thm:multi neuron PAR} we show that when learning enters the PAR regime the solution will be given by \eqref{eq:pcr_svm}. We performed experiments in several settings that show the above behavior is observed in practice when classes are sampled from Gaussians. Figure \ref{fig:pcr_fig} shows the decision boundary (Figure \ref{fig:PAR landscape}) and learned weights (Figure \ref{fig:PAR neurons}), for learning from points sampled from two classes corresponding to Gaussians. The figure also shows the PAR predictions for the decision boundary and learned weights, and these show excellent agreement with the empirical results. We have also verified that in this case convergence is indeed to a PAR solution. We performed such experiments also for higher dimensional settings, and the results are in the supplementary. Finally, note that we do not expect learning to always converge to a PAR. In the supplementary we show an example where this does not happen.

\section{Conclusions}
Optimization and generalization are closely coupled in deep-learning. Yet both are little understood even for simple models. Here we consider perhaps the simplest ``teacher'' model where the ground truth is linear. We prove that cross-entropy can be globally minimized by SGD, despite the non-convexity of the loss, and for any initialization scale. We are not aware of any such result for non-linear networks (for example NTK optimization results require large initialization scale, and sufficiently wide networks \citep{ji2019polylogarithmic}). Our novel proof technique analyzes SGD in an offline setting and uses the notion of loss-violation per epoch, which we believe could be useful elsewhere.

In our setting, small initialization scale leads empirically to approximately linear decision boundaries. We prove that such boundaries are obtained when neurons with same output-weight sign are clustered. Empirically we show that such clustering indeed occurs. Moreover, we provide sufficient conditions for converging to such clustered solutions. 

Several open questions remain. The first is reducing the assumptions when proving convergence to a clustered solution. Another interesting direction is extending our results to simple non-linear teachers. 

\section{Acknowledgements}
This research is supported by the European Research
Council (ERC) under the European Unions Horizon 2020
research and innovation programme (grant ERC HOLI
819080) and by the Yandex Initiative in Machine
Learning at Tel Aviv University. AB is supported by the Google Doctoral Fellowship in Machine Learning.
\bibliography{sources}

\begin{thebibliography}{34}
\providecommand{\natexlab}[1]{#1}
\providecommand{\url}[1]{\texttt{#1}}
\expandafter\ifx\csname urlstyle\endcsname\relax
  \providecommand{\doi}[1]{doi: #1}\else
  \providecommand{\doi}{doi: \begingroup \urlstyle{rm}\Url}\fi

\bibitem[Allen-Zhu \& Li(2019)Allen-Zhu and Li]{allen2019can}
Allen-Zhu, Z. and Li, Y.
\newblock What can resnet learn efficiently, going beyond kernels?
\newblock \emph{arXiv preprint arXiv:1905.10337}, 2019.

\bibitem[Allen-Zhu et~al.(2019)Allen-Zhu, Li, and Song]{allen2019convergence}
Allen-Zhu, Z., Li, Y., and Song, Z.
\newblock A convergence theory for deep learning via over-parameterization.
\newblock In \emph{International Conference on Machine Learning}, pp.\
  242--252. PMLR, 2019.

\bibitem[Arora et~al.(2019)Arora, Du, Hu, Li, and Wang]{arora2019fine}
Arora, S., Du, S., Hu, W., Li, Z., and Wang, R.
\newblock Fine-grained analysis of optimization and generalization for
  overparameterized two-layer neural networks.
\newblock In \emph{International Conference on Machine Learning}, pp.\
  322--332, 2019.

\bibitem[Blum \& Rivest(1992)Blum and Rivest]{blum1992training}
Blum, A.~L. and Rivest, R.~L.
\newblock Training a 3-node neural network is np-complete.
\newblock \emph{Neural Networks}, 5\penalty0 (1):\penalty0 117--127, 1992.

\bibitem[Brutzkus \& Globerson(2019)Brutzkus and Globerson]{brutzkus2019larger}
Brutzkus, A. and Globerson, A.
\newblock Why do larger models generalize better? a theoretical perspective via
  the xor problem.
\newblock In \emph{International Conference on Machine Learning}, pp.\
  822--830. PMLR, 2019.

\bibitem[Brutzkus et~al.(2018)Brutzkus, Globerson, Malach, and
  Shalev-Shwartz]{brutzkus2018sgd}
Brutzkus, A., Globerson, A., Malach, E., and Shalev-Shwartz, S.
\newblock Sgd learns over-parameterized networks that provably generalize on
  linearly separable data.
\newblock \emph{International Conference on Learning Representations}, 2018.

\bibitem[Cao \& Gu(2019)Cao and Gu]{cao2019generalization}
Cao, Y. and Gu, Q.
\newblock Generalization bounds of stochastic gradient descent for wide and
  deep neural networks.
\newblock In \emph{Advances in Neural Information Processing Systems}, pp.\
  10836--10846, 2019.

\bibitem[Chizat \& Bach(2018)Chizat and Bach]{chizat2018global}
Chizat, L. and Bach, F.
\newblock On the global convergence of gradient descent for over-parameterized
  models using optimal transport.
\newblock In \emph{Advances in neural information processing systems}, pp.\
  3036--3046, 2018.

\bibitem[Chizat \& Bach(2020)Chizat and Bach]{ChizatB20}
Chizat, L. and Bach, F.
\newblock Implicit bias of gradient descent for wide two-layer neural networks
  trained with the logistic loss.
\newblock In Abernethy, J.~D. and Agarwal, S. (eds.), \emph{Conference on
  Learning Theory, {COLT} 2020, 9-12 July 2020, Virtual Event [Graz, Austria]},
  volume 125 of \emph{Proceedings of Machine Learning Research}, pp.\
  1305--1338. {PMLR}, 2020.
\newblock URL \url{http://proceedings.mlr.press/v125/chizat20a.html}.

\bibitem[Daniely \& Malach(2020)Daniely and Malach]{daniely2020learning}
Daniely, A. and Malach, E.
\newblock Learning parities with neural networks.
\newblock \emph{arXiv preprint arXiv:2002.07400}, 2020.

\bibitem[Daniely et~al.(2016)Daniely, Frostig, and Singer]{daniely2016toward}
Daniely, A., Frostig, R., and Singer, Y.
\newblock Toward deeper understanding of neural networks: The power of
  initialization and a dual view on expressivity.
\newblock In \emph{Advances In Neural Information Processing Systems}, pp.\
  2253--2261, 2016.

\bibitem[Davis et~al.(2018)Davis, Drusvyatskiy, Kakade, and
  Lee]{davis2018stochastic}
Davis, D., Drusvyatskiy, D., Kakade, S., and Lee, J.~D.
\newblock Stochastic subgradient method converges on tame functions, 2018.

\bibitem[Devlin et~al.(2019)Devlin, Chang, Lee, and Toutanova]{devlin2019bert}
Devlin, J., Chang, M.-W., Lee, K., and Toutanova, K.
\newblock Bert: Pre-training of deep bidirectional transformers for language
  understanding.
\newblock In \emph{NAACL-HLT (1)}, 2019.

\bibitem[Du et~al.(2019)Du, Lee, Li, Wang, and Zhai]{du2018gradient2}
Du, S., Lee, J., Li, H., Wang, L., and Zhai, X.
\newblock Gradient descent finds global minima of deep neural networks.
\newblock In \emph{International Conference on Machine Learning}, pp.\
  1675--1685, 2019.

\bibitem[Du et~al.(2018)Du, Zhai, Poczos, and Singh]{du2018gradient}
Du, S.~S., Zhai, X., Poczos, B., and Singh, A.
\newblock Gradient descent provably optimizes over-parameterized neural
  networks.
\newblock \emph{International Conference on Learning Representations}, 2018.

\bibitem[Fiat et~al.(2019)Fiat, Malach, and Shalev-Shwartz]{fiat2019decoupling}
Fiat, J., Malach, E., and Shalev-Shwartz, S.
\newblock Decoupling gating from linearity.
\newblock \emph{arXiv preprint arXiv:1906.05032}, 2019.

\bibitem[Gunasekar et~al.(2018)Gunasekar, Lee, Soudry, and
  Srebro]{gunasekar2018implicit}
Gunasekar, S., Lee, J.~D., Soudry, D., and Srebro, N.
\newblock Implicit bias of gradient descent on linear convolutional networks.
\newblock In \emph{Advances in Neural Information Processing Systems}, pp.\
  9461--9471, 2018.

\bibitem[Jacot et~al.(2018)Jacot, Gabriel, and Hongler]{jacot2018neural}
Jacot, A., Gabriel, F., and Hongler, C.
\newblock Neural tangent kernel: Convergence and generalization in neural
  networks.
\newblock In \emph{Advances in neural information processing systems}, pp.\
  8571--8580, 2018.

\bibitem[Ji \& Telgarsky(2019{\natexlab{a}})Ji and Telgarsky]{ji2018gradient}
Ji, Z. and Telgarsky, M.
\newblock Gradient descent aligns the layers of deep linear networks.
\newblock \emph{ICLR}, 2019{\natexlab{a}}.

\bibitem[Ji \& Telgarsky(2019{\natexlab{b}})Ji and
  Telgarsky]{ji2019polylogarithmic}
Ji, Z. and Telgarsky, M.
\newblock Polylogarithmic width suffices for gradient descent to achieve
  arbitrarily small test error with shallow relu networks.
\newblock In \emph{International Conference on Learning Representations},
  2019{\natexlab{b}}.

\bibitem[Ji \& Telgarsky(2020)Ji and Telgarsky]{ji2020directional}
Ji, Z. and Telgarsky, M.
\newblock Directional convergence and alignment in deep learning, 2020.

\bibitem[Krizhevsky et~al.(2012)Krizhevsky, Sutskever, and
  Hinton]{krizhevsky2012imagenet}
Krizhevsky, A., Sutskever, I., and Hinton, G.~E.
\newblock Imagenet classification with deep convolutional neural networks.
\newblock In \emph{Advances in neural information processing systems}, pp.\
  1097--1105, 2012.

\bibitem[Li \& Liang(2018)Li and Liang]{li2018learning}
Li, Y. and Liang, Y.
\newblock Learning overparameterized neural networks via stochastic gradient
  descent on structured data.
\newblock In \emph{Advances in Neural Information Processing Systems}, pp.\
  8157--8166, 2018.

\bibitem[Li et~al.(2020)Li, Ma, and Zhang]{li2020learning}
Li, Y., Ma, T., and Zhang, H.~R.
\newblock Learning over-parametrized two-layer neural networks beyond {NTK}.
\newblock In \emph{Conference on Learning Theory}, pp.\  2613--2682, 2020.

\bibitem[Lyu \& Li(2020)Lyu and Li]{lyu2019gradient}
Lyu, K. and Li, J.
\newblock Gradient descent maximizes the margin of homogeneous neural networks.
\newblock \emph{ICLR}, 2020.

\bibitem[Mei et~al.(2018)Mei, Montanari, and Nguyen]{mei2018mean}
Mei, S., Montanari, A., and Nguyen, P.-M.
\newblock A mean field view of the landscape of two-layer neural networks.
\newblock \emph{Proceedings of the National Academy of Sciences}, 115\penalty0
  (33):\penalty0 E7665--E7671, 2018.

\bibitem[Moroshko et~al.(2020)Moroshko, Gunasekar, Woodworth, Lee, Srebro, and
  Soudry]{moroshko2020implicit}
Moroshko, E., Gunasekar, S., Woodworth, B., Lee, J.~D., Srebro, N., and Soudry,
  D.
\newblock Implicit bias in deep linear classification: Initialization scale vs
  training accuracy.
\newblock \emph{arXiv preprint arXiv:2007.06738}, 2020.

\bibitem[Nacson et~al.(2019)Nacson, Gunasekar, Lee, Srebro, and
  Soudry]{nacson2019lexicographic}
Nacson, M.~S., Gunasekar, S., Lee, J., Srebro, N., and Soudry, D.
\newblock Lexicographic and depth-sensitive margins in homogeneous and
  non-homogeneous deep models.
\newblock In \emph{International Conference on Machine Learning}, pp.\
  4683--4692, 2019.

\bibitem[Phuong \& Lampert(2021)Phuong and Lampert]{lampert21_orthogonal}
Phuong, M. and Lampert, C.
\newblock The inductive bias of relu networks on orthogonally separable data.
\newblock \emph{ICLR}, 2021.

\bibitem[Silver et~al.(2016)Silver, Huang, Maddison, Guez, Sifre, Van
  Den~Driessche, Schrittwieser, Antonoglou, Panneershelvam, Lanctot,
  et~al.]{silver2016mastering}
Silver, D., Huang, A., Maddison, C.~J., Guez, A., Sifre, L., Van Den~Driessche,
  G., Schrittwieser, J., Antonoglou, I., Panneershelvam, V., Lanctot, M.,
  et~al.
\newblock Mastering the game of go with deep neural networks and tree search.
\newblock \emph{nature}, 529\penalty0 (7587):\penalty0 484, 2016.

\bibitem[Wang et~al.(2019)Wang, Giannakis, and Chen]{Wang_2019}
Wang, G., Giannakis, G.~B., and Chen, J.
\newblock Learning relu networks on linearly separable data: Algorithm,
  optimality, and generalization.
\newblock \emph{IEEE Transactions on Signal Processing}, 67\penalty0
  (9):\penalty0 2357–2370, May 2019.
\newblock ISSN 1941-0476.
\newblock \doi{10.1109/tsp.2019.2904921}.
\newblock URL \url{http://dx.doi.org/10.1109/TSP.2019.2904921}.

\bibitem[Wei et~al.(2019)Wei, Lee, Liu, and Ma]{wei2019regularization}
Wei, C., Lee, J.~D., Liu, Q., and Ma, T.
\newblock Regularization matters: Generalization and optimization of neural
  nets vs their induced kernel.
\newblock In \emph{Advances in Neural Information Processing Systems}, pp.\
  9712--9724, 2019.

\bibitem[Woodworth et~al.(2020)Woodworth, Gunasekar, Lee, Moroshko, Savarese,
  Golan, Soudry, and Srebro]{WoodworthGLMSGS20}
Woodworth, B.~E., Gunasekar, S., Lee, J.~D., Moroshko, E., Savarese, P., Golan,
  I., Soudry, D., and Srebro, N.
\newblock Kernel and rich regimes in overparametrized models.
\newblock In Abernethy, J.~D. and Agarwal, S. (eds.), \emph{Conference on
  Learning Theory, {COLT} 2020, 9-12 July 2020, Virtual Event [Graz, Austria]},
  volume 125 of \emph{Proceedings of Machine Learning Research}, pp.\
  3635--3673. {PMLR}, 2020.
\newblock URL \url{http://proceedings.mlr.press/v125/woodworth20a.html}.

\bibitem[Yehudai \& Shamir(2019)Yehudai and Shamir]{yehudai2019power}
Yehudai, G. and Shamir, O.
\newblock On the power and limitations of random features for understanding
  neural networks.
\newblock In \emph{Advances in Neural Information Processing Systems}, pp.\
  6594--6604, 2019.

\end{thebibliography}
\bibliographystyle{icml2020}
\appendix
\onecolumn







\onecolumn
\icmltitle{Towards Understanding Neural Networks with Linear Teachers \\ \textit{Supplementary Material}}




\section{Gradient Flow Definitions}
\label{sec:formulation}

We next formally define gradient flow. A function $f:\sX \rightarrow \R$ is locally Lipschitz if for every $\vx \in \sX$ there exists a neighborhood $\sU$ of $\vx$ such that the restriction of $f$ on $\sU$ is Lipschitz continuous. For a locally Lipschitz function $f: \sX \rightarrow \R$, the Clarke subdifferential at $\vx \in \sX$ is the convex set:
\begin{equation}
\label{def:clarke's subdifferential}
    \partial^{\circ}f(\vx) \coloneqq {\rm conv}\left\{\lim\limits_{k \rightarrow \infty} \nabla f(\vx_{k}) : \vx_{k} \rightarrow \vx,f \text{ is differentiable at } \vx_{k}\right\}
\end{equation}

As in \cite{lyu2019gradient} and \cite{ji2020directional}, a curve $z$ from an interval $I$ to a real space $\R^{m}$ is called an arc if it is absolutely continuous on any compact subinterval of $I$. For an arc $z$ we use $z'(t)$ (or $\frac{dz}{dt}(t))$ to denote the derivative at $t$ if it
exists. We say that a locally Lipschitz function $f : \R^{d} \rightarrow \R$ admits a chain rule if for any arc $z : [0;+\infty) \rightarrow \R^{d}, \forall h \in \partial^{\circ}f(z(t)) : (f \circ z)'(t) =\langle h, z'(t) \rangle$ holds for a.e. $t \ge 0$. It holds that an arc is a.e. differentiable, and the composition of an arc and a locally Lipschitz function is still an arc. 

Given the definitions above, we define gradient flow $\mW: [0,\infty) \rightarrow \R^{k}$ to be an arc that satisfies the following differential inclusion for a.e. $t \ge 0$:
\begin{equation}
    \label{eq:gradient flow definition}
    \frac{d \mW_{t}}{dt} \in -\partial^{\circ}L_{\sS}(\mW_{t})
\end{equation}

\section{Proof of Theorem \ref{thm:Risk Convergence}}
\label{Risk Convergence Proof}
Throughout this proof we will sometimes use the notation $\langle \vx,\vy \rangle$ as the dot product between two vectors $\vx$ and $\vy$ for readability purposes.

Let  $\displaystyle \overrightarrow{\mW}^{*} = (\overbrace{\vw^{*} \dots \vw^{*}}^{k},\overbrace{-\vw^{*} \dots -\vw^{*}}^{k}) \in \R^{2kd} $.

Define the following two functions:
$$\displaystyle F(\mW_{t}) = \langle \overrightarrow{\mW}_{t}, \overrightarrow{\mW}^{*} \rangle = \sum\limits_{i=1}^{k} \langle \vw_{t}^{(i)},\vw^{*} \rangle - \sum\limits_{i=1}^{k} \langle \vu_{t}^{(i)},\vw^{*} \rangle $$ and $$\displaystyle G(\mW_{t}) = || \overrightarrow{\mW}_{t} || = \sqrt{\sum\limits_{i=1}^{k} || \vw_{t}^{(i)} ||^2 + \sum\limits_{i=1}^{k} || \vu_{t}^{(i)} ||^2}$$
Then, from Cauchy-Schwartz inequality we have:
\begin{equation}
    \label{cauchy-schwartz}
    \displaystyle
    \frac{\vert F(\mW_{t}) \vert}{G(\mW_{t}) ||\overrightarrow{\mW}^{*} ||} = \frac{\vert \langle \overrightarrow{\mW}_{t}, \overrightarrow{\mW}^{*} \rangle \vert}{|| \overrightarrow{\mW}_{t} ||   ||\overrightarrow{\mW}^{*} ||} \leq 1
\end{equation}

Recall we define: $N_{\mW}(\vx) = v \sum\limits_{j=1}^{k} \sigma (\vw^{(j)} \cdot\vx) - v\sum\limits_{j=1}^{k}\sigma (\vu^{(j)} \cdot\vx)$. 

We consider minimizing the objective function:
$$L_{\sS}(\mW) = \frac{1}{n}\sum\limits_{i=1}^{n}\log\left(1+e^{-y_{i}N_{\mW}(\vx_{i})}\right)$$
using SGD on $\sS$ where each point is sampled without replacement at each epoch. WLOG, we set $\sigma'(0) = \alpha$.

We first outline the proof structure. Let's assume we run SGD for $N_{e}$ epochs and denote $T=nN_{e}$. Furthermore, we assume that for all epochs up to this point there is at least one point in the epoch s.t. $\ell(y_{t}N_{\mW_{t-1}}(\vx_{t})) > \varepsilon_{0}$ for some $\varepsilon_{0} >0$ (recall that $n$ is the number of training points, and $(y_{t},\vx_{t})$ is some training point selected during some epoch).

First, we will show that after at most $T \le M(n,\epsilon_0)$ iterations, there exists an epoch $i_e$ such that for each point $(\vx_t,y_t) \in \sS$ sampled in the epoch, it holds that:

\begin{equation}
    \label{eq:First half of risk convergence proof}
    \ell(y_{t}N_{\mW_{t-1}}(\vx_{t})) \leq \varepsilon_{0}
\end{equation}

Next, using the Lipschitzness of $\ell(x)$ we will show that the loss on points cannot change too much during an epoch. Specifically, we will use this to show that at the end of epoch $i_e$, which we denote by time $T^*$, it holds for all $(\vx_i,y_i) \in \sS$:
\begin{equation}
    \label{eq:Second half of risk convergence proof}
    \ell(y_{i}N_{\mW_{T^{*}}}(\vx_{i})) \leq (1+2v^{2}R_{x}^{2}\eta k n )\varepsilon_{0}
\end{equation}
now by choosing $\varepsilon_{0} = \frac{\varepsilon}{1+2v^{2}R_{x}^{2}\eta k n}$ we will get that $\forall 1 \leq i \leq n \ \ell(y_{i}N_{\mW_{T^{*}}}(\vx_{i})) \leq \varepsilon$ which shows that $L_{\sS}(\mW_{T^{*}}) \leq \varepsilon$ as required.

We start by showing \eqref{eq:First half of risk convergence proof}.

For the gradient of each neuron we have:

\begin{align*}
    \frac{\partial L_{\{(\vx_{i},y_{i})\}}(\mW)}{\partial \vw^{(j)}} &= \frac{e^{-y_{i}N_{\mW}(\vx_{i})}}{1+e^{-y_{i}N_{\mW}(\vx_{i})}} \cdot \frac{-y_{i}\partial N_{\mW}(\vx_{i})}{\partial \vw^{(j)}} \\ &= \frac{-y_{i}e^{-y_{i}N_{\mW}(\vx_{i})}}{1+e^{-y_{i}N_{\mW}(\vx_{i})}} \cdot v\vx_{i}\sigma'(\vw^{(j)} \cdot\vx_{i}) \\ &= -vy_{i}\vx_{i} \left \vert \ell'(y_{i}N_{\mW}(\vx_{i})) \right \vert \sigma'(\vw^{(j)} \cdot\vx_{i})
\end{align*}

and similarly:
$$\frac{\partial L_{\{(\vx_{i},y_{i})\}}(\mW)}{\partial \vu^{(j)}}=vy_{i}\vx_{i} \left \vert \ell'(y_{i}N_{\mW}(\vx_{i})) \right \vert \sigma'(\vu^{(j)} \cdot\vx_{i})$$
where $\ell'(x) = -\frac{e^{-x}}{1+e^{-x}}=-\frac{1}{1+e^{x}}$ and $\ell(x) = log(1+e^{-x})$.

Optimizing by SGD yields the following update rule:
$$\mW_{t} = \mW_{t-1} - \eta \frac{\partial}{\partial \mW}L_{\{(\vx_{t},y_{t})\}}(\mW_{t-1})$$
where $\mW_{t} = (\vw_{t}^{(1)},...,\vw_{t}^{(k)},\vu_{t}^{(1)},...,\vu_{t}^{(k)})$.

For every neuron we get the following updates:

\begin{enumerate}
\label{eq:neurons updates}
    \item $\vw_{t}^{(j)} = \vw_{t-1}^{(j)} +\eta vy_{t}\vx_{t}\left \vert \ell'(y_{t}N_{\mW_{t-1}}(\vx_{t})) \right \vert p_{t-1}^{(j)}$
    \item $\vu_{t}^{(j)} = \vu_{t-1}^{(j)} - \eta vy_{t}\vx_{t}\left \vert \ell'(y_{t}N_{\mW_{t-1}}(\vx_{t})) \right \vert q_{t-1}^{(j)}$
\end{enumerate}
where $p_{t}^{(j)} := \sigma'(\vw_{t}^{(j)} \cdot \vx_{t+1}) ;q_{t}^{(j)} := \sigma'(\vu_{t}^{(j)} \cdot \vx_{t+1})$.

Next we will show recursive upper bounds for $\displaystyle G(\mW_{t})$ and $\displaystyle F(\mW_{t})$.
\begin{align*}
    &G(\mW_{t})^{2} = \sum\limits_{j=1}^{k} || \vw_{t}^{(j)} ||^{2} +\sum\limits_{j=1}^{k} || \vu_{t}^{(j)} ||^{2}   \\ &\leq  \sum\limits_{j=1}^{k} || \vw_{t-1}^{(j)} ||^{2} +\sum\limits_{j=1}^{k} || \vu_{t-1}^{(j)} ||^{2} \\&+ 2\eta y_{t}\vert \ell'(y_{t}N_{\mW_{t-1}}(\vx_{t})) \vert \left(\sum\limits_{j=1}^{k} \langle \vw_{t-1}^{(j)},\vx_{t}\rangle p_{t-1}^{(j)}v - \sum\limits_{j=1}^{k} \langle \vu_{t-1}^{(j)},\vx_t\rangle q_{t-1}^{(j)}v\right) \\ & +2k\eta^{2}v^{2}|| \vx_{t} ||^{2}\vert \ell'(y_{t}N_{\mW_{t-1}}(\vx_{t}))\vert ^2 = \sum\limits_{j=1}^{k} || \vw_{t-1}^{(j)} ||^{2} +\sum\limits_{j=1}^{k} || \vu_{t-1}^{(j)} ||^{2}  \\ & +2\eta \vert \ell'(y_{t}N_{\mW_{t-1}}(\vx_{t}))\vert y_{t}N_{\mW_{t-1}}(\vx_{t}) +  2k\eta ^{2} v^{2} || \vx_{t}  || ^{2}\vert \ell'(y_{t}N_{\mW_{t-1}}(\vx_{t})) \vert^2 \\ &= G(\mW_{t-1})^{2} +2\eta \vert \ell'(y_{t}N_{\mW_{t-1}}(\vx_{t})) \vert y_{t}N_{\mW_{t-1}}(\vx_{t}) + 2k\eta^{2}v^{2}|| \vx_{t} ||^2 \vert \ell'(y_{t}N_{\mW_{t-1}}(\vx_{t}))\vert ^2
\end{align*}
On the other hand,

\begin{align*}
&F(\mW_{t}) =  \sum\limits_{j=1}^{k}\langle \vw_{t}^{(j)},\vw^{*}\rangle - \sum\limits_{j=1}^{k} \langle\vu_{t}^{(j)},\vw^{*}\rangle = \sum\limits_{j=1}^{k}\langle \vw_{t-1}^{(j)},\vw^{*}\rangle - \sum\limits_{j=1}^{k} \langle \vu_{t-1}^{(j)},\vw^{*}\rangle \\ & +\eta \vert \ell'(y_{t}N_{W_{t-1}}(\vx_{t}))\vert \sum\limits_{j=1}^{k} \langle y_{t}\vx_{t},\vw^{*}\rangle p_{t-1}^{(j)}v + \eta \vert \ell'(y_{t}N_{W_{t-1}}(\vx_{t}))\vert \sum\limits_{j=1}^{k} \langle y_{t}\vx_{t},\vw^{*}\rangle q_{t-1}^{(j)}v 
\\ & \geq \sum\limits_{j=1}^{k} \langle \vw_{t-1}^{(j)},\vw^{*}\rangle - \sum\limits_{j=1}^{k} \langle \vu_{t-1}^{(j)},\vw^{*}\rangle  +2k\eta  v\alpha  \vert \ell'(y_{t}N_{W_{t-1}}(\vx_{t})) \vert
\end{align*}

Where we used the inequalities $\displaystyle \langle y_{t}\vx_{t},\vw^{*} \rangle \geq 1$ and $q_{t}^{(j)},p_{t}^{(j)} \geq \alpha$.

To summarize we have:

\begin{equation}
    \label{G Recursion rule}
    \displaystyle G(\mW_{t})^{2} \leq G(\mW_{t-1})^{2} +2 \eta \vert \ell'(y_{t}N_{\mW_{t-1}}(\vx_{t})) \vert y_{t}N_{\mW_{t-1}}(\vx_{t}) + 2k\eta^{2}v^{2}R_{x}^2 \vert \ell'(y_{t}N_{\mW_{t-1}}(\vx_{t})) \vert^2
\end{equation} 
\begin{equation}
    \label{F Recursion rule}
    \displaystyle
    F(\mW_{t}) \geq F(\mW_{t-1}) +2k\eta v \alpha \vert  \ell'(y_{t}N_{\mW_{t-1}}(\vx_{t})) \vert
\end{equation}
For an upper bound on $G(\mW_{t})$ we use the following inequalities (which hold for the cross entropy loss): 

$\forall x \in \R \quad \frac{x}{1+e^{x}} \leq 1 \Rightarrow  \vert \ell'(y_{t}N_{\mW_{t-1}}(\vx_{t})) \vert y_{t}N_{\mW_{t-1}}(\vx_{t}) =\frac{y_{t}N_{\mW_{t-1}}(\vx_{t})}{1+e^{y_{t}N_{\mW_{t-1}}(\vx_{t})}} \leq 1$ and $\vert \ell'(y_{t}N_{\mW_{t-1}}(\vx_{t})) \vert \leq 1$.
Together we have for any $t$:
$$G(\mW_{t})^{2} \leq G(\mW_{t-1})^{2} +2 \eta  + 2k\eta^{2}v^{2}R_{x}^2$$
Using this recursively up until $T=nN_{e}$ we get:
\begin{equation}
\label{G final inequality}
  G(\mW_{T})^2 \leq G(\mW_0)^2 +T(2k\eta^{2}v^{2}R_{x}^2 +2\eta)  
\end{equation}
Now, for $F(\mW_{t})$, let $\varepsilon_{0} >0$, under our assumption, in any epoch $i_{e}$ until $N_{e}$ ($1 \leq i_{e} \leq N_{e}$) there exists at least one point in the epoch $(y_{t_{i_{e}}},\vx_{t_{i_{e}}}) \in \sS$ s.t. $\ell(y_{t_{i_{e}}}N_{\mW_{t_{i_{e}}}}(\vx_{t_{i_{e}}})) > \varepsilon_{0}$.

Now, since in our case $\ell(x)= log(1+e^{-x})$ and $\ell'(x) = -\frac{1}{1+e^{x}}$, we see that the condition $\ell(x) >  \varepsilon_{0}$ implies that:
\begin{equation}
    \label{loss and loss gradient relations}
    \displaystyle
    \vert \ell'(x)\vert >1-e^{-\varepsilon_{0}}
\end{equation}



In any other case $|\ell'(y_{t}N_{\mW_{t-1}}(\vx_{t}))| \geq 0$,
so if we assume at least one point violation per epoch (i.e. $\ell(y_{t_{i_{e}}}N_{\mW_{t_{i_{e}}}}(\vx_{t_{i_{e}}})) \geq  \varepsilon_{0}$ for some point $\left(y_{t_{i_{e}}},\vx_{t_{i_{e}}}\right)$ in the epoch) we would get that at the end of epoch $N_{e}$:
\begin{equation}
    \label{F Recursion violation rule}
    F(\mW_{T}) \geq F(\mW_{T-n})+2k\eta v \alpha (1-e^{-\varepsilon_{0}})
\end{equation}

This implies that (recursively using \eqref{F Recursion violation rule}):
\begin{equation}
\label{F final inequality}
    F(\mW_{T}) \geq F(\mW_{0}) + 2k \eta v \alpha N_{e}(1-e^{-\varepsilon_{0}})
\end{equation}
where $N_{e}$ is the number of epochs and $n$ the number of training points, $T = nN_{e}$.

Now, using the Cauchy-Schwartz, \eqref{G final inequality} and \eqref{F final inequality} we have:

\begin{align*}
\displaystyle
    &-G(\mW_0) || \overrightarrow{\mW}^{*} || +2k \eta v \alpha N_{e}(1-e^{-\varepsilon_{0}}) \leq F(\mW_0) + 2k \eta v \alpha N_{e}(1-e^{-\varepsilon_{0}}) \\ & \leq F(\mW_{T})  \leq || \overrightarrow{\mW}^{*} || G(\mW_{T}) \leq  || \overrightarrow{\mW}^{*} || \sqrt{G(\mW_0)^2 +T(2k\eta^{2}v^{2}R_{x}^2 +2\eta)}
\end{align*}

Using $\sqrt{a+b} \leq \sqrt{a}+\sqrt{b}$ the above implies:
$$-G(\mW_{0}) || \overrightarrow{\mW}^{*} || +2k \eta v \alpha N_{e}(1-e^{-\varepsilon_{0}}) \leq || \overrightarrow{\mW}^{*} || G(\mW_{0}) + || \overrightarrow{\mW}^{*} || \sqrt{T} \sqrt{2k\eta^{2}v^{2}R_{x}^2 +2\eta}$$

Now using $\left|\left| \vw_{0}^{(i)} \right|\right|, \left|\left| \vu_{0}^{(i)} \right|\right| \leq R_{0}$ we get $G(\mW_{0}) \leq \sqrt{2k}R_{0}$.

Noting that $\left\| \overrightarrow{\mW}^{*} \right\| =\sqrt{2k} || \vw^{*} || $ and that $N_{e}=\frac{T}{n}$, we get :

\begin{equation*}
\displaystyle
    \left(\frac{2k\eta v\alpha(1-e^{-\varepsilon_{0}})}{n}\right)T \leq \sqrt{4k^2\eta^2v^2R_{x}^2 +4k\eta} || \vw^{*} || \sqrt{T}+4kR_{0} || \vw^{*} ||
\end{equation*}
Therefore, we have an inequality of the form:
$$aT \leq b\sqrt{T}+c$$
where $\displaystyle a=\frac{2k\eta v\alpha(1-e^{-\varepsilon_{0}})}{n},b = \sqrt{4k^2\eta^2v^2R_{x}^2 +4k\eta} || \vw^{*} ||$ and $c = 4kR_{0} || \vw^{*} ||$.

By inspecting the roots of the parabola $P(X) = x^2-\frac{b}{a}x -\frac{c}{a}$ we conclude that:
\begin{align*}
    T&\leq  \left(\frac{b}{a}\right)^2 +\sqrt{\frac{c}{a}}\frac{b}{a}+\frac{c}{a} = \frac{(4k^2\eta^2v^2R_{x}^2 +4k\eta) || \vw^{*} ||^2 n^2}{4k^2\eta^2v^2\alpha^2 (1-e^{-\varepsilon_{0}})^{2}} + \frac{\sqrt{4k^2\eta^2v^2R_{x}^2 +4k\eta } || \vw^{*} || n}{2k\eta v\alpha (1-e^{-\varepsilon_{0}})} \sqrt{\frac{4kR_{0}|| \vw^{*} || n}{2k\eta v\alpha (1-e^{-\varepsilon_{0}})}} \\ & + \frac{4kR_{0}|| \vw^{*} || n}{2k\eta v\alpha (1-e^{-\varepsilon_{0}})}= \left(\frac{R_{x}^2}{\alpha^2}+\frac{1}{k\eta v^2\alpha^2}\right) \frac{|| \vw^{*} ||^{2} n^{2}}{(1-e^{-\varepsilon_{0}})^{2}}  +\frac{\sqrt{R_{0}(8k^2\eta^2v^2R_{x}^2+8k\eta)}|| \vw^{*} ||^{1.5}n^{1.5}}{2k(\eta v\alpha)^{1.5} (1-e^{-\varepsilon_{0}})^{1.5}} \\ & +\frac{2R_{0}|| \vw^{*} || n }{\eta  v \alpha (1-e^{-\varepsilon_{0}})} 
\end{align*}

By the inequality $1-e^{-x} > \frac{x}{1+x}$ for $x > 0$ (which is equivalent to $\frac{1}{1-e^{-x}}<\frac{x+1}{x}$), with $x= \varepsilon_{0} > 0$ we get
$\frac{1}{1-e^{-\varepsilon_{0}}} < \frac{\varepsilon_{0}+1}{\varepsilon_{0}} = 1+ \frac{1}{\varepsilon_{0}}$.
Therefore for $\beta>0$ (all arguments are positive):
$$\frac{1}{(1-e^{-\varepsilon_{0}})^{\beta}} < \left(1+ \frac{1}{\varepsilon_{0}}\right)^{\beta}$$
By using the above inequality we can reach a polynomial bound on $T$:
\begin{align}
\label{Network epsilon accuracy bound}
    T &\leq  \left(\frac{R_{x}^2}{\alpha^2}+\frac{1}{k\eta v^2\alpha^2}\right) || \vw^{*} ||^{2} n^{2}\left(1+\frac{1}{\varepsilon_{0}}\right)^{2} \notag \\&+\frac{\sqrt{R_{0}(8k^2\eta^2v^2R_{x}^2+8k\eta)}|| \vw^{*} ||^{1.5}n^{1.5}(1+\frac{1}{\varepsilon_{0}})^{1.5}}{2k(\eta v\alpha)^{1.5}} +\frac{2R_{0}|| \vw^{*} || n (1+\frac{1}{\varepsilon_{0}})}{\eta v \alpha} 
\end{align}

We have shown that there is at most a finite amount of epochs $N_{e} = \frac{T}{n}$ such that there exists at least one point in each of them with a loss greater than $\varepsilon_{0}$. Therefore, there exists an epoch $1 \le i_e \le N_e+1$ such that each point sampled in the epoch has a loss smaller than $\varepsilon_{0}$. Formally, for any $(i_e-1) n + 1\leq t \leq i_e n, \ \  \ell\left(y_{t} N_{ \mW_{t-1}}(\vx_{t})\right) \leq \varepsilon_{0}$. Recall that SGD samples without replacement and therefore, each point is sampled at some $t$ in the epoch $i_e$. 

Next, we will show that there exists a time $t$ such that $L_{\sS}(\mW_{t}) <\varepsilon$ by bounding the change in the loss values during the epoch. We'll start by noticing that our loss function $\ell(x)$ is locally Lipschitz with coefficient $1$, that is because $\forall \vx \ |\ell'(\vx)| = \frac{1}{1+e^{x}} \leq 1$. With this in mind for any point $(y_{i},\vx_{i})\in \sS$ if we can bound $|y_{i}N_{\mW_{t+s}}(\vx_{i})-y_{i}N_{\mW_{t}}(\vx_{i})|$ we would also bound $|\ell\left(y_{i}N_{\mW_{t+s}}(\vx_{i})\right) - \ell\left(y_{i}N_{\mW_{t}}(\vx_{i})\right)|$.


For any iteration $(i_e-1) n + 1\leq t \leq i_e n$ and $1\leq s \leq n$ we have:

\begin{align}
    &|y_{i}N_{\mW_{t+s}}(\vx_{i})-y_{i}N_{\mW_{t}}(\vx_{i})| = |N_{\mW_{t+s}}(\vx_{i})-N_{\mW_{t}}(\vx_{i})| \notag \\&=\left|v\sum_{j=1}^{k} \left(\sigma(\vw^{(j)}_{t+s}\cdot \vx_{i})- \sigma(\vw^{(j)}_{t} \cdot \vx_{i})\right) - v\sum_{j=1}^{k} \left(\sigma(\vu^{(j)}_{t+s}\cdot \vx_{i})-\sigma(\vu^{(j)}_{t} \cdot \vx_{i})\right)\right| \notag \\&\leq v\sum_{j=1}^{k} \left|\sigma(\vw^{(j)}_{t+s}\cdot \vx_{i})- \sigma(\vw^{(j)}_{t} \cdot \vx_{i})\right| + v\sum_{j=1}^{k} \left|\sigma(\vu^{(j)}_{t+s}\cdot \vx_{i})-\sigma(\vu^{(j)}_{t} \cdot \vx_{i})\right| \notag \\&\leq 
    v\sum_{j=1}^{k} \left| \left(\vw^{(j)}_{t+s} - \vw^{(j)}_{t}\right)\cdot \vx_{i} \right| +v\sum_{j=1}^{k} \left|  \left(\vu^{(j)}_{t+s}- \vu^{(j)}_{t}\right)\cdot \vx_{i} \right| \label{eq:sigma lipschitzness} \\&\leq
    v\sum_{j=1}^{k} ||\vw^{(j)}_{t+s} - \vw^{(j)}_{t}|| \cdot ||\vx_{i}|| +v\sum_{j=1}^{k} ||\vu^{(j)}_{t+s}-\vu^{(j)}_{t}|| \cdot ||\vx_{i}|| 
    \label{eq:network difference bound lipschitz} \\&\leq vR_{x} \sum_{j=1}^{k} \left|\left|\sum_{h=1}^{s} \eta v y_{t+h}\vx_{t+h}\left|\ell'(y_{t+h}N_{\mW_{t+h-1}}(\vx_{t+h}))\right|p_{t+h-1}^{(j)} \right|\right|
    \notag \\&+vR_{x}\sum_{j=1}^{k} \left|\left|\sum_{h=1}^{s} \eta v y_{t+h}\vx_{t+h}\left|\ell'(y_{t+h}N_{\mW_{t+h-1}}(\vx_{t+h}))\right|q_{t+h-1}^{(j)} \right|\right|
    \label{eq:telescopic update rule}\\& \leq vR_{x}\sum_{j=1}^{k}\sum_{h=1}^{s} \eta v \left|\ell'(y_{t+h}N_{\mW_{t+h-1}}(\vx_{t+h}))\right| ||\vx_{t+h}|| + vR_{x}\sum_{j=1}^{k}\sum_{h=1}^{s} \eta v \left|\ell'(y_{t+h}N_{\mW_{t+h-1}}(\vx_{t+h}))\right| ||\vx_{t+h}|| 
    \notag \\& \leq
    2v^{2}R_{x}^{2}\eta k \sum_{h=1}^{s}\left|\ell'(y_{t+h}N_{\mW_{t+h-1}}(\vx_{t+h}))\right| \leq 2v^{2}R_{x}^{2} \eta k s (1-e^{-\varepsilon_{0}}) \leq 2v^{2}R_{x}^{2} \eta k n (1-e^{-\varepsilon_{0}}) \leq 2v^{2}R_{x}^{2}\eta k n \varepsilon_{0} \label{eq:loss derivative upper bound}
\end{align}

Where in \eqref{eq:sigma lipschitzness} we used the Lipschitzness of $\sigma(\cdot): \ \forall x_{1},x_{2}\in \R | \sigma(x_{1})-\sigma(x_{2}) | \leq |x_{1}-x_{2}|$, in \eqref{eq:network difference bound lipschitz} we used the Cauchy-Shwartz inequality, in \eqref{eq:telescopic update rule} we used the update rule \eqref{eq:neurons updates} recursively and finally in \eqref{eq:loss derivative upper bound} we used that if $\ell(x) \leq \varepsilon_{0}$ then $|\ell'(x)| \leq 1-e^{-\varepsilon_{0}}$ (follows from a similar derivation to \eqref{loss and loss gradient relations}) and that $1-e^{-\varepsilon_{0}} \leq \varepsilon_{0}$.

Now we can use the bound we just derived and the Lipschitzness of $\ell$ and reach

\begin{equation}
    \label{eq:end of epoch to mid epoch loss bound}
    |\ell\left(y_{i}N_{\mW_{t+s}}(\vx_{i})\right) - \ell\left(y_{i}N_{\mW_{t}}(\vx_{i})\right)| \leq 2v^{2}R_{x}^{2}\eta k n \varepsilon_{0}
\end{equation}
for any time $(i_e-1) n + 1\leq t \leq i_e n$ and $1\leq s \leq n$. We know that for all $1 \le i \le n$, there exists $ (i_e-1) n + 1 \le t_{i}^{*} \le  i_e n$ such that   $\ell(y_{i}N_{\mW_{t_{i}^{*}-1}}(\vx_{i})) \leq \varepsilon_0$. Therefore, by \eqref{eq:end of epoch to mid epoch loss bound}, for time $T^* = i_e n + 1$ and any $(y_{i},\vx_{i})\in \sS$  
we have:
\begin{equation}
    \label{eq:end of epoch loss bound}
    \ell\left(y_{i}N_{\mW_{T^{*}}}(\vx_{i})\right) \leq \ell\left(y_{i}N_{\mW_{t_{i}^{*}-1}}(\vx_{i})\right) +2v^{2}R_{x}^{2}\eta k n \varepsilon_{0} \leq \varepsilon_{0} +2v^{2}R_{x}^{2}\eta k n \varepsilon_{0}
\end{equation}

If $\forall 1\leq i \leq n \ \ell(y_{i}N_{\mW}(\vx_{i})) \leq \varepsilon$ we would get our bound $L_{\sS}(\mW) \leq \varepsilon$.

Therefore, if we set $\varepsilon_{0} = \frac{\varepsilon}{1+2v^{2}R_{x}^{2}\eta k n}$ in \eqref{eq:end of epoch loss bound} we'll reach our result. 

Setting this $\varepsilon_{0}$ at \eqref{Network epsilon accuracy bound}
leads to: 
\begin{align}
\label{Network epsilon 0 accuracy bound}
    T &\leq  \left(\frac{R_{x}^2}{\alpha^2}+\frac{1}{k\eta v^2\alpha^2}\right) || \vw^{*} ||^{2} n^{2}\left(1+\frac{1+2v^{2}R_{x}^{2}\eta k n}{\varepsilon}\right)^{2} \notag \\&+\frac{\sqrt{R_{0}(8k^2\eta^2v^2R_{x}^2+8k\eta)}|| \vw^{*} ||^{1.5}n^{1.5}\left(1+\frac{1+2v^{2}R_{x}^{2}\eta k n}{\varepsilon}\right)^{1.5}}{2k(\eta v\alpha)^{1.5}} +\frac{2R_{0}|| \vw^{*} || n \left(1+\frac{1+2v^{2}R_{x}^{2}\eta k n}{\varepsilon}\right)}{\eta v \alpha} 
\end{align}
We denote the right hand side of \eqref{Network epsilon 0 accuracy bound} plus $n$ by $M(n,\epsilon)$. \footnote{We need to add $n$ to \eqref{eq:end of epoch loss bound} because we may consider the epoch immediately after $T$.}
Note that $M(n, \epsilon) = O(\frac{n^{4}}{\varepsilon^2})$ and therefor for simplicity we can alternatively denote $M(n,\epsilon)$ to be a less tight bound of the form $\frac{Cn^{4}}{\varepsilon^{2}}$ where $C$ is a constant that depends polynomially on $R_{x},R_{0},k,\frac{1}{\alpha},\max\left\{\eta,\frac{1}{\eta}\right\},\max\left\{v,\frac{1}{v}\right\}$ and $|| \vw^{*} ||$.
Overall, we proved that after $O(\frac{n^{4}}{\varepsilon^2})$ steps, SGD will converge to a solution with $L_{\sS}(\mW_{t}) <\varepsilon$ empirical loss for some $t\le M(n,\varepsilon)$.

\newpage
\section{Proof of Theorem \ref{thm:difference between clustered leaky ReLus}}
\label{proof:difference between clustered leaky ReLus}

Before we start proving the main theorem we will prove some useful lemmas and corollaries.

We first show the following.

\begin{cor}
\label{cor:margin on difference to individual}
    if $|(\overline{\vw}-\overline{\vu}) \cdot \vx | \geq 2r ||\vx||$ then $|\overline{\vw}\cdot \vx| \geq r ||\vx|| \lor |\overline{\vu}\cdot \vx| \geq r ||\vx||$.
\end{cor}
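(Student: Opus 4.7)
The plan is to prove the contrapositive via the triangle inequality. Assume that both $|\overline{\vw}\cdot \vx| < r\|\vx\|$ and $|\overline{\vu}\cdot \vx| < r\|\vx\|$ hold simultaneously. Then I would apply the triangle inequality to the scalar quantity $(\overline{\vw}-\overline{\vu})\cdot\vx = \overline{\vw}\cdot\vx - \overline{\vu}\cdot\vx$ to bound
\[
|(\overline{\vw}-\overline{\vu})\cdot\vx| \le |\overline{\vw}\cdot\vx| + |\overline{\vu}\cdot\vx| < r\|\vx\| + r\|\vx\| = 2r\|\vx\|,
\]
which directly contradicts the hypothesis $|(\overline{\vw}-\overline{\vu})\cdot\vx| \ge 2r\|\vx\|$. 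Hence at least one of the two inequalities $|\overline{\vw}\cdot \vx| \ge r\|\vx\|$ or $|\overline{\vu}\cdot \vx| \ge r\|\vx\|$ must hold, which is exactly the disjunction in the conclusion.

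There is no real obstacle here: the statement is a one-line triangle-inequality argument, and its role seems to be purely as a bookkeeping step to split the hypothesis of Theorem~\ref{thm:difference between clustered leaky ReLus} into two easier sub-cases (the case where $\overline{\vw}$ separates $\vx$ from zero with margin $r\|\vx\|$, and the case where $\overline{\vu}$ does). I would therefore keep the proof to two or three lines, explicitly noting that the argument is the contrapositive of the trivial bound $|a-b|\le|a|+|b|$ applied to $a=\overline{\vw}\cdot\vx$ and $b=\overline{\vu}\cdot\vx$, so that the reader can immediately use the corollary when entering the casework portion of the main theorem's proof.
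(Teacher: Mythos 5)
Your proof is correct and is essentially identical to the paper's: both argue by contradiction/contraposition, assuming $|\overline{\vw}\cdot \vx| < r\|\vx\|$ and $|\overline{\vu}\cdot \vx| < r\|\vx\|$ and applying the triangle inequality to $(\overline{\vw}-\overline{\vu})\cdot\vx$ to contradict the hypothesis. No issues.
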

\begin{proof}
    Assume in contradiction that $|\overline{\vw}\cdot \vx| < r ||\vx|| \land |\overline{\vu}\cdot \vx| < r ||\vx||$. then by the triangle inequality and the Cauchy-Shwartz inequality we'll get:

    $| (\overline{\vw}-\overline{\vu})\cdot \vx| \leq |\overline{\vw} \cdot \vx| +|\overline{\vu} \cdot \vx| < r ||\vx|| + r ||\vx|| = 2r ||\vx||$ in contradiction to the assumption $|(\overline{\vw}-\overline{\vu}) \cdot \vx | \geq 2r ||\vx||$.
\end{proof}

Next, we prove the following lemma, which will be used throughout the proof of the main theorem. The lemma ties the dot products with the center of the cluster to the dot products with the individual neurons:

\begin{lem}
\label{lemma:clusterization effect on neurons}
    If  $\forall 1\leq j \leq k: \ \vw^{(j)} \in Ball(\overline{\vw},r) \land \vu^{(j)} \in Ball(\overline{\vu},r)$
    then:
    $\forall \vx \in \R^{d} \ \text{s.t} \ |\overline{\vw}\cdot \vx|  \geq r||\vx|| : [\forall 1 \leq j \leq k \ \ \vw^{(j)} \cdot \vx > 0] \lor [\forall 1 \leq j \leq k \ \ \vw^{(j)} \cdot \vx < 0]$ 
    and similarly for $u$ type neurons
    $\forall \vx \in \R^{d} \ \text{s.t} \ |\overline{\vu}\cdot \vx|  \geq r||\vx|| : [\forall 1 \leq j \leq k \ \  \vu^{(j)} \cdot \vx > 0] \lor [\forall 1 \leq j \leq k \ \ \vu^{(j)} \cdot \vx < 0]$.
\end{lem}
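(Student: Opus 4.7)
The proof is essentially a one-step Cauchy--Schwarz argument, and I would organize it as follows.

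\textbf{Main idea.} The hypothesis $\vw^{(j)} \in \mathrm{Ball}(\overline{\vw}, r)$ means $\|\vw^{(j)} - \overline{\vw}\| \leq r$, so by Cauchy--Schwarz, for every $\vx \in \R^d$,
\begin{equation*}
    \bigl|(\vw^{(j)} - \overline{\vw})\cdot \vx\bigr| \;\leq\; \|\vw^{(j)} - \overline{\vw}\|\cdot \|\vx\| \;\leq\; r\|\vx\|.
\end{equation*}
Writing $\vw^{(j)}\cdot\vx = \overline{\vw}\cdot\vx + (\vw^{(j)}-\overline{\vw})\cdot\vx$, this bound pins down the sign of $\vw^{(j)}\cdot\vx$ whenever $|\overline{\vw}\cdot\vx|$ is large enough compared to $r\|\vx\|$.

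\textbf{Steps.} Fix $\vx$ with $|\overline{\vw}\cdot\vx| \geq r\|\vx\|$ (the $\vx = \vzero$ case is vacuous, so assume $\|\vx\|>0$). I split on the sign of $\overline{\vw}\cdot\vx$. If $\overline{\vw}\cdot\vx \geq r\|\vx\|$, then for every $1\leq j\leq k$,
\begin{equation*}
    \vw^{(j)}\cdot\vx \;\geq\; \overline{\vw}\cdot\vx - r\|\vx\| \;\geq\; 0,
\end{equation*}
so all dot products share the nonnegative sign; symmetrically, if $\overline{\vw}\cdot\vx \leq -r\|\vx\|$, then $\vw^{(j)}\cdot\vx \leq 0$ for all $j$. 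The identical argument, applied with $\vu^{(j)}$ and $\overline{\vu}$ in place of $\vw^{(j)}$ and $\overline{\vw}$, handles the second half of the lemma.

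\textbf{Remark on strictness.} The lemma as stated asks for $\vw^{(j)}\cdot\vx > 0$ (or $<0$) rather than $\geq 0$. The only way the bound above can fail to be strict is when simultaneously $\overline{\vw}\cdot\vx = \pm r\|\vx\|$ and $(\vw^{(j)}-\overline{\vw})\cdot\vx$ saturates Cauchy--Schwarz with the opposite sign; this is a measure-zero boundary case that the subsequent arguments in Theorem~\ref{thm:difference between clustered leaky ReLus} absorb (the activation is Leaky ReLU, so $\sigma'(0) = \alpha$ agrees with the positive branch, and in Corollary~\ref{cor:margin on difference to individual} the hypothesis $|(\overline{\vw}-\overline{\vu})\cdot\vx| \geq 2r\|\vx\|$ is used to prevent both $|\overline{\vw}\cdot\vx|$ and $|\overline{\vu}\cdot\vx|$ from simultaneously sitting on the boundary). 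Hence the only real content is the Cauchy--Schwarz estimate above, and there is no genuine obstacle: the proof is essentially a one-liner, and the work done later in Theorem~\ref{thm:difference between clustered leaky ReLus} is to leverage this sign-agreement across the four possible sign combinations for $\vw$-neurons and $\vu$-neurons.
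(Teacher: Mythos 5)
Your proof is correct and essentially identical to the paper's: both decompose $\vw^{(j)}\cdot\vx = \overline{\vw}\cdot\vx + (\vw^{(j)}-\overline{\vw})\cdot\vx$ and apply Cauchy--Schwarz to pin down the sign, treating the two cases $\overline{\vw}\cdot\vx \geq r\|\vx\|$ and $\overline{\vw}\cdot\vx \leq -r\|\vx\|$ separately. Your remark on strictness is apt --- the paper's own proof silently uses $\|\vw^{(j)}-\overline{\vw}\| < r$ (strict) to obtain the strict inequalities $\vw^{(j)}\cdot\vx > 0$ in the conclusion, even though the hypothesis only provides $\leq r$, so the boundary case you flag is glossed over there as well.
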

\begin{proof}
    Let's assume that $\overline{\vw}\cdot \vx \geq r ||\vx||$, therefore $\forall 1 \leq j \leq k: \ \ \vw^{(j)} \cdot \vx = (\vw^{(j)} -\overline{\vw})\cdot \vx +\overline{\vw}\cdot \vx \geq -|| \vw^{(j)}-\overline{\vw}|| \cdot ||\vx || +r ||\vx|| > -r ||\vx|| +r ||\vx|| =0$
    where we had used Cauchy-Shwartz inequality and that $|| \vw^{(j)} -\overline{\vw}|| <r$.
    
    If $\overline{\vw}\cdot \vx \leq -r ||\vx||$, $\forall 1 \leq j \leq k: \ \ \vw^{(j)} \cdot \vx = (\vw^{(j)} -\overline{\vw})\cdot \vx +\overline{\vw}\cdot \vx <  || \vw^{(j)}-\overline{\vw}|| \cdot ||\vx || -r ||\vx||< r ||\vx|| -r ||\vx|| =0$
    the same derivation would work for $\vu$.
\end{proof}

We are now ready to move forward with proving the main lemma.

By \corref{cor:margin on difference to individual} we see that $\{\vx \in \R^{d}| \ |(\overline{\vw}-\overline{\vu})\cdot \vx| \geq 2r ||\vx||\} \subseteq \{\vx \in \R^{d}| \ |\overline{\vw}\cdot \vx|  \geq r||\vx|| \lor  |\overline{\vu}\cdot \vx| \geq r||\vx||\}$ so if we prove that:  

$\forall \vx \in \R^{d} \in  \{\vx \in \R^{d}| \ |(\overline{\vw}-\overline{\vu})\cdot \vx| \geq 2r ||\vx||\} \cap  \{\vx \in \R^{d}| \ |\overline{\vw}\cdot \vx|  \geq r||\vx|| \lor  |\overline{\vu}\cdot \vx| \geq r||\vx||\}: \ \sgn{N_{\mW}(\vx)} = \sgn{(\overline{\vw}-\overline{\vu})\cdot \vx}$ we will be done.

We'll start by showing first our lemma holds $\forall \vx \in \R^{d} \ \text{s.t} \ |\overline{\vw}\cdot \vx|  \geq r||\vx|| \land  |\overline{\vu}\cdot \vx|  \geq r||\vx|| $
and then deal with the points in which only one of the above conditions holds.
\begin{prop}
\label{thm:intersection between cones linearity}
    $\forall \vx \in \R^{d} \ \text{s.t} \ |\overline{\vw}\cdot \vx|  \geq r||\vx|| \land  |\overline{\vu}\cdot \vx|  \geq r||\vx|| :  \ \sgn{N_{\mW}(\vx)} = \sgn{(\overline{\vw}-\overline{\vu})\cdot \vx}$
\end{prop}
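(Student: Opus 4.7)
The plan is to perform a clean case analysis over the four disjoint possibilities for the signs of $\overline{\vw}\cdot\vx$ and $\overline{\vu}\cdot\vx$, leveraging Lemma \ref{lemma:clusterization effect on neurons} to reduce each case to a linear computation. Since $|\overline{\vw}\cdot\vx|\geq r\|\vx\|$ forces $\overline{\vw}\cdot\vx \geq r\|\vx\|$ or $\overline{\vw}\cdot\vx \leq -r\|\vx\|$ (and similarly for $\overline{\vu}\cdot\vx$), we get exactly four sub-regions. In each of them, Lemma \ref{lemma:clusterization effect on neurons} tells us that all $k$ of the $\vw$-neurons fire on the same linear piece of $\sigma$, and all $k$ of the $\vu$-neurons likewise fire on the same piece, so the Leaky ReLU can be replaced by the appropriate scalar $1$ or $\alpha$ uniformly across each sum.

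In the two ``matched'' cases, where $\overline{\vw}\cdot\vx$ and $\overline{\vu}\cdot\vx$ have the same sign, both sums in $N_{\vtheta}(\vx)$ are multiplied by the same factor (either $1$ or $\alpha$), so $N_{\vtheta}(\vx)=c\,(\overline{\vw}-\overline{\vu})\cdot\vx$ with $c\in\{vk, vk\alpha\}>0$, and the sign equality is immediate. In the two ``opposite-sign'' cases, the two sums are scaled by different factors; for instance if $\overline{\vw}\cdot\vx \geq r\|\vx\|>0$ and $\overline{\vu}\cdot\vx \leq -r\|\vx\|<0$, then $N_{\vtheta}(\vx)=vk\bigl(\overline{\vw}\cdot\vx - \alpha\,\overline{\vu}\cdot\vx\bigr)$, which is a sum of two strictly positive terms, hence $>0$; meanwhile $(\overline{\vw}-\overline{\vu})\cdot\vx \geq 2r\|\vx\|>0$, so the signs agree. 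The remaining sub-case is symmetric.

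I do not expect a genuine obstacle here; the content of the proposition is essentially that inside the intersection of the two ``clean'' cones for $\overline{\vw}$ and $\overline{\vu}$, the per-neuron activation patterns are locked in, so the network collapses to a linear function (possibly with different $1$-vs-$\alpha$ scalings on the two groups) whose sign is controlled by $(\overline{\vw}-\overline{\vu})\cdot\vx$. The only mildly subtle point is handling the opposite-sign cases, where $N_{\vtheta}$ is not literally proportional to $(\overline{\vw}-\overline{\vu})\cdot\vx$; there one checks positivity (resp.\ negativity) of both $N_{\vtheta}(\vx)$ and $(\overline{\vw}-\overline{\vu})\cdot\vx$ directly from the assumed strict bounds. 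The reason Leaky ReLU (rather than ReLU) matters is precisely that in the opposite-sign cases we still pick up a nonzero $\alpha$-scaled contribution from the ``dead'' side, keeping the sign computation well-defined and consistent with $(\overline{\vw}-\overline{\vu})\cdot\vx$.
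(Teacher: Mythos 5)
Your proposal is correct and follows essentially the same route as the paper: invoke Lemma \ref{lemma:clusterization effect on neurons} to lock in the activation pattern of each neuron group, split into the four sign combinations of $\overline{\vw}\cdot\vx$ and $\overline{\vu}\cdot\vx$ (the paper's regions $C_{+}^{+},C_{-}^{-},C_{+}^{-},C_{-}^{+}$), observe that the matched cases give $N_{\vtheta}(\vx)$ as a positive multiple of $(\overline{\vw}-\overline{\vu})\cdot\vx$, and check the two mixed cases by direct positivity/negativity of both quantities. The only cosmetic difference is that you index the cases by the sign of the cluster averages rather than by the (equivalent, via the lemma) uniform signs of the individual neurons.
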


\begin{proof}
    Under our clusterization assumption $\forall 1\leq j \leq k: \ \vw^{(j)} \in Ball(\overline{\vw},r) \land \vu^{(j)} \in Ball(\overline{\vu},r)$ so we can use \lemref{lemma:clusterization effect on neurons} and we are left with proving that $\forall \vx \in \R^{d}$ such that for the $\vw$ neurons $ \{[\forall 1 \leq j \leq k \ \  \vw^{(j)} \cdot \vx > 0] \lor [\forall 1 \leq j \leq k \ \ \vw^{(j)} \cdot \vx < 0]\}$ and for the $\vu$ neurons $\{[\forall 1 \leq j \leq k \ \  \vu^{(j)} \cdot \vx > 0] \lor [\forall 1 \leq j \leq k \ \ \vu^{(j)} \cdot \vx < 0]\}$ we get $\sgn{N_{\mW}(\vx)} = \sgn{(\overline{\vw}-\overline{\vu})\cdot \vx}$.

    We can represent $\{\vx \in \R^{d} | \ |\overline{\vw}\cdot \vx|  \geq r||\vx|| \land  |\overline{\vu}\cdot \vx|  \geq r||\vx|| \} $ as a union of $ \{C_{+}^{+},C_{-}^{-},C_{+}^{-},C_{-}^{+}\}$ where: 
    $$C_{+}^{+} = \{\vx \in \R^{d} | \ \ \forall 1\leq j \leq k \ \ \vw^{(j)} \cdot \vx > 0  \text{  and  } \forall 1\leq j \leq k \ \ \vu^{(j)} \cdot \vx > 0\}$$
    $$C_{-}^{-} = \{\vx \in \R^{d} | \ \ \forall 1\leq j \leq k \ \ \vw^{(j)} \cdot \vx < 0  \text{  and  } \forall 1\leq j \leq k \ \ \vu^{(j)} \cdot \vx < 0\}$$
    $$C_{+}^{-} = \{\vx \in \R^{d} | \ \ \forall 1\leq j \leq k \ \ \vw^{(j)} \cdot \vx > 0  \text{  and  } \forall 1\leq j \leq k \ \ \vu^{(j)} \cdot \vx < 0\}$$
    $$C_{-}^{+} = \{\vx \in \R^{d} | \ \ \forall 1\leq j \leq k \ \ \vw^{(j)} \cdot \vx < 0  \text{  and  } \forall 1\leq j \leq k \ \ \vu^{(j)} \cdot \vx > 0\}$$
    Now we will show that $\sgn{N_{\mW}(\vx)} = \sgn{(\overline{\vw}-\overline{\vu})\cdot \vx}$ in each region, from which the claim follows.
    \begin{enumerate}
        \item If $\vx \in C_{+}^{+}$ then $N_{\mW}(\vx) = v \left(\sum\limits_{j=1}^{k} \sigma(\vw^{(j)} \cdot \vx) - \sigma(\vu^{(j)} \cdot \vx) \right)  = v\left(\sum\limits_{j=1}^{k}\vw^{(j)}-\vu^{(j)}\right)\cdot \vx$ and therefore  $\sgn{N_{\mW}(\vx)}  = \sgn{(\overline{\vw} -\overline{\vu}) \cdot \vx}$.
        \item If $\vx \in C_{-}^{-}$ then $N_{\mW}(\vx) = v \left(\sum\limits_{j=1}^{k} \sigma(\vw^{(j)} \cdot \vx) - \sigma(\vu^{(j)} \cdot \vx) \right) = \alpha v \left(\sum\limits_{j=1}^{k}\vw^{(j)}-\vu^{(j)}\right)\cdot \vx$ and therefore  $\sgn{N_{\mW}(\vx)} = \sgn{(\overline{\vw} -\overline{\vu}) \cdot \vx}$
        \item If $\vx \in C_{+}^{-}$ then both $N_{\mW}(\vx) = v\left(\sum\limits_{j=1}^{k} \sigma(\vw^{(j)} \cdot \vx) - \sigma(\vu^{(j)} \cdot \vx) \right) = v\left(\sum\limits_{j=1}^{k} \vw^{(j)} \cdot \vx - \alpha \vu^{(j)} \cdot \vx\right) >0$ and $\overline{\vw} \cdot \vx -\overline{\vu} \cdot \vx >0$. Therefore, $\sgn{N_{\mW}(\vx)} = \sgn{(\overline{\vw} -\overline{\vu}) \cdot \vx}$.
        \item If $\vx \in C_{-}^{+}$ then both $N_{\mW}(\vx) = v\left(\sum\limits_{j=1}^{k} \sigma(\vw^{(j)} \cdot \vx) - \sigma(\vu^{(j)} \cdot \vx)\right) = v \left(\sum\limits_{j=1}^{k} \alpha \vw^{(j)} \cdot \vx - \vu^{(j)} \cdot \vx\right) <0$ and $\overline{\vw} \cdot \vx -\overline{\vu} \cdot \vx <0$. Therefore, $\sgn{N_{\mW}(\vx)} = \sgn{(\overline{\vw} -\overline{\vu}) \cdot \vx}$.
    \end{enumerate}
\end{proof}

We are left with proving $\sgn{N_{\mW}(\vx)} = \sgn{(\overline{\vw}-\overline{\vu})\cdot \vx}$ holds when exactly one condition holds ,i.e., either $|\overline{\vw} \cdot \vx | \geq r ||\vx||$ or $|\overline{\vu} \cdot \vx | \geq r ||\vx||$.
\begin{prop}
    \label{thm:one cone only linearity}
    
    $$\forall \vx \in \{\vx \in \R^{d}| \ \ |\overline{\vw}\cdot \vx| < r || \vx|| \land |\overline{\vu}\cdot \vx | \geq r ||\vx|| \land |(\overline{\vw}-\overline{\vu})\cdot \vx | \geq 2r ||\vx||\}: \ \sgn{N_{\mW}(\vx)} = \sgn{(\overline{\vw}-\overline{\vu})\cdot \vx}$$
    and similarly our decision boundary is linear for points in which our condition only holds for $\overline{\vw}$:
    $$\forall \vx \in \{\vx \in \R^{d}| \ \ |\overline{\vu}\cdot \vx| < r || \vx|| \land |\overline{\vw}\cdot \vx | \geq r ||\vx||\land |(\overline{\vw}-\overline{\vu})\cdot \vx | \geq 2r ||\vx||\}: \ \sgn{N_{\mW}(\vx)} = \sgn{(\overline{\vw}-\overline{\vu})\cdot \vx}$$
\end{prop}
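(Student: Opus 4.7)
The plan is to handle the boundary region in which only one of $\overline{\vw}$ and $\overline{\vu}$ is $r$-separated from the hyperplane orthogonal to $\vx$. The two halves of the proposition are symmetric under swapping the roles of the $\vw$ and $\vu$ neurons, so I focus on the first: fix $\vx$ with $|\overline{\vw}\cdot\vx|<r\|\vx\|$, $|\overline{\vu}\cdot\vx|\ge r\|\vx\|$, and $|(\overline{\vw}-\overline{\vu})\cdot\vx|\ge 2r\|\vx\|$.

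First I would split into four sub-cases according to $\sgn{(\overline{\vw}-\overline{\vu})\cdot\vx}$ and $\sgn{\overline{\vu}\cdot\vx}$. Using the identity $\overline{\vw}\cdot\vx = \overline{\vu}\cdot\vx + (\overline{\vw}-\overline{\vu})\cdot\vx$, the sub-cases in which these two quantities share a common sign force $|\overline{\vw}\cdot\vx| \ge 3r\|\vx\|$, contradicting the strict bound $|\overline{\vw}\cdot\vx|<r\|\vx\|$. Only the two opposite-sign sub-cases are feasible, and in each of them I must show that $\sgn{N_{\vtheta}(\vx)}$ matches $\sgn{(\overline{\vw}-\overline{\vu})\cdot\vx}$.

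In a feasible sub-case, Lemma \ref{lemma:clusterization effect on neurons} applied to the $\vu$-cluster gives that every $\vu^{(j)}\cdot\vx$ shares the sign of $\overline{\vu}\cdot\vx$, so the Leaky-ReLU acts linearly on every $\vu^{(j)}$ and $\sum_j \sigma(\vu^{(j)}\cdot\vx)$ is either $k\overline{\vu}\cdot\vx$ or $\alpha k\overline{\vu}\cdot\vx$. The $\vw$-neurons may individually have either sign, but the decomposition $\sigma(t)=\alpha t + (1-\alpha)t^+$ (used when the common $\vu$-sign is negative) or $\sigma(t) = t - (1-\alpha)(-t)^+$ (used when the common $\vu$-sign is positive) splits $\sum_j \sigma(\vw^{(j)}\cdot\vx)$ into a linear piece in $\overline{\vw}\cdot\vx$ plus a correction of fixed sign. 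Combining with the purely linear $\vu$-contribution, the linear pieces collapse to $c\, k(\overline{\vw}-\overline{\vu})\cdot\vx$ for some $c\in\{\alpha,1\}$, and the correction carries the same sign as $(\overline{\vw}-\overline{\vu})\cdot\vx$. The hypothesis $|(\overline{\vw}-\overline{\vu})\cdot\vx|\ge 2r\|\vx\|$ then makes the linear part strictly dominate in sign, giving $\sgn{N_{\vtheta}(\vx)} = \sgn{(\overline{\vw}-\overline{\vu})\cdot\vx}$.

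The main obstacle is selecting the correct decomposition of $\sigma$ on the $\vw$ side in each sub-case: the wrong decomposition would produce a correction term with the opposite sign, which could in principle cancel the dominant linear term and destroy the argument. Once the decomposition is matched to the sign of the $\vu$-cluster, the proof reduces to a short computation in the spirit of Proposition \ref{thm:intersection between cones linearity}. The symmetric half of the statement (with $\vw$ and $\vu$ exchanged in the hypothesis) follows by repeating the argument with $\overline{\vw}$ playing the controlled role and the two $\sigma$-decompositions now applied on the $\vu$ side instead.
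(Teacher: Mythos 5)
Your overall strategy is sound and is essentially the paper's: use Lemma \ref{lemma:clusterization effect on neurons} to make the $\vu$-sum exactly linear, split the $\vw$-sum into a linear piece plus a rectified correction, and let the hypothesis $|(\overline{\vw}-\overline{\vu})\cdot\vx|\ge 2r\|\vx\|$ control the correction. Your observation that the identity $\overline{\vw}\cdot\vx=\overline{\vu}\cdot\vx+(\overline{\vw}-\overline{\vu})\cdot\vx$ rules out the two same-sign sub-cases is a genuine simplification over the paper, which instead works through one vacuous sub-case and disposes of the other feasible one ($\overline{\vu}\cdot\vx\le-r\|\vx\|$) by a separate contradiction lemma rather than by your direct computation.

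However, there is a concrete error in the sub-case where the common $\vu$-sign is positive (so the target sign is negative). The decomposition you invoke there is not an identity: $t-(1-\alpha)(-t)^+$ equals $(2-\alpha)t$ for $t<0$, not $\alpha t$; the correct form is $\sigma(t)=t+(1-\alpha)(-t)^+$. With the sign fixed, the correction $(1-\alpha)\sum_j(-\vw^{(j)}\cdot\vx)^+$ is nonnegative, i.e.\ it \emph{opposes} the negative linear term $k(\overline{\vw}-\overline{\vu})\cdot\vx\le-2rk\|\vx\|$, so your claim that ``the correction carries the same sign as $(\overline{\vw}-\overline{\vu})\cdot\vx$'' fails in this sub-case and the conclusion is not automatic. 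To close the gap you need the bound that you never state: from $\|\vw^{(j)}-\overline{\vw}\|\le r$ and $|\overline{\vw}\cdot\vx|<r\|\vx\|$ one gets $|\vw^{(j)}\cdot\vx|<2r\|\vx\|$ for every $j$, hence the correction is at most $(1-\alpha)\cdot 2rk\|\vx\|<2rk\|\vx\|$ and the linear term strictly dominates. This is exactly the estimate the paper carries out. (In the other feasible sub-case your decomposition $\sigma(t)=\alpha t+(1-\alpha)t^+$ is correct and the correction really is aligned with the positive target, so no domination is needed there.) The symmetric half of the proposition goes through in the same way once the same sign fix is made on the $\vu$ side.
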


\begin{proof}
    We start with the domain $\{\vx \in \R^{d}| \ \ |\overline{\vw}\cdot \vx| < r || \vx|| \land |\overline{\vu}\cdot \vx | \geq r ||\vx|| \land |(\overline{\vw}-\overline{\vu})\cdot \vx | \geq 2r ||\vx||\}$
    
    i.e. our condition only holds for $\overline{\vu}$.
    
    There are two cases, and we'll prove the result for each of them:
    
    \underline{If $\overline{\vu} \cdot \vx \geq r ||\vx||$:}
    
    In this case $N_{\mW}(\vx) = v\left(\sum\limits_{j=1}^{k} \sigma(\vw^{(j)} \cdot \vx) - \sigma(\vu^{(j)} \cdot \vx)\right) = v\left(\sum\limits_{j=1}^{k} \sigma(\vw^{(j)} \cdot \vx) - k \overline{\vu} \cdot \vx\right)$.
    
    Next, for any $\vx$ in the domain, we'll denote $J_{+}^{w}(\vx) \coloneqq \{j | \vw^{(j)}\cdot \vx>0\}$ and $k_{+}^{w}(\vx) \coloneqq |J_{+}^{w}(\vx)|$ similarly $J_{-}^{w}(\vx) = \{j | \vw^{(j)}\cdot \vx<0\}$ and $k_{-}^{w}(\vx) \coloneqq |J_{-}^{w}(\vx)|$.
    Using these definitions, our network has the following form:
    
    $$N_{\mW}(\vx) = v\left(\sum\limits_{j_{+}\in J_{+}^{w}(\vx)} \vw^{(j_{+})} \cdot \vx +\alpha \sum\limits_{j_{-}\in J_{-}^{w}(\vx)} \vw^{(j_{-})} \cdot \vx -k\overline{\vu}\cdot \vx\right) =v\left(k\overline{\vw}\cdot \vx -k\overline{\vu}\cdot \vx +(\alpha-1)\sum\limits_{j_{-}\in J_{-}^{w}(\vx)} \vw^{(j_{-})} \cdot \vx\right)$$
    
    Next, we bound $\forall j \ |\vw^{(j)} \cdot \vx| =|(\vw^{(j)}-\overline{\vw}+\overline{\vw})\cdot \vx| \leq ||\vw^{(j)}-\overline{\vw}||\cdot ||\vx|| +|\overline{\vw}\cdot \vx | <2r||\vx||$
    where we used $||\vw^{(j)}-\overline{\vw}||<r$ and $|\overline{\vw}\cdot \vx| <r ||\vx||$.
    
    Now, if $(\overline{\vw}-\overline{\vu})\cdot \vx  \geq 2r ||\vx||>0$ we get that $N_{\mW}(\vx) = v\left(k(\overline{\vw}-\overline{\vu})\cdot \vx -(1-\alpha)\sum\limits_{j_{-}\in J_{-}^{w}(\vx)} \vw^{(j_{-})} \cdot \vx\right) >v\left(2r ||\vx||k - 2r ||\vx||k_{-}^{w}(\vx)(1-\alpha)\right)>0$ since $(1-\alpha)<1$ and $k_{-}^{w}(\vx) \leq k$
    and therefore $\sgn{N_{\mW}(\vx)} = \sgn{(\overline{\vw}-\overline{\vu})\cdot \vx} = 1$ for this case.
    
    If $(\overline{\vw}-\overline{\vu})\cdot \vx  \leq -2r ||\vx||<0$ we get that
    $N_{\mW}(\vx) = v\left(k(\overline{\vw}-\overline{\vu})\cdot \vx -(1-\alpha)\sum\limits_{j_{-}\in J_{-}^{w}(\vx)} \vw^{(j_{-})} \cdot \vx\right) <v\left(-2r ||\vx||k + 2r ||\vx||k_{-}^{w}(\vx)(1-\alpha)\right)<0$
    since $(1-\alpha)<1$ and $k_{-}^{w}(\vx) \leq k$. Therefore, we get that $\sgn{N_{\mW}(\vx)} = \sgn{(\overline{\vw}-\overline{\vu})\cdot \vx} = -1$ in this case.
    
    At any rate, we have shown that 
    $\forall \vx \in \{\vx \in \R^{d}| \ \ |\overline{\vw}\cdot \vx| < r || \vx|| \land \overline{\vu}\cdot \vx  \geq r ||\vx|| \land |(\overline{\vw}-\overline{\vu})\cdot \vx | \geq 2r ||\vx||\}: \ \sgn{N_{\mW}(\vx)} = \sgn{(\overline{\vw}-\overline{\vu})\cdot \vx}$.

    \vspace{5mm}
    \underline{If $\overline{\vu} \cdot \vx \leq -r ||\vx||$:}
    
    \vspace{5mm}
    First, we notice that $(\overline{\vw}-\overline{\vu})\cdot \vx > -r ||\vx|| +r ||\vx|| =0$ so $\sgn{(\overline{\vw}-\overline{\vu})\cdot \vx}=1$ again we use \lemref{lemma:clusterization effect on neurons} and from our assumption $\overline{\vu} \cdot \vx \leq -r ||\vx||$ we have  $\forall 1 \leq j \leq k \ \vu^{(j)} \cdot \vx < 0$ and we can see that our network takes the form:
    $N_{\mW}(\vx) = v\left(\sum\limits_{j=1}^{k} \sigma(\vw^{(j)} \cdot \vx) - \sigma(\vu^{(j)} \cdot \vx)\right) = v\left(\sum\limits_{j=1}^{k} \sigma(\vw^{(j)} \cdot \vx) - \alpha \cdot k \overline{\vu} \cdot \vx\right) \geq v\left(\sum\limits_{j=1}^{k} \sigma(\vw^{(j)} \cdot \vx) + \alpha k r ||\vx||\right)$. Next, we prove the following lemma:
    
    \begin{lem}
        \label{lem:alpha greater than sum}
        If $|\overline{\vw} \cdot \vx | < r ||\vx||$ then $\alpha \cdot k \cdot r ||\vx|| > -\sum\limits_{j=1}^{k} \sigma(\vw^{(j)} \cdot \vx)$.
    \end{lem}
    \begin{proof}
        Let's assume by contradiction that $-\sum\limits_{j=1}^{k} \sigma(\vw^{(j)} \cdot \vx)  \geq \alpha \cdot k \cdot r ||\vx||$.
        We notice that regardless of the sign of the dot product $\forall j: -\sigma(\vw^{(j)} \cdot \vx) \leq -\alpha \vw^{(j)} \cdot \vx$ so we have
        $-\alpha \sum\limits_{j=1}^{k} \vw^{(j)} \cdot \vx \geq -\sum\limits_{j=1}^{k} \sigma(\vw^{(j)} \cdot \vx) \geq \alpha \cdot k \cdot r ||\vx||$,
        which leads to $-\alpha k \overline{\vw} \cdot \vx \geq \alpha \cdot k \cdot r ||\vx||$ (where we used the definition of $\overline{\vw}$)
        finally we reach $\overline{\vw} \cdot \vx \leq -r ||\vx||$. This contradicts $|\overline{\vw} \cdot \vx | < r ||\vx||$.
    \end{proof}
    Therefore, we have $-\sum\limits_{j=1}^{k} \sigma(\vw^{(j)} \cdot \vx) < \alpha \cdot k \cdot r ||\vx||$ and $\sgn{N_{\mW}(\vx)} = \sgn{(\overline{\vw}-\overline{\vu})\cdot \vx)} = 1$ as desired.
    
    To conclude we proved that $\forall \vx \in \{\vx \in \R^{d}| \ \ |\overline{\vw}\cdot \vx| < r || \vx|| \land |\overline{\vu}\cdot \vx |  \geq r ||\vx|| \land |(\overline{\vw}-\overline{\vu})\cdot \vx | \geq 2r ||\vx||\}, \ \sgn{N_{\mW}(\vx)} = \sgn{(\overline{\vw}-\overline{\vu})\cdot \vx}$.
    
    \vspace{10mm}
    Next we look at $\forall \vx \in \{\vx \in \R^{d}| \ \ |\overline{\vu}\cdot \vx| < r || \vx|| \land |\overline{\vw}\cdot \vx |  \geq r ||\vx|| \land |(\overline{\vw}-\overline{\vu})\cdot \vx | \geq 2r ||\vx||\}$
    and through a similar derivation of two cases we will prove that $\sgn{N_{\mW}(\vx)} = \sgn{(\overline{\vw}-\overline{\vu})\cdot \vx}$.

    \vspace{5mm}
    \underline{If $\overline{\vw} \cdot \vx \geq r ||\vx||$:}
    
    \vspace{5mm}
    Through a similar derivation for the case of $\overline{\vu} \cdot \vx \geq r ||\vx||$, our network has the following form:
    \begin{align*}
        N_{\mW}(\vx) &= v\left(\sum\limits_{j=1}^{k} \sigma(\vw^{(j)} \cdot \vx) - \sigma(\vu^{(j)} \cdot \vx)\right) = v\left(k\overline{\vw}\cdot \vx -\sum\limits_{j=1}^{k} \sigma(\vu^{(j)}\cdot \vx) \right) \\&= vk\overline{\vw}\cdot \vx - v\left(\sum\limits_{j_{+}\in J_{+}^{u}(\vx)} \vu^{(j_{+})} \cdot \vx +\sum\limits_{j_{-}\in J_{-}^{u}(\vx)} \alpha \vu^{(j_{-})} \cdot \vx \right) \\&= vk\overline{\vw}\cdot \vx - v\left(\sum\limits_{j_{+}\in J_{+}^{u}(\vx)} \vu^{(j_{+})} \cdot \vx +\sum\limits_{j_{-}\in J_{-}^{u}(\vx)} \vu^{(j_{-})} \cdot \vx +(\alpha-1)\sum\limits_{j_{-}\in J_{-}^{u}(\vx)} \vu^{(j_{-})} \cdot \vx \right) \\&= v\left(k\overline{\vw}\cdot \vx -k\overline{\vu}\cdot \vx +(1-\alpha)\sum\limits_{j_{-} \in J_{-}^{u}(\vx)} \vu^{(j_{-})}\cdot \vx\right)   
    \end{align*}
    where $J_{-}^{u}(\vx) \coloneqq \{j| \vu^{(j)}\cdot\vx<0\}, J_{+}^{u}(\vx) \coloneqq \{j| \vu^{(j)} \cdot \vx >0\}$ and $k_{-}^{u}(\vx) = |J_{-}^{u}(\vx)|, k_{+}^{u}(\vx) = |J_{+}^{u}(\vx)|$.
    
    If $(\overline{\vw}-\overline{\vu})\cdot \vx \geq 2r ||\vx||>0$ then $N_{\mW}(\vx) = v\left(k(\overline{\vw}-\overline{\vu})\cdot \vx +(1-\alpha)\sum\limits_{j_{-} \in J_{-}^{u}(\vx)} \vu^{(j_{-})}\cdot \vx\right) \geq v\left(2kr ||\vx|| -2r ||\vx||(1-\alpha)k_{-}^{u}(\vx)\right)>0$ (because $(1-\alpha)<1$ and $k_{-}^{u}(\vx)\leq k$)
    and $\sgn{N_{\mW}(\vx)} = \sgn{(\overline{\vw}-\overline{\vu})\cdot \vx} = 1$ (where we used the fact that $\forall j : |\vu^{(j)}\cdot \vx |<2r ||\vx||$ which follows from $|\overline{\vu} \cdot \vx | <r ||\vx||$ and $||\vu^{(j)}-\overline{\vu}|| <r$).

    If $(\overline{\vw}-\overline{\vu})\cdot \vx \leq -2r ||\vx||<0$ we get that $N_{\mW}(\vx) \leq v\left(-2r||\vx||k+2r ||\vx|| (1-\alpha)k_{-}^{u}(\vx)\right)<0$ and $\sgn{N_{\mW}(\vx)} = \sgn{(\overline{\vw}-\overline{\vu})\cdot \vx} = 1$.
    
    To summarize, we showed that $\forall \vx \in \{\vx \in \R^{d}| \ \ |\overline{\vu}\cdot \vx| < r || \vx|| \land \overline{\vw}\cdot \vx   \geq r ||\vx|| \land |(\overline{\vw}-\overline{\vu})\cdot \vx | \geq 2r ||\vx||\} $, $\sgn{N_{\mW}(\vx)} = \sgn{(\overline{\vw}-\overline{\vu})\cdot \vx}$.
    
    \vspace{5mm}
    \underline{If $\overline{\vw} \cdot \vx \leq -r ||\vx||$:}
    
    \vspace{5mm}
    We again use \lemref{lemma:clusterization effect on neurons} which yields from $\overline{\vw} \cdot \vx \leq -r ||\vx||$ that  $\forall 1 \leq j \leq k \ \vw^{(j)} \cdot \vx < 0$ and we can see that our network takes the form:
    
    $$N_{\mW}(\vx) = v\left(\sum\limits_{j=1}^{k} \sigma(\vw^{(j)} \cdot \vx) - \sigma(\vu^{(j)} \cdot \vx)\right) =  \alpha k v \overline{\vw} \cdot \vx  - v\left(\sum\limits_{j=1}^{k} \sigma(\vu^{(j)} \cdot \vx)\right) \leq v\left(-\alpha k  r ||\vx|| - \sum\limits_{j=1}^{k} \sigma(\vu^{(j)} \cdot \vx)\right)$$
    If $-\sum\limits_{j=1}^{k} \sigma(\vu^{(j)} \cdot \vx) < \alpha k r ||\vx||$ we have $\sgn{N_{\mW}(\vx)} = \sgn{(\overline{\vw}-\overline{\vu})\cdot \vx} = -1$ as desired.
    
    The same contradiction proof from $\overline{\vu} \cdot \vx \leq -r ||\vx||$ segment above (\lemref{lem:alpha greater than sum}) would show
    
    $-\sum\limits_{j=1}^{k} \sigma(\vu^{(j)} \cdot \vx) < \alpha \cdot k \cdot r ||\vx||$ (just exchange $w$ and $u$) and we'll get $\sgn{N_{\mW}(\vx)} = \sgn{(\overline{\vw}-\overline{\vu})\cdot \vx} = -1$.
    
    \vspace{10 mm}
    
    Finally, we proved that 
    $$\forall \vx \in \{\vx \in \R^{d}| \ \ |\overline{\vw}\cdot \vx| < r || \vx|| \land |\overline{\vu}\cdot \vx | \geq r ||\vx|| \land |(\overline{\vw}-\overline{\vu})\cdot \vx | \geq 2r ||\vx||\} \ \sgn{N_{\mW}(\vx)} = \sgn{(\overline{\vw}-\overline{\vu})\cdot \vx}$$
    and that
    $$\forall \vx \in \{\vx \in \R^{d}| \ \ |\overline{\vu}\cdot \vx| < r || \vx|| \land |\overline{\vw}\cdot \vx | \geq r ||\vx||\land |(\overline{\vw}-\overline{\vu})\cdot \vx | \geq 2r ||\vx||\} \ \sgn{N_{\mW}(\vx)} = \sgn{(\overline{\vw}-\overline{\vu})\cdot \vx}$$ as required.
\end{proof}

We can now combine \corref{cor:margin on difference to individual}, Proposition \ref{thm:intersection between cones linearity} and Proposition \ref{thm:one cone only linearity} and prove \thmref{thm:difference between clustered leaky ReLus}:

We have $\forall \vx \in \R^{d} \text{ s.t }|(\overline{\vw}-\overline{\vu}) \cdot \vx | \geq 2r ||\vx||$ then $|\overline{\vw}\cdot \vx| \geq r ||\vx|| \lor |\overline{\vu}\cdot \vx| \geq r ||\vx||$. If $\vx$ is such that $|\overline{\vw}\cdot \vx | \geq r ||\vx|| \land |\overline{\vu}\cdot \vx | \geq r ||\vx||$ we can use Proposition (\ref{thm:intersection between cones linearity}) and get $\sgn{N_{\mW}(\vx)} = \sgn{(\overline{\vw}-\overline{\vu})\cdot \vx}$.

If only one condition holds i.e. $\vx \in \{\vx \in \R^{d}| \ \ |\overline{\vw}\cdot \vx| < r || \vx|| \land |\overline{\vu}\cdot \vx | \geq r ||\vx|| \land |(\overline{\vw}-\overline{\vu})\cdot \vx | \geq 2r ||\vx||\}$ or $\vx \in \{\vx \in \R^{d}| \ \ |\overline{\vu}\cdot \vx| < r || \vx|| \land |\overline{\vw}\cdot \vx | \geq r ||\vx||\land |(\overline{\vw}-\overline{\vu})\cdot \vx | \geq 2r ||\vx||\}$ then we can use Proposition (\ref{thm:one cone only linearity}) and get $\sgn{N_{\mW}(\vx)} = \sgn{(\overline{\vw}-\overline{\vu})\cdot \vx}$.

Therefore, overall for $|(\overline{\vw}-\overline{\vu})\cdot \vx| \geq 2r||\vx||$ we get $\sgn{N_{\mW}(\vx)} = \sgn{(\overline{\vw}-\overline{\vu})\cdot \vx}$ as required.
\subsection{Proof of Corollary \ref{cor:perfectly_clustered_linear}}

Since the network is perfectly clustered, the corollary follows by Proposition (\ref{thm:intersection between cones linearity}) with $r=0$.


\section{Additional Experiments - Linear Decision Boundary}
In this section we provide additional empirical evaluations of the decision boundary that SGD converges to in our setting.
\subsection{Leaky ReLU vs ReLU decision boundary}
\thmref{thm:difference between clustered leaky ReLus} addresses the case of Leaky ReLU activation. Here we show that the result is indeed not true for ReLU networks. We compare two perfectly clustered networks (i.e., each with two neurons) one with a Leaky ReLU activation and the other with a ReLU activation. \figref{fig:ReLU vs Leaky ReLU comparison} shows a decision boundary for a two neuron network, in the case of Leaky ReLU (\figref{fig:Leaky ReLU network - Linear Decision Boundary}) and ReLU (\figref{fig:ReLU network - Non Linear Decision Boundary}). It can be seen that the leaky ReLU indeed provides a linear decision boundary, as predicted by Theorem \ref{thm:difference between clustered leaky ReLus}, whereas the ReLU case is non-linear (we explicitly show the regime where the network output is zero. This can be orange or blue, depending on whether zero is given label positive or negative. In any case the resulting boundary is non-linear).
\begin{figure*}[h!]
    \centering
    \begin{subfigure}[b]{0.475\textwidth}
        \centering
        \includegraphics[scale=0.45]{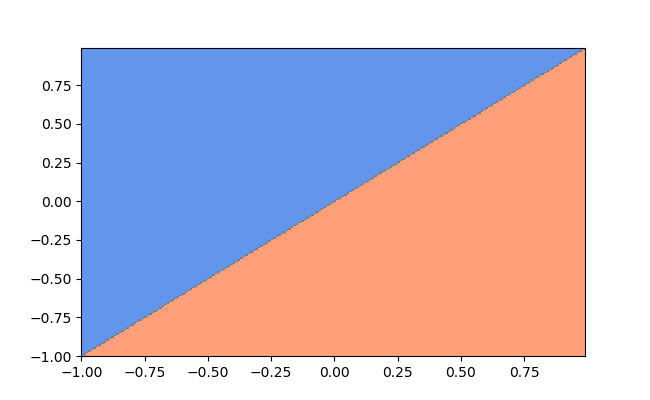}
        \caption[]%
        {Leaky ReLU network - Linear Decision Boundary}    
        \label{fig:Leaky ReLU network - Linear Decision Boundary}
    \end{subfigure}
    \hfill
    \begin{subfigure}[b]{0.475\textwidth}  
        \centering 
        \includegraphics[scale=0.45]{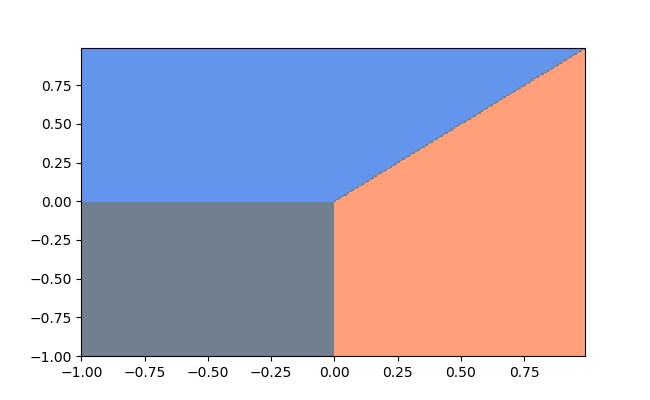}
        \caption[]%
        {ReLU network - Non Linear Decision Boundary}    
        \label{fig:ReLU network - Non Linear Decision Boundary}
    \end{subfigure}
    \caption[ ]{
    The prediction landscape for two neuron networks with Leaky ReLU and ReLU activations. Orange for positive prediction, blue for a negative prediction and grey for zero prediction. The $\vw$ neuron is $(1,0) \in \R^{2}$ and the $\vu$ neuron is $(0,1) \in \R^{2}$.}
    \label{fig:ReLU vs Leaky ReLU comparison}
\end{figure*}


\subsection{MNIST - Linear Regime}
In \figref{fig:Clusterization fig} in the main text we saw how for MNIST digit pairs (0,1) and (3,5) the network enters the linear regime at some point in the training process. In \figref{fig:MNIST linear regime - in depth analysis} we see the robustness of this behavior across the MNIST data-set by showing the above holds for more pairs of digits.
\begin{figure}[h!]
    \centering
        \includegraphics[scale=0.5]{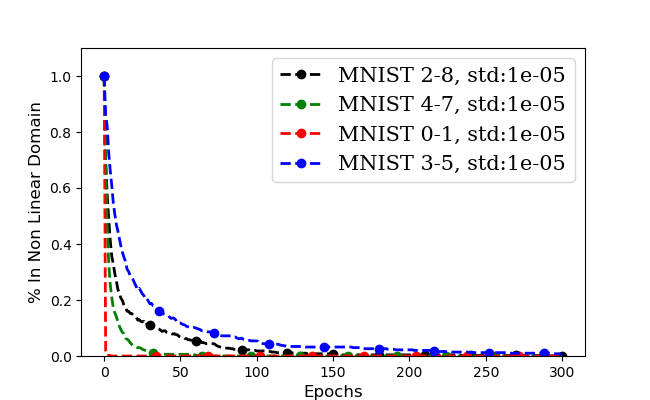}
    \caption[ ]{Convergence to a classifier that is linear on the data, for MNIST pairs. Each line corresponds to an average over 5 initializations.}
    \label{fig:MNIST linear regime - in depth analysis}
\end{figure}

\subsection{Clustering of Neurons - Empirical Evidence}
In Section \ref{sec:quasi_linear} in the main text and \figref{fig:MNIST linear regime - in depth analysis} above, we saw that learning converges to a linear decision boundary on the train and test points. \thmref{thm:difference between clustered leaky ReLus} suggests that this will happen if neurons are well clustered (in the $\vw$ and $\vu$ groups). Here we show that indeed clustering occurs.

We consider two different measures of clustering. The first is the ratio $\frac{r}{||\overline{\vw}-\overline{\vu}||}$, and the second is the maximum angle between the neurons of the same type (i.e., the maximal angle between vectors in the same cluster). \figref{fig:Neural Clusterization} shows these two measures as a function of the training epochs. They can indeed be seen to converge to zero, which by \thmref{thm:difference between clustered leaky ReLus} implies convergence to a linear decision boundary.

\begin{figure}[h!]
    \centering
    \begin{subfigure}[b]{0.475\textwidth}
        \includegraphics[scale=0.5]{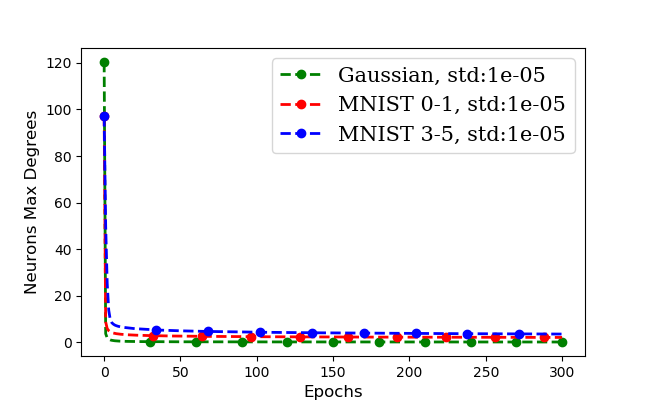}
        \caption[]%
        {Max Angle In Same Cluster Neurons} 
        \label{fig:Neuron Clusterization - Weights Angle}
    \end{subfigure}
    \begin{subfigure}[b]{0.475\textwidth}
        \includegraphics[scale=0.5]{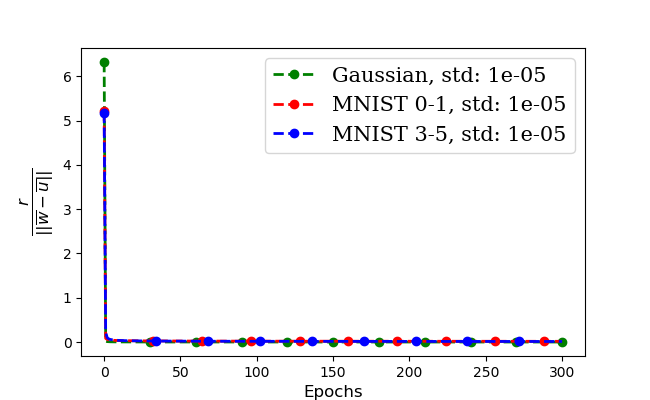}
        \caption[]%
        {$\frac{r}{||\overline{\vw}-\overline{\vu}||}$}    
        \label{fig:Neuron Clusterization - radius to norm}
    \end{subfigure}
    \caption[ ]{Evaluation of clustering measures during training. We consider two different clustering measures in (a) and (b) (see text). 
    It can be seen that both measures converge to zero.}
    \label{fig:Neural Clusterization}
\end{figure}

\section{Assumptions for Gradient Flow Analysis}
\label{sec:PAPERS ASSUMPTIONS}
In the paper we use results from \cite{lyu2019gradient} and \cite{ji2020directional}. Here we show that the assumptions required by these theorems are satisfied in our setup.

The assumptions in \cite{lyu2019gradient} and \cite{ji2020directional} are:
\begin{itemize}
    \item[(\textbf{A1})]. (Regularity). For any fixed $\displaystyle \vx, \Phi(\cdot ;\vx)$ is locally Lipschitz and admits a chain rule;
    \item[(\textbf{A2})]. (Homogeneity). There exists $L>0$ such that $\displaystyle \forall \alpha >0 : \Phi(\alpha \mW ;\vx) = \alpha^{L}\Phi(\mW;\vx);$
    \item[(\textbf{B3})]. The loss function $\ell(q)$ can be expressed as $\ell(q) = e^{-f(q)}$ such that
    \begin{itemize}
        \item[(B3.1).] $f : \R \rightarrow \R$ is $\displaystyle \mathcal{C}^{1}$-smooth.
        \item[(B3.2).] $f'(q)>0$ for all $\displaystyle q \in \R$.
        \item[(B3.3).] There exists $b_{f} \geq 0$ such that $\displaystyle f'(q)q$ is non-decreasing for $q \in (b_{f},+\infty)$, and $f'(q)q \rightarrow +\infty$ as $q \rightarrow + \infty$.
        \item [(B3.4).] Let $g : [f(b_{f}),+\infty) \rightarrow [b_{f},+\infty)$ be the inverse function of $f$ on the domain $[b_{f},+\infty).$ There exists $b_{g} \geq max\{2f(b_f),f(2b_{f})\}, K\geq 1$ such that $\displaystyle g'(x) \leq Kg'(\theta x)$ and $f'(y) \leq Kf'(\theta y)$ for all $x \in (b_{g},+\infty), y \in (g(b_{g}),+\infty)$ and $\theta \in [1/2,1)$
    \end{itemize}
    \item[(\textbf{B4}).] (Separability). There exists a time $t_{0}$ such that $\mathcal{L}(\mW) <e^{-f(b_{f})} = \ell(b_{f})$
\end{itemize}
We next show that these are satisfied in our setup.
\begin{proof}
\begin{itemize}
\label{assumptions:ICLR paper assumptions proof}
    \item[(\textbf{A1}).] (Regularity) first we show that $\Phi(\cdot;\vx)$ is locally Lipschitz, with slight abuse of notations, let $\mW_{1} =\overrightarrow{\mW}_{1}, \mW_{2} = \overrightarrow{\mW}_{2} \in \R^{2kd}$ so in our case:
    \begin{align*}
        &\Phi(\mW_{1};\vx) - \Phi(\mW_{2};\vx) = \vv \cdot \sigma(\mW_{1}\cdot \vx) - \vv \cdot \sigma(\mW_{2} \cdot \vx) \\ &=  v \left[\sum\limits_{j=1}^{k} \sigma\left(\vw_{1}^{(j)} \cdot \vx\right) - \sigma\left(\vw_{2}^{(j)} \cdot \vx\right) - \left(\sigma\left(\vu_{1}^{(j)} \cdot \vx \right) - \sigma\left(\vu_{2}^{(j)} \cdot \vx \right)\right)\right]
    \end{align*}
    and therefore
    \begin{align*}
        &||\Phi(\mW_{1};\vx)- \Phi(\mW_{2};\vx)|| \\& =  \left| \left| v \left[\sum\limits_{j=1}^{k} \sigma\left(\vw_{1}^{(j)} \cdot \vx \right) - \sigma\left(\vw_{2}^{(j)}\cdot \vx \right) - \left(\sigma\left(\vu_{1}^{(j)} \cdot \vx\right) - \sigma\left(\vu_{2}^{(j)} \cdot \vx\right)\right)\right]\right| \right|  \\  &\leq v\left[\sum\limits_{j=1}^{k} \left|\left|\sigma\left(\vw_{1}^{(j)} \cdot \vx \right) - \sigma\left(\vw_{2}^{(j)} \cdot \vx \right)\right|\right| +  \left|\left|\sigma\left(\vu_{1}^{(j)} \cdot \vx \right) - \sigma\left(\vu_{2}^{(j)} \cdot \vx \right) \right|\right| \right]  \\ &\leq 2v ||\vx||\left[\sum\limits_{j=1}^{k} ||\vw_{1}^{(j)} - \vw_{2}^{(j)}|| + ||\vu_{1}^{(j)}-\vu_{2}^{(j)}||\right] =2v \cdot||\vx||\cdot ||\overrightarrow{\mW}_{1}-\overrightarrow{\mW}_{2}||
    \end{align*}
     And we showed $\Phi(\cdot;\vx)$ is globally Lipschitz (and therfor locally Lispchitz). Next for the chain rule,
     as shown in \cite{davis2018stochastic}  (corollary for deep learning therein), any function definable in an o-minimal structure admits a chain rule. Our network is definable because algebraic, composition, inverse, maximum and minimum operations over definable functions are also definable. Leaky ReLUs are definable as maximum operations over two linear functions (linear functions are definable).and because Leaky ReLUs are definable our network is also definable.
    \item[(\textbf{A2}).] (Homogeneity). It is easy to see from the definition that in our case, the trainable parameters are only the first layer weights and the network $\displaystyle \Phi(\cdot ; \vx)$ is $L=1$ homogeneous.
    \item[(\textbf{B3}).] As seen in \citet{lyu2019gradient} ({Remark A.2.} therein) the logistic loss $\displaystyle \ell(q) = \log(1+e^{-q})$ satisfies $\displaystyle (\textbf{B3})$ with $\displaystyle f(q) = -\log\left(\log(1+e^{-q})\right), g(q) = -\log\left(e^{e^{-q}}-1\right),b_f=0$.
    \item[(\textbf{B4}).] (Separability). This is Assumption \ref{assump:late_phase} in the main text. As we mentioned in the main text, this assumption is satisfied with SGD by \thmref{thm:Risk Convergence}.
\end{itemize}
\end{proof}

\section{Proof of Theorem \ref{thm:Neural Alignment}} 
\label{sec:proof NAR to alignment}
In this proof we will show that the normalized parameters $\hat{\mW}_{t} \coloneqq \frac{\mW_{t}}{|| \mW_{t} ||}$ under gradient flow optimization, converges to a solution in $\gN$ and that the network $N_{\hat{\mW}}$ at convergence is perfectly clustered.
Under our assumption $\forall t \geq T_{NAR}$ $\hat{\mW}_{t} \in \gN$. From the definition of the NAR it's easy to see that the NAR is a closed domain. Therefore any limit point of $\hat{\mW}_{t}$ is also in the NAR.
From  \citet{ji2020directional} (Theorem 3.1. therein) we have that the normalized parameters flow converges when using gradient flow.
To conclude so far, we had shown that $\hat{\mW}_{t}$ converges to a point inside the NAR $\gN$.

We are left with showing that the limit point of $\underset{t \rightarrow \infty}{\lim} \hat{\mW}_{t} \coloneqq \hat{\mW}_{*}$ has a perfectly clustered form.

\citet{lyu2019gradient} (Theorem A.8. therein) shows that every limit point of $\hat{\mW}_{t}$ is along the direction of a KKT point of the following optimization problem (P):
\begin{align*}
    \displaystyle &  \min \frac{1}{2} || \mW ||_{2}^{2}
  \\
     \text{s.t.}  \quad & q_{i}(\mW) \geq 1 \quad \quad \forall i \in [n]
\end{align*}

where $q_{i}(\mW) = y_{i}N_{\mW}(\vx_{i})$ is the network margin on the sample point $(y_{i},\vx_{i})$.\footnote{It is not hard to see that given that the solution is in an NAR, then this optimization problem is convex.}

We are left with showing that at convergence the neurons align in two directions. We will use a characterization of the KKT points of  \hyperref[KKT problem]{(P)} and show that they are perfectly clustered. Since every limit point of the normalized parameters flow is along the direction of a KKT point of \hyperref[KKT problem]{(P)} that would mean $\hat{\mW}_{*}$ has a perfectly clustered form.

A feasible point $\mW$ of (P) is a KKT point if there exist $\lambda_{1},\dots, \lambda_n \geq 0$ such that:
\begin{enumerate}
    \item $\mW - \sum\limits_{i=1}^{n} \lambda_{i}\vh_{i} = 0 $ for some $\vh_{1},\dots,\vh_{n}$ satisfying $\vh_{i} \in \partial^{\circ}q_{i}(\mW)$
    \item $\forall i \in [n]: \lambda_{i}(q_{i}(\mW)-1)=0$
\end{enumerate}

From \citet{lyu2019gradient} (Theorem A.8. therein) we know $\exists \beta$ s.t. $\beta \hat{\mW}_{*}$ is a KKT point of \hyperref[KKT problem]{(P)}. Since our limit point is in an NAR we don't need to worry about the non differential points of the network because $\forall 1 \leq j \leq k,i\in[n]: \vw^{(j)}_{*} \cdot \vx_{i} \not = 0 \land \vu^{(j)}_{*} \cdot \vx_{i} \not=0$. (where $\vw^{(j)}_{*}$ and $\vu^{(j)}_{*}$ stands for the $\vw$ and $\vu$ type neurons of $\mW_{*}$, respectively). Therefore the Clarke subdifferential coincides with the gradient in our domain, and we can derive it using calculus rules.


By looking at the gradient of the margin for any point $(y_{i},\vx_{i})$:
\begin{itemize}
    \item $\displaystyle \frac{\partial q_{i}(\mW)}{\partial \vw^{(j)}} =  \frac{y_{i}\partial N_{\mW}(\vx_{i})}{\partial \vw^{(j)}} = y_{i}        v\vx_{i}\sigma'({\vw^{(j)}} \cdot \vx_{i}) = y_{i}v \vx_{i}\sigma'({\vw^{(j)}} \cdot \vx_{i})$
    \item $\displaystyle \frac{\partial q_{i}(\mW)}{\partial \vu^{(j)}} =  \frac{y_{i}\partial N_{\mW}(\vx_{i})}{\partial \vu^{(j)}} = -y_{i} v \vx_{i}\sigma'({\vu^{(j)}} \cdot \vx_{i}) = -y_{i}v \vx_{i}\sigma'({\vu^{(j)}} \cdot \vx_{i})$    
\end{itemize}

Now using the above gradients implies that:
$\partial q_{i}(\mW) = y_{i} v \vx_{i}(\overbrace{\sigma'({\vw^{(1)}} \cdot \vx_{i}),\dots,\sigma'({\vw^{(k)}} \cdot \vx_{i})}^{k},\overbrace{-\sigma'({\vu^{(1)}} \cdot \vx_{i}),\dots,-\sigma'({\vu^{(k)}} \cdot \vx_{i})}^{k})$

By the definition of the NAR $\mathcal{N}$ with parameters $(\beta,c^{\vw}_{i},c^{\vu}_{i})$ the dot product of a point $\vx_{i}$ with all neurons of the same type is of the same sign, i.e.:
$$\forall i \in [n] , \forall 1\leq l,p \leq k: \sigma'({\vw^{(l)}} \cdot \vx_{i}) = \sigma'({\vw^{(p)}} \cdot \vx_{i}) = c^{\vw}_{i}$$
and
$$\forall i \in [n] , \forall 1\leq l,p \leq k: \sigma'({\vu^{(l)}} \cdot \vx_{i}) = \sigma'({\vu^{(p)}} \cdot \vx_{i}) = c^{\vu}_{i}$$

It follows that for $\mW \in \mathcal{N}$, $\partial q_{i}(\mW) = y_{i}\cdot v\cdot \vx_{i}(\overbrace{c^{\vw}_{i},\dots,c^{\vw}_{i}}^{k},\overbrace{-c^{\vu}_{i},\dots,-c^{\vu}_{i}}^{k})$.

Therefore, by the definition of a KKT point we have:
\begin{equation*}
    \hat{\mW}_{*} =  \frac{1}{\beta}\left(\overbrace{ \sum\limits_{i=1}^{n} \lambda_{i} y_{i} v \vx_{i} c^{\vw}_{i},\dots,\sum\limits_{i=1}^{n} \lambda_{i} y_{i}\ v \vx_{i} c^{\vw}_{i}}^{k},\overbrace{-\sum\limits_{i=1}^{n} \lambda_{i} y_{i} v \vx_{i} c^{\vu}_{i},\dots,-\sum\limits_{i=1}^{n} \lambda_{i} y_{i} v \vx_{i} c^{\vu}_{i}}^{k}\right) \in \R^{2kd}
\end{equation*}

We can see that the first $k$ entries are equal, as well as the next $k$ entries (equal to each other and not to the first $k$ entries).

Therefore the normalized parameters flow $\hat{\mW}_{t}$ converges to a perfectly clustered solution.
\subsection{Proof Of Corollary 6.2.}
 By \thmref{thm:Neural Alignment}, we know the normalized parameters $\hat{\mW_{t}}$ are perfectly clustered at convergence so by \corref{cor:perfectly_clustered_linear} we get that the decision boundary of $N_{\hat{\mW}}$ is linear at convergence. From the homogeneity of the network we have $N_{\mW}(\vx)= ||\mW||N_{\hat{\mW}}(\vx)$ for any $\mW \in \R^{2kd}$ and because the norm is a non negative scalar we get $\sgn{N_{\mW}(\vx)} = \sgn{N_{\hat{\mW}}(\vx)}$, i.e., $N_{\mW}$ and $N_{\hat{\mW}}$ are the same classifiers. Therefore, this implies that the decision boundary of $N_{\mW}$ is linear at convergence.\footnote{We use $\sgn\infty = 1$ and $\sgn{-\infty} = -1$, since the norm $||\mW||$ diverges.}

\section{Proof of Theorem \ref{thm:multi neuron PAR}}
We divide the proof of \thmref{thm:multi neuron PAR} into two parts. First, we show that the NAR is a PAR, and then we show that if a network enters and remains in the PAR the network weights at convergence are proportional to the solutions of the SVM problem we defined in the main text.
\subsection{The NAR is a PAR}
\label{Appendix:proof of NAR to PAR alignment}



In this subsection we will prove the NAR is in fact a PAR under the conditions of the theorem. In the first step we show that for all $\vw^{(i)}$'s, $\left(\frac{{\vw^{(i)}}}{||\vw^{(i)}||}\right) \cdot \vx_{+} \geq \beta$ for all positive $\vx_{+} \in \sS_{+}$ and times $t \geq T_{Margin}$.
Assume by contradiction that the latter does not hold. Thus, by assumption 2 the network is in a NAR$(\beta)$ and there exists a positive $\displaystyle \vx_{+} \in \sS_{+}$ such that $\left(\frac{{\vw^{(i)}}}{||\vw^{(i)}||}\right) \cdot \vx_{+} \leq -\beta$ for all $\displaystyle \vw^{(i)}$. Denote by $\overline{\gamma}_{t,\{\vx\}}$ the margin of the network at time $t \geq T_{Margin}$ on the point $\vx$. we notice that $\overline{\gamma}_{t}\leq \overline{\gamma}_{t,\{\vx\}}$ by definition. Then:
\begin{align}
    \tilde{\gamma}_{t} &\leq \overline{\gamma}_{t} \leq \overline{\gamma}_{t,\{\vx_{+}\}} = \frac{+1\cdot N_{\mW}(\vx_{+})}{\left\| \overrightarrow{\mW} \right\|} = \frac{v\left(\sum\limits_{i=1}^{k} \sigma\left({\vw^{(i)}_{t}} \cdot \vx_{+}\right) - \sum\limits_{i=1}^{k} \sigma\left({\vu^{(i)}_{t}} \cdot\vx_{+}\right)\right)}{\sqrt{\sum\limits_{i=1}^{k} \left|\left| \vw^{(i)}_{t} \right|\right|^{2} + \left|\left| \vu^{(i)}_{t} \right|\right|^{2}}} \label{eq:smoothed margin is a margin approximation}\\& \leq \frac{v\left(\sum\limits_{i=1}^{k} \sigma\left({\vw^{(i)}_{t}} \cdot\vx_{+}\right) - \sum\limits_{i=1}^{k} \sigma\left({\vu^{(i)}_{t}} \cdot\vx_{+}\right)\right)}{\sqrt{\sum\limits_{i=1}^{k} \left|\left| \vu^{(i)}_{t} \right|\right|^{2}}} \leq
    \frac{v\left(- \sum\limits_{i=1}^{k} \sigma\left({\vu^{(i)}_{t}} \cdot\vx_{+}\right)\right)}{\sqrt{\sum\limits_{i=1}^{k} \left|\left| \vu^{(i)}_{t} \right|\right|^{2}}}   \leq \frac{v\alpha \left(\sum\limits_{i=1}^{k} \left | \vu^{(i)}_{t} \cdot\vx_{+}\right| \right)}{\sqrt{\sum\limits_{i=1}^{k} \left|\left| \vu^{(i)}_{t} \right|\right|^{2}}} \label{eq:u type neurons dot with positive points} \\  & \leq   \frac{v\alpha \left(\sum\limits_{i=1}^{k} \left|\left|\vu^{(i)}_{t}\right|\right|\cdot ||\vx_{+}||\right)}{\sqrt{\sum\limits_{i=1}^{k} \left|\left| \vu^{(i)}_{t} \right|\right|^{2}}} \leq \frac{v\alpha \left(\sum\limits_{i=1}^{k} \left|\left|\vu^{(i)}_{t}\right|\right|\right)\cdot \underset{i \in [n]}{max}||\vx_{i}||}{\sqrt{\sum\limits_{i=1}^{k} \left|\left| \vu^{(i)}_{t} \right|\right|^{2}}} \nonumber \\ &  = \frac{v\cdot \alpha \cdot \left|\left|\left(\overbrace{\left|\left|\vu^{(1)}_{t}\right|\right|,\dots, \left|\left|\vu^{(k)}_{t}\right|\right|}^{k}\right)\right|\right|_{1}\cdot \underset{i \in [n]}{max}||\vx_{i}||}{\sqrt{\sum\limits_{i=1}^{k} \left|\left| \vu^{(i)}_{t} \right|\right|^{2}}} \nonumber
\end{align}

where the first inequality follows by \citet{lyu2019gradient} (Theorem A.7. therein). In \eqref{eq:u type neurons dot with positive points} we noticed that $- \sum\limits_{i=1}^{k} \sigma\left({\vu^{(i)}_{t}} \cdot\vx_{+}\right)$ is largest when $\forall 1 \leq i \leq k \quad {\vu^{(i)}_{t}} \cdot\vx_{+} <0$ and therefore $\sigma\left({\vu^{(i)}_{t}} \cdot\vx_{+}\right) = \alpha {\vu^{(i)}_{t}} \cdot\vx_{+}$. Therefore, by the inequality $\forall \vv \in \R^{k} \quad \displaystyle || \vv ||_{1} \leq \sqrt{k}\cdot || \vv||_{2}$, we have:

\begin{align}
    &\tilde{\gamma}_{t} \leq \frac{v\cdot \alpha \cdot \sqrt{k} \left|\left|\left(\overbrace{\left|\left|\vu^{(1)}_{t}\right|\right|,\dots, \left|\left|\vu^{(k)}_{t}\right|\right|}^{k}\right)\right|\right|_{2}\cdot \underset{i \in [n]}{max}||\vx_{i}||}{\sqrt{\sum\limits_{i=1}^{k} \left|\left| \vu^{(i)}_{t} \right|\right|^{2}}}  = \sqrt{k} \cdot \alpha \cdot  v \cdot \underset{i \in [n]}{max} || \vx_{i} || \label{eq:vector norm 1 and 2 inequality}
\end{align}

Now under assumption 3 there exists a time $T_{Margin} \geq T_{NAR}$ such that $\tilde{\gamma}_{_{T_{Margin}}} > \sqrt{k}\alpha v\cdot \max\limits_{i \in [n]} ||\vx_{i}||$. 
By \citet{lyu2019gradient} (Theorem A.7. therein) the smoothed margin $\tilde{\gamma}_{t}$ is a non-decreasing function and we will get that $\forall t \geq T_{Margin}: \ \tilde{\gamma}_{t} >\sqrt{k}\alpha v\cdot \max\limits_{i \in [n]} ||\vx_{i}||$ which is a contradiction to \eqref{eq:vector norm 1 and 2 inequality}. Hence, $\forall 1 \leq i \leq k \land \vx \in \sS_{+}: \ \left(\frac{{\vw^{(i)}_{t}} }{|| \vw^{(i)}_{t} ||}\right) \cdot \vx \geq \beta$.

In a similar fashion, assume there is some $\vx_{-} \in \sS_{-}$ such that for $\displaystyle \left(\frac{{\vu^{(l)}_{t}} }{|| \vu^{(l)}_{t} ||}\right) \cdot \vx_{-} \geq \beta$ doesn't hold. Then by assumption 1 the network is in a NAR,  $\displaystyle \left(\frac{{\vu^{(l)}_{t}} }{|| \vu^{(l)}_{t} ||}\right) \cdot \vx_{-} \leq -\beta$ and by symmetry again we get:
\begin{align*}
\displaystyle
    \tilde{\gamma}_{t} &\leq \overline{\gamma}_{t} \leq \overline{\gamma}_{t,\{\vx_{-}\}} = \frac{-1\cdot N_{\mW}(\vx_{-})}{|| \overrightarrow{\mW}_{t} ||} = \frac{-v\left(\sum\limits_{i=1}^{k} \sigma\left({\vw^{(i)}_{t}} \cdot \vx_{-}\right) - \sum\limits_{i=1}^{k} \sigma\left({\vu^{(i)}_{t}} \cdot \vx_{-}\right)\right)}{\sqrt{\sum\limits_{i=1}^{k} \left|\left| \vw^{(i)}_{t} \right|\right|^{2} + \left|\left| \vu^{(i)}_{t} \right|\right|^{2}}} \\& \dots \leq  \sqrt{k} \alpha v \cdot \underset{i \in [n]}{\max} || \vx_{i} ||
\end{align*}
By \citet{lyu2019gradient} (Theorem A.7. therein) we reach a contradiction to the network margin assumption again, so
$\\ \forall \vx \in \sS_{-}: \quad \left(\frac{{\vu^{(l)}_{t}}}{\left|\left| \vu^{(l)}_{t} \right|\right|}\right) \cdot \vx \geq \beta$.

To conclude, we have proven so far for all $t>T_{Margin}$:
\begin{enumerate}
\label{Appendix:NAR to PAR points conditions proof}

    \item $\forall 1 \leq i \leq k: ,\quad\forall \vx \in \sS_{+}: \quad \left(\frac{{\vw^{(i)}_{t}} }{|| \vw^{(i)}_{t} ||}\right) \cdot \vx \geq \beta$.
    \item $\forall  1 \leq i \leq k: ,\quad \forall \vx \in \sS_{-}: \quad \left(\frac{{\vu^{(l)}_{t}}}{\left|\left| \vu^{(l)}_{t} \right|\right|}\right) \cdot \vx \geq \beta$.
\end{enumerate}
Now, by assumption 4, $\forall \vx \in \sS_{-}: \quad \left(\frac{{\vw^{(i)}_{t}} }{|| \vw^{(i)}_{t} ||}\right) \cdot \vx \not \geq \beta$ and similarly $\forall \vx \in \sS_{+}: \quad \left(\frac{{\vu^{(i)}_{t}} }{|| \vu^{(i)}_{t} ||}\right) \cdot \vx \not \geq \beta$. This follows since otherwise $\sV^{+}_{\beta}(\sS)$ and $\sV^{-}_{\beta}(\sS)$ would not be empty in contradiction to assumption 4.

Next, under the network being in an NAR assumption we have for all $t > T_{Margin}$:
\begin{enumerate}
\label{Appendix:opposite NAR to PAR points conditions proof}
    \item $\forall \vx \in \sS_{-}: \quad \left(\frac{{\vw^{(i)}_{t}} }{|| \vw^{(i)}_{t} ||}\right) \cdot \vx \leq -\beta$
    \item $\forall \vx \in \sS_{+}: \quad \left(\frac{{\vu^{(i)}_{t}} }{|| \vu^{(i)}_{t} ||}\right) \cdot \vx \leq -\beta$
\end{enumerate}
Thus, for all $t>T_{Margin}$, the network is in PAR$(\beta)$.

\subsection{PAR alignment direction}
\label{Appendix:proof of PCR alignment}



Now we will find where the parameters converge to when the network is in the PAR($\beta$). By \thmref{thm:Neural Alignment}, the normalized gradient flow converges to a perfectly clustered solution, i.e., $\underset{t \rightarrow \infty}{\lim} \hat{\mW}_{t} \coloneqq \hat{\mW}_{*}$ is of a perfectly clustered form. Formally that means $\exists \beta$ and $\exists \delta$ such that the normalized parameters $\hat{\mW}$ are of the form $\hat{\mW}_{*} = (\beta \tilde{\vw},\dots,\beta \tilde{\vw},\delta \tilde{\vu},\dots,\delta \tilde{\vu} ) \in \R^{2kd}$ and WLOG we can assume $||\tilde{\vw}|| = ||\tilde{\vu}|| = 1$. 

Because the solution is in the PAR($\beta$), the network margins are given as follows for positive points:
\begin{align*}
\forall \vx_{i} \in \sS_{+}: q_{i}(\mW) = y_{i}N_{\mW}(\vx_{i}) =y_{i}||\mW|| N_{\hat{\mW}}(\vx_{i}) &= v||\mW||\left(\sum \limits_{i=1}^{k} \sigma(\beta \tilde{\vw} \cdot \vx_{i}) - \sigma(\delta \tilde{\vu} \cdot \vx_{i})\right) \\ &= v||\mW||\left(k \beta \tilde{\vw} \cdot \vx_{i} -\alpha k \delta \tilde{\vu} \cdot \vx_{i}\right)    
\end{align*}

and negative points:
\begin{align*}
\forall \vx_{i} \in \sS_{-}: q_{i}(\mW) = y_{i}N_{\mW}(\vx_{i}) =y_{i}||\mW|| N_{\hat{\mW}}(\vx_{i}) &= v||\mW|| \left(\sum \limits_{i=1}^{k} \sigma(\delta \tilde{\vu} \cdot \vx_{i}) - \sigma(\beta \tilde{\vw} \cdot \vx_{i})\right) \\ &= v||\mW||\left(k\delta \tilde{\vu} \cdot \vx_{i} - \alpha k\beta \tilde{\vw} \cdot \vx_{i}\right)  
\end{align*}

where we used the fact we know the normalized solution would has a perfectly clustered form. We denote $\tilde{\beta} \coloneqq ||\mW|| \cdot \beta$ and similarly $\tilde{\delta} \coloneqq ||\mW|| \cdot \delta$

Using the above notations, the max margin problem in  \citet{lyu2019gradient} (Theorem A.8. therein) takes the form:
\begin{gather*}
\label{eq:kkt problem}
    \underset{\tilde{\beta} \in \R, \tilde{\delta} \in \R}{\argmin}\quad  k\tilde{\beta}^{2} + k\tilde{\delta}^2 = \underset{\tilde{\beta} \in \R, \tilde{\delta} \in \R}{\argmin} \quad  v^{2}k^{2}\tilde{\beta}^{2} + v^{2}k^{2}\tilde{\delta}^2 
    \\
    \forall \vx_{+} \in \sS_{+} : vk\tilde{\beta}\tilde{\vw} \cdot \vx_{+} - \alpha v k\tilde{\delta} \tilde{\vu} \cdot \vx_{+} \geq 1
    \\
    \forall \vx_{-} \in \sS_{-} : vk\tilde{\delta} \tilde{\vu} \cdot \vx_{-} - \alpha v k \tilde{\beta} \tilde{\vw} \cdot \vx_{-} \geq 1
\end{gather*}

Now we can denote $\vw \coloneqq v k\tilde{\beta} \tilde{\vw}$ and $\vu \coloneqq v k \tilde{\delta} \tilde{\vu}$ and reach the desired formulation:
\begin{gather*}
    \displaystyle
    \underset{\vw \in \R^{d}, \vu \in \R^{d}}{arg\min} \quad ||\vw ||^{2} + || \vu ||^{2}
    \\
    \forall \vx_{+} \in \sN_{+} : {\vw} \cdot \vx_{+} - \alpha{\vu} \cdot \vx_{+} \geq 1
    \\
    \forall \vx_{-} \in \sN_{-} : {\vu} \cdot \vx_{-} - \alpha {\vw} \cdot \vx_{-} \geq 1
\end{gather*}
We obtained a reformulation of \hyperref[eq:kkt problem]{(P)} as an SVM problem with variables $(\vw,\vu) \in \R^{2d}$ and with a transformed dataset which is a concatenated version of the original data $\phi(\vx) = [\sigma'(\vw^{*} \cdot \vx)\vx,-\sigma'(-\vw^{*} \cdot \vx)\vx]\in \R^{2d}$, where for $\vx_{+} \in \sN_{+}$, $\phi(\vx_{+}) = (\vx_{+},-\alpha\vx_{+})\in \R^{2d}$
and for $\vx_{-} \in \sN_{-}$,   $\phi(\vx_{-})=(-\alpha\vx_{-}, \vx_{-})\in \R^{2d}$.
\begin{figure}[t!]
    \centering
        \includegraphics[scale=0.5]{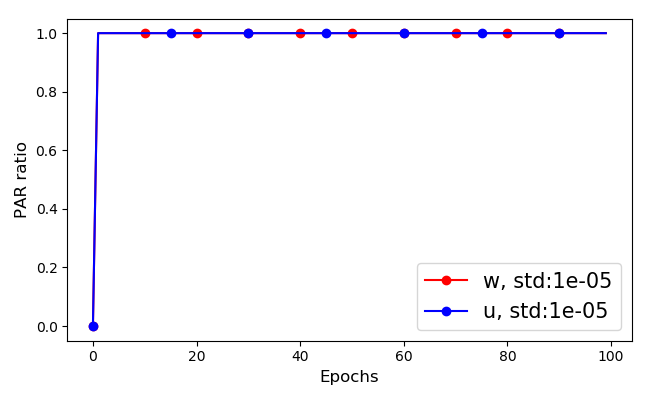}
    \caption[ ]{The ratio of neurons from each type in the PAR throughout the training process. We sample 400 data points from two antipodal separable Gaussians (one for each label) in $\R^{50}$. Our network is of 100 neurons (50 of each type) optimized on the data using SGD with batch size 1 with learning rate $\eta = 10^{-3}$.}
    \label{fig:Neurons in the PAR ratio}
\end{figure}

\section{Proof of Lemma \ref{lem:symmetric}}

Assume $\sV^{+}_{\beta}(\sS) \not = \emptyset$, i.e. $\exists \vv \in \sS $, s.t. $\forall \vx \in \sS_{+} \hat{\vv}\cdot \vx \geq \beta$ and $\exists \vx_{*} \in \sS_{-}$ s.t. $\hat{\vv}\cdot \vx_{*} \geq \beta$. This means that $\hat{\vv}\cdot -\vx_{*} \leq -\beta$, because the data is linearly separable $-\vx_{*} \in \sS$ has to be a positive point and by the definition of $\sV^{+}_{\beta}(\sS)$ that would mean $\hat{\vv}\cdot -\vx_{*} \geq \beta$ in contradiction.

By symmetry, if we assume $\sV^{-}_{\beta}(\sS) \not = \emptyset$ by taking the positive point which $\hat{v} \in \sV^{-}_{\beta}(\sS)$ mistakenly classifies as a negative one, we'll reach a contradiction again.

Therefore if $\forall \vx \in \sS,-\vx \in \sS$ we have $\sV_{\beta}^{+}(\sS) = \emptyset$ and $\sV_{\beta}^{-}(\sS) = \emptyset$ and Assumption 4 in \thmref{thm:multi neuron PAR}. holds in this case.


    
    

\newpage
\section{Entrance to PAR - High Dimensional Gaussians}
We will show that the entrance to the PAR indeed happens empirically for two separable Gaussians. We measure the percentage of neurons which are in the PAR of both types. A $\vw$ type neuron is considered in the PAR if it classifies like the ground truth $\vw^{*}$. A $\vu$ type neuron is considered in the PAR if it classifies like $-\vw^{*}$.

\begin{figure*}[t!]
    \centering
    \begin{subfigure}[b]{0.475\textwidth}  
        \centering 
        \includegraphics[scale=0.5]{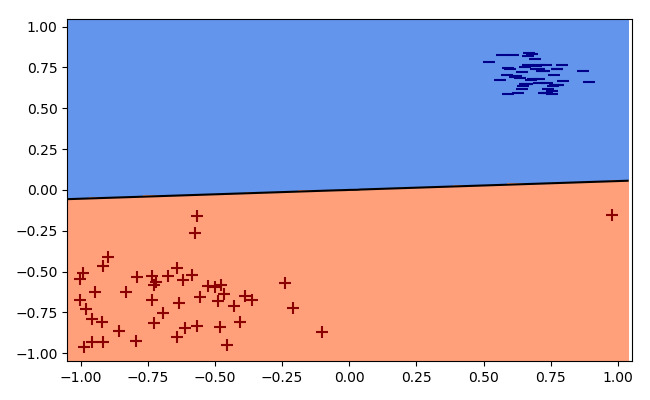}
        \caption[]%
        {$\sgn{N_{\mW}(\vx)}$}    
        \label{fig:NAR entrance - prediction landscape}
    \end{subfigure}
    \begin{subfigure}[b]{0.475\textwidth}  
        \centering 
        \includegraphics[scale=0.5]{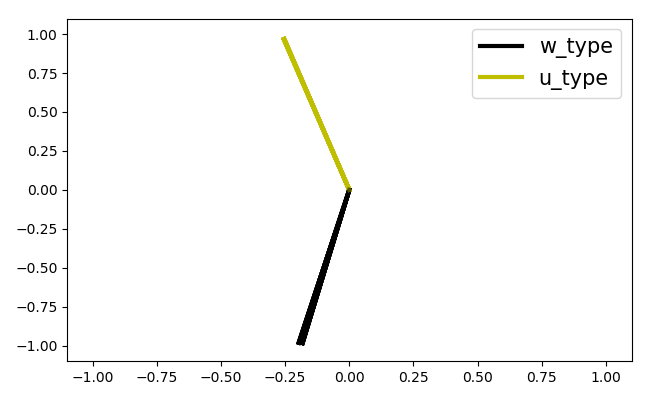}
        \caption[]%
        {Neurons Directions}    
        \label{fig:NAR entrance - neurons}
    \end{subfigure}
    \vspace{5 mm}
    \begin{subfigure}[b]{0.475\textwidth}  
        \centering 
        \includegraphics[scale=0.5]{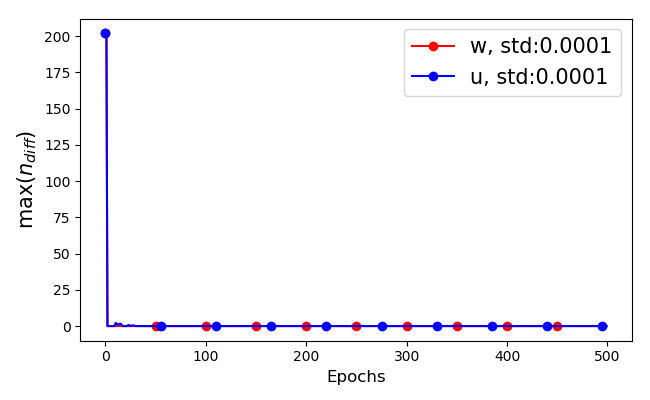}
        \caption[]%
        {Entrance to NAR}    
        \label{fig:NAR entrance - max points distance}
    \end{subfigure}
    \begin{subfigure}[b]{0.475\textwidth}
        \centering
        \includegraphics[scale=0.5]{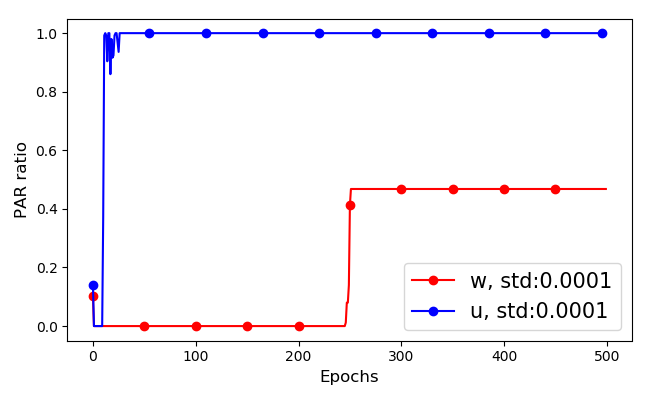}
        \caption[]%
        {Non Entrance to PAR}    
        \label{fig:NAR entrance - PAR ratio}
    \end{subfigure}
    \caption[ ]{
    The entrance to the NAR of a 100 neurons network.The weights initialization std is $10^{-4}$, learning rate is $\eta = 10^{-2}$. Each line in (c) and (d) is averaged over 5 initializations. }
    \label{fig:NAR entrance}
\end{figure*}

The percentage of neurons in the PAR throughout the training process is given in \figref{fig:Neurons in the PAR ratio}. We can see that the network enters the PAR.

\section{Entrance to NAR which is not a PAR}
In this section we show that learning can enter an NAR which is not a PAR. We sample two antipodal Gaussians and add one outlier positive point. Then for each neuron type ($\vw$ or $\vu$) we measure the maximum amount of data points classification disagreements between neurons of the same type denoted $\max(n_{diff})$ and the percentage of neurons which are in the PAR.

In \figref{fig:NAR entrance - prediction landscape} we can see that the network yields $100\%$ prediction accuracy. In \figref{fig:NAR entrance - neurons} we can see the directions of the neurons ($\vw$ type in black and $\vu$ type in yellow). In \figref{fig:NAR entrance - max points distance} we can see that the maximal number of points which neurons of the same type classified differently goes to zero, therefore all neurons of the same type agree on the classification of the data points. In \figref{fig:NAR entrance - PAR ratio} we can see that the ratio of $\vw$ type neurons which perfectly classifies the data does not increase to $1$ so the network does not enter the PAR. 
\section{Extension - First Layer Bias Term}
\label{first layer bias term}
In order to extend our results to include a bias term in the first layer, we would just need to reformulate our data points $\sS$ to $\sS'$ by $$(\vx ,y) \in \sS \subseteq \R^{d} \times \sY \mapsto ((\vx,1),y) \in \sS' \subseteq \R^{d+1} \times \sY$$
and extend our neurons to include a bias term: $$\forall \quad 1\leq i \leq k \quad \vw_{t}^{(i)} \in \R^{d} \mapsto (\vw_{t}^{(i)},b_{w}^{(i)}) \in \R^{d+1}, \quad \vu_{t}^{(i)} \in \R^{d} \mapsto (\vu_{t}^{(i)},b_{u}^{(i)}) \in \R^{d+1}$$
This is equivalent to reformulating the first weights matrix $\mW \in \R^{2k \times d} \mapsto \mW' \in \R^{2k \times (d+1)}$.

This reformulation is equivalent to adding a bias term for every neuron in the first layer, and all of the following results would still hold under the above reformulation.

The proofs of \thmref{thm:Risk Convergence} and \thmref{thm:difference between clustered leaky ReLus} follow exactly if we exchange $\mW$ with $\mW'$ while for the proofs of \thmref{thm:Neural Alignment} and \thmref{thm:multi neuron PAR} we use results from \cite{lyu2019gradient} and \cite{ji2020directional} that require the model to be homogeneous. Note that if we add a bias in the \textit{first} layer, the model remains homogeneous and the proofs of \thmref{thm:Neural Alignment} and \thmref{thm:multi neuron PAR} still hold for those cases as well.




\end{document}